\setlist{leftmargin=2mm,itemsep=0mm,topsep=0mm,parsep=0mm,partopsep=0mm}
\theoremstyle{plain}
\newtheorem{theorem}{Theorem}[section]
\newtheorem{lemma}[theorem]{Lemma}
\newtheorem{condition}[theorem]{Condition}
\theoremstyle{definition}
\theoremstyle{remark}
\newcommand{\N}{\texttt{N}}
\newcommand{\Y}{\texttt{Y}}
\newcommand{\Z}{\texttt{Z}}
\newcommand{\W}{\texttt{W}}
\newcommand{\binned}{\texttt{bin}}
\newcommand{\num}{\texttt{num}}
\newcommand{\denom}{\texttt{den}}
\newcommand{\Tr}{\texttt{Tr}}
\newcommand{\Tt}{\texttt{Ev}}
\newcommand{\E}{\mathbb{E}}
\newcommand{\h}[1]{\hat{#1}}
\newcommand{\fracdzero}{\frac{\mathbbm{1}\{d=0\}}{p(d=0)}}
\newcommand{\fracdone}{\frac{\mathbbm{1}\{d=1\}}{p(d=1)}}
\icmltitlerunning{Explaining performance gaps}
\begin{document}

\twocolumn[
\icmltitle{A hierarchical decomposition for explaining ML performance discrepancies}

\icmlsetsymbol{equal}{*}

\begin{icmlauthorlist}
\icmlauthor{Jean Feng}{yyy}
\icmlauthor{Harvineet Singh}{yyy}
\icmlauthor{Fan Xia}{yyy}
\icmlauthor{Adarsh Subbaswamy}{comp}
\icmlauthor{Alexej Gossmann}{comp}
\end{icmlauthorlist}

\icmlaffiliation{yyy}{University of California, San Francisco, USA}
\icmlaffiliation{comp}{U.S. Food and Drug Administration, Center for Devices and Radiological Health}

\icmlcorrespondingauthor{Jean Feng}{jean.feng@ucsf.edu}

\icmlkeywords{Distribution shift, Feature attribution, Explainability, Model generalizability, Nonparametric inference, Debiased Machine Learning}

\vskip 0.3in
]

\printAffiliationsAndNotice{}

\begin{abstract}
Machine learning (ML) algorithms can often differ in performance across domains.
Understanding \textit{why} their performance differs is crucial for determining what types of interventions (e.g., algorithmic or operational) are most effective at closing the performance gaps.
Existing methods focus on \textit{aggregate decompositions} of the total performance gap into the impact of a shift in the distribution of features $p(X)$ versus the impact of a shift in the conditional distribution of the outcome $p(Y|X)$; however, such coarse explanations offer only a few options for how one can close the performance gap.
\textit{Detailed variable-level decompositions} that quantify the importance of each variable to each term in the aggregate decomposition can provide a much deeper understanding and suggest much more targeted interventions.
However, existing methods assume knowledge of the full causal graph or make strong parametric assumptions.
We introduce a nonparametric hierarchical framework that provides both aggregate and detailed decompositions for explaining why the performance of an ML algorithm differs across domains, without requiring causal knowledge.
We derive debiased, computationally-efficient estimators, and statistical inference procedures for asymptotically valid confidence intervals.
\end{abstract}

\vspace{-0.5cm}
\section{Introduction}
\label{sec:intro}

The performance of an ML algorithm can differ across domains due to shifts in the data distribution.
To understand what contributed to this performance gap, prior works have suggested decomposing the gap into that due to a shift in the marginal distribution of the input features $p(X)$ versus that due to a shift in the conditional distribution of the outcome $p(Y|X)$ \citep{Cai2023-ov, zhang2023why, Liu2023-gs, Qiu2023-bn, firpo2018decomposing}.
Although such \textit{aggregate} decompositions can be helpful, more \textit{detailed} explanations that quantify the importance of each variable to each term in this decomposition can provide deeper insight to ML teams trying to understand and close the performance gap.
However, there is currently no principled framework that provides both \textit{aggregate} and \textit{detailed} decompositions for explaining performance gaps of ML algorithms.

As a motivating example, suppose an ML deployment team has an algorithm that predicts the risk of patients being readmitted to the hospital given data from the Electronic Health Records (EHR), e.g. demographic variables and diagnosis codes.
The algorithm was trained for a general patient population.
The team intends to deploy it for heart failure (HF) patients and observes a large performance drop (e.g. in accuracy) in this subgroup.
An \textit{aggregate} decomposition of the performance drop into the marginal versus conditional components ($p(X)$ and $p(Y|X)$) provides only a high-level understanding of the underlying cause and limited suggestions on how the model can be fixed.
A small conditional term in the aggregate decomposition but a large marginal term is typically addressed by retraining the model on a reweighted version of the original data that matches the target population \citep{quionero2009dataset}; otherwise, the standard suggestion is to collect more data from the target population to recalibrate/fine-tune the algorithm \citep{Steyerberg2009-ze}.
A \textit{detailed} decomposition would help the deployment team conduct a root cause analysis and assess the utility of targeted variable-specific interventions.
For instance, if the performance gap is primarily driven by a marginal shift in a few diagnoses, the team can investigate why the rates of these diagnoses differ between general and HF patients.
The team may find that certain diagnoses differ due to variations in patient case mix, which may be better addressed through model retraining, whereas others are due to variations in documentation practices, which may be better addressed through operational interventions.

Existing approaches for providing a detailed understanding of why the performance of an ML algorithm differs across domains fall into two general categories.
One category, which is also the most common in the applied literature, is to quantify how much the data distribution shifted \citep{Liu2023-gs,budhathoki2021distribution,kulinski2020detection,rabanser2019failing}, e.g. one can compare how the distribution of each input variable differs across domains \citep{cummings2023external}.
However, given the complex interactions and non-linearities in (black-box) ML algorithms, it is difficult to quantify how exactly such shifts contribute to changes in performance.
For instance, a large shift with respect to a given variable does not necessarily translate to a large shift in model performance, as that variable may have low importance in the algorithm or the performance metric may be insensitive to that shift.
Thus the other category of approaches---which is also the focus of this manuscript---is to directly quantify how a shift with respect to a variable (subset) influences model performance.
Existing proposals only provide detailed variable-level breakdowns for either the marginal or conditional terms in the aggregate decomposition, but \textit{no unified framework exists}.
Moreover, existing methods either require knowledge of the causal graph relating all variables \citep{zhang2023why} or strong parametric assumptions \citep{Wu2021-dl,Dodd2003-fb}.

\begin{figure}
    \centering
    \vspace{-0.2cm}
    \includegraphics[width=0.32\textwidth]{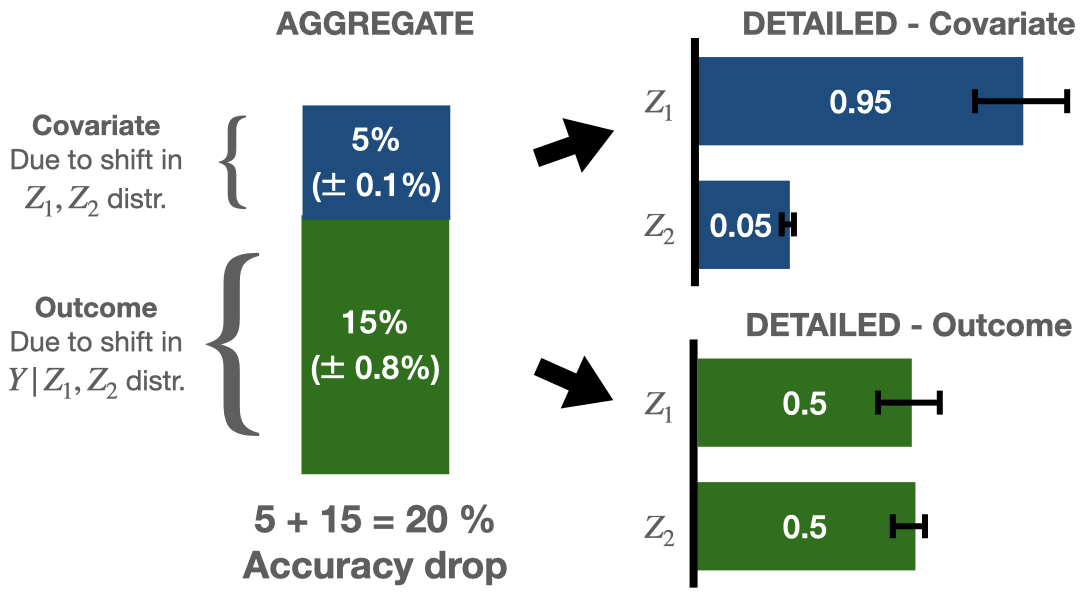}
    \vspace{-0.4cm}
    \caption{\uline{H}ierarchical \uline{D}ecomposition of \uline{P}erformance \uline{D}ifferences (HDPD): \textbf{Aggregate} decomposes a drop in model performance between two domains into that due to shifts in the covariate versus outcome distribution. \textbf{Detailed} quantifies the proportion of variation in accuracy changes explained by partial covariate/outcome shifts with respect to a variable or variable subset.}
    \vspace{-0.7cm}
    \label{fig:overview}
\end{figure}

This work introduces a unified nonparametric framework for explaining the performance gap of an ML algorithm that first decomposes the gap into terms at the aggregate level and then provides detailed variable importance (VI) values for each aggregate term (Fig~\ref{fig:overview}).
Whereas prior works only provide point estimates for the decomposition, we derive debiased, asymptotically linear estimators for the terms in the decomposition which allow for the construction of confidence intervals (CIs) with asymptotically valid coverage rates.
Uncertainty quantification is crucial in this setting, as one often has limited labeled data from the target domain.
The estimation and statistical inference procedure is computationally efficient, despite the exponential number of terms in the Shapley value definition.
We demonstrate the utility of our framework in real-world examples of prediction models for hospital readmission and insurance coverage.

\vspace{-0.4cm}
\section{Prior work}

\textbf{Describing distribution shifts.} This line of work focuses on detecting and localizing which distributions shift between datasets \citep{kulinski2020detection,rabanser2019failing}. \citet{budhathoki2021distribution} identify the main variables contributing  to a distribution shift via a Shapley framework, \citet{kulinski2023explaining} fits interpretable optimal transport maps, and \citet{Liu2023-gs} finds the region with the largest shift in the conditional outcome distribution. However, these works do not quantify how these shifts contribute to \textit{changes in performance}, the metric of practical importance.

\textbf{Explaining loss differences across subpopulations.} Understanding differences in model performance across subpopulations in a single dataset is similar to understanding differences in model performance across datasets, but the focus is typically to find subpopulations with poor performance rather than to explain how distribution shifts contributed to the performance change.
Existing approaches include slice discovery methods \citep{plumb2023towards,jain2023distilling,deon2021spotlight,eyuboglu2022domino} and structured representations of the subpopulation using e.g. Euclidean balls \citep{ali2022lifecycle}. 

\textbf{Attributing performance changes.} Prior works have described similar aggregate decompositions of the performance change into covariate and conditional outcome shift components \citep{Cai2023-ov,Qiu2023-bn}. 
To provide more granular explanations of performance shifts, existing works quantify the importance of shifts in each variable assuming the true causal graph is known \citep{zhang2023why}; covariate shifts restricted to variable subsets assuming partial shifts follow a particular structure \citep{Wu2021-dl}; and conditional shifts in each variable assuming a parametric model \citep{Dodd2003-fb}.
However, the strong assumptions made by these methods make them difficult to apply in practice, and model misspecification can lead to unintuitive interpretations.
In addition, there is no unifying framework for decomposing both covariate and outcome shifts, and many methods do not output CIs, which is important when the amount of labeled data from a given domain is limited.
A summary of how the proposed framework compares against prior works is shown in Table~\ref{tab:comparison} of the Appendix.

\textbf{Decomposition methods in econometrics.}
Explaining performance differences is similar to the long-studied problem of explaining income and health disparities between groups in econometrics.
There, researchers regularly use frameworks such as the Oaxaca-Blinder decomposition \citep{oaxaca1973differentials,blinder1973wage,fortin2011decomposition}, which decomposes disparities into components quantifying the impact of covariate shifts and outcome shifts.
These frameworks commonly decompose the components further to describe the importance of each variable \citep{oaxaca1973differentials,kirby2006health}.
Existing methods typically rely on strong parametric assumptions \citep{fairlie2005extension,yun2004moment,firpo2018decomposing}, which is inappropriate for the complex data settings in ML.

In summary, the distinguishing contribution of this work is that it \textit{unifies} aggregate and detailed decompositions under a \textit{nonparametric} framework with \textit{uncertainty quantification}.

\vspace{-0.3cm}
\section{A hierarchical explanation framework}
Here we introduce how the aggregate and detailed decompositions are defined for explaining the performance gap of a risk prediction algorithm $f: \mathcal{X} \subseteq \mathbb{R}^{m} \to \mathcal{Y}$ for binary outcomes across source and target domains, denoted by $D = 0$ and $D=1$, respectively.
Let the performance of $f$ be quantified in terms of a loss function $\ell: \mathcal{X} \times \mathcal{Y} \to \mathbb{R}$, such as the 0-1 misclassification loss $\mathbbm{1}\{f(X) \ne Y\}$.
Suppose variables $X$ can be partitioned into $W \in \mathbb{R}^{m_1}$ and $Z = X\setminus W  \in \mathbb{R}^{m_2}$, where $m = m_1 + m_2$.
This partitioning allows for a factorization of the data distribution $p_D$ in the source domain $D=0$ and target domain $D=1$ into
\vspace{-0.14cm}
\begin{align}
    p_D(W)p_D(Z|W)p_D(Y|W,Z)
    \label{eq:factor}
\end{align}
(see Fig~\ref{fig:dag} top left).
As such, we refer to $W$ as baseline variables and $Z$ as conditional covariates.
For the readmission example, $W$ may refer to demographics and $Z$ to diagnosis codes.
The total change in performance of $f$ can thus be written as
\smash{$
    \Lambda =
    \mathbb{E}_{111}\left[\ell(W, Z, Y) \right] - \mathbb{E}_{000}\left[\ell(W, Z, Y) \right],
$}
where  $\mathbb{E}_{D_{\W} D_{\Z} D_{\Y}}$ denotes the expectation over the distribution \smash{$p_{D_{\W}}(W) p_{D_{\Z}}(Z|W) p_{D_{\Y}}(Y|W,Z)$}.
A summary of notation used is provided in Table~\ref{tab:notation} of the Appendix.

\textbf{Aggregate.} At the aggregate level, the framework quantifies how replacing each factor in \eqref{eq:factor} from source to target contributes to the performance gap.
We refer to such replacements as ``aggregate shifts,'' as the shift is not restricted to a particular subset of variables.
This leads to the aggregate decomposition $\Lambda=\Lambda_\W + \Lambda_\Z + \Lambda_\Y$, where $\Lambda_\W$ quantifies the impact of a shift in the baseline distribution $p(W)$ (also known as marginal or covariate shifts \citep{quionero2009dataset}), $\Lambda_\Z$ quantifies the impact of a shift in the conditional covariate distribution $p(Z|W)$, and $\Lambda_{\Y}$ quantifies the impact of a shift in the outcome distribution $p(Y|W,Z)$ (also known as concept shift).
More concretely,
\begin{align*}
    \Lambda_\W &= \mathbb{E}_{100}\left[\ell \right] - \mathbb{E}_{000}\left[\ell \right]\\
    \Lambda_\Z &= \mathbb{E}_{1\cdot\cdot}\big[\underbrace{\mathbb{E}_{\cdot10}\left[\ell \mid W\right] - \mathbb{E}_{\cdot 0 0}\left[\ell\mid W\right]}_{\Delta_{\cdot 10}(W)} \big]\\
    \Lambda_\Y &= \mathbb{E}_{11\cdot}\big[ \underbrace{\mathbb{E}_{\cdot\cdot 1}\left[\ell \mid W,Z \right] - \mathbb{E}_{\cdot\cdot0}\left[\ell\mid W,Z\right]}_{\Delta_{\cdot\cdot 0}(W,Z)} \big].
\end{align*}
Prior works have proposed similar aggregate decompositions \citep{Cai2023-ov,Liu2023-gs,firpo2018decomposing}.

\textbf{Detailed.} At the detailed level, the framework outputs Shapley-based variable attributions that describe how shifts with respect to each variable contribute to each term in the aggregate decomposition.
Based on the VI values, an ML development team can better understand the underlying cause for a performance gap and brainstorm which mixture of operational interventions (e.g. changing data collection) and algorithmic interventions (e.g. retraining the model with respect to the variable(s)) would be most effective at closing the performance gap.
For instance, a variable with high importance to the conditional covariate shift term $\Lambda_{\Z}$ can be due to differences in missingness rates, prevalence, or selection bias; and a variable with high importance to the conditional outcome shift term $\Lambda_{\Y}$ may indicate inherent differences in the conditional distribution (also known as effect modification), differences in measurement error or outcome definitions, or omission of variables predictive of the outcome.
Note that the framework does not output a detailed decomposition of the baseline shift term $\Lambda_{\W}$, for reasons we discuss later.

We leverage the Shapley attribution framework for its axiomatic properties, which result in VI values with intuitive interpretations  \citep{Shapley1953-ld,Charnes1988-mi}.
Recall that for a real-valued value function $v$ defined for all subsets $s \subseteq \{1,\cdots,m\}$, the attribution to element $j$ is defined as the average gain in value due to the inclusion of $j$ to every possible subset, i.e.
\begin{align*}
    \phi_{j} :=& \ \frac{1}{m}\sum_{s \subseteq \{1,\cdots,m\}\setminus j } \binom{m-1}{\lvert s \rvert}^{-1}\{v(s \cup j) - v(s)\}.
\end{align*}
So to define a detailed decomposition, the key question is how to define the value of a \textit{partial} distribution shift only with respect to a variable subset $s$; henceforth we refer to such shifts as \textit{$s$-partial shifts}.
It turns out that the answer is far from straightforward.

\begin{figure}
    \centering
    \vspace{-0.2cm}
    \includegraphics[width=0.4\textwidth]{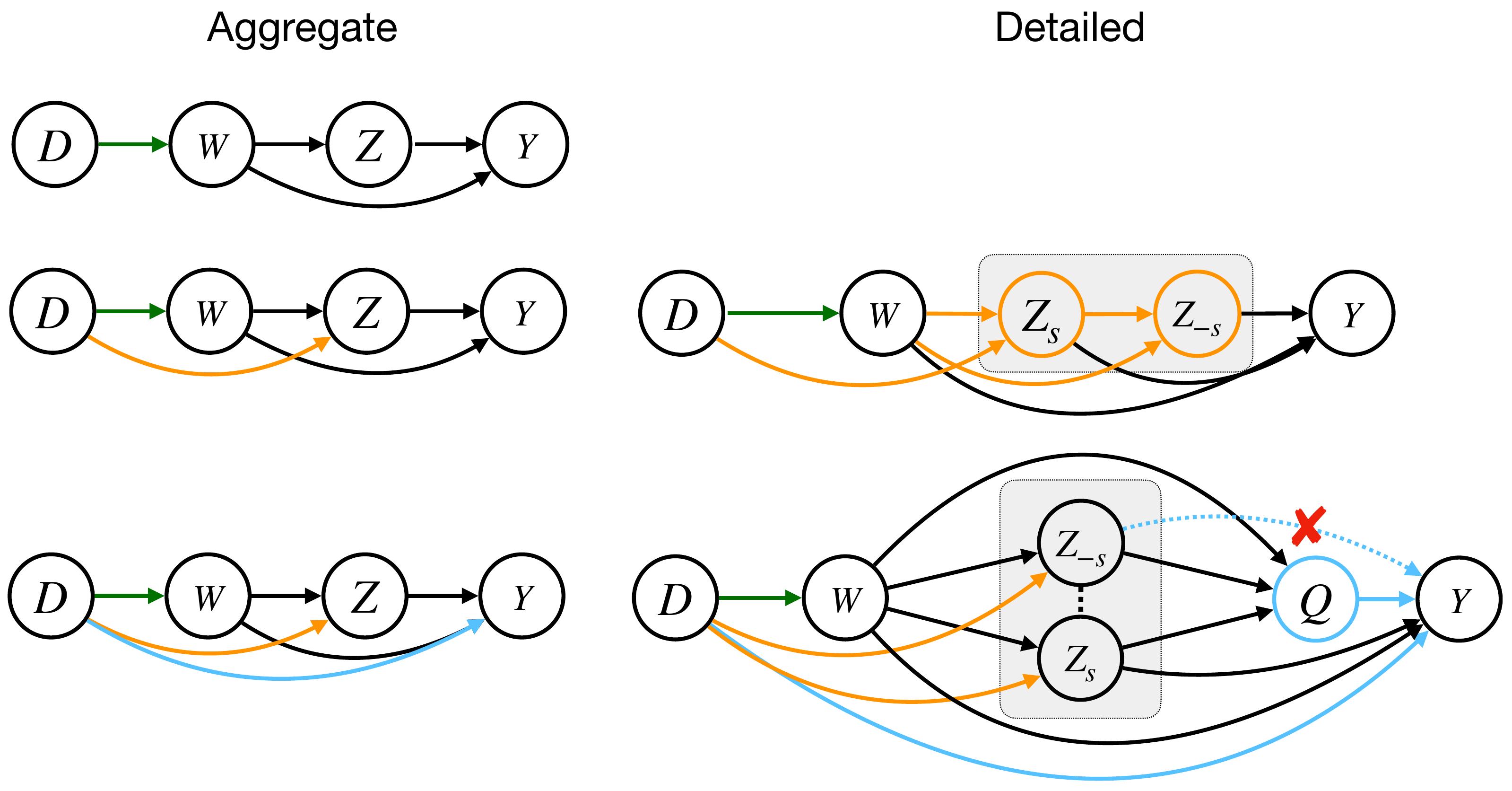}
    \vspace{-0.4cm}
    \caption{
    Decomposition framework for explaining the performance gap from source domain (\smash{$D = 0$}) to target domain (\smash{$D=1$}), visualized through directed acyclic graphs.
    Aggregate decompositions describe the incremental impact of replacing each aggregate variable's distribution at the source with that in the target, indicated by arrows from $D$.
    Detailed decompositions quantify how well hypothesized partial distribution shifts with respect to variable subsets $Z_s$ explain performance gaps.
    This work considers partial outcome shifts that fine-tune the risk in the source domain with respect to $Z_{s}$ (as indicated by the additional node \smash{$Q=p_0(Y=1|W,Z)$}) and partial conditional covariate shifts when \smash{$Z_s \rightarrow Z_{-s}$}.
    }
    \vspace{-0.4cm}
    \label{fig:dag}
\end{figure}

\subsection{Value of partial distribution shifts}

If the true causal graph is known, it would be straightforward to determine how the data distribution shifted with respect to a given variable subset $s$: we would assign the value of subset $s$ as the change in overall performance due to shifts only in $s$ with respect to this graph.
However, in the absence of more detailed causal knowledge, one can only hypothesize the form of partial distribution shifts.
Some proposals define importance as the change in average performance due to hypothesized partial shifts (see e.g. \citet{Wu2021-dl}), but it can lead to unintuitive behavior where a hypothesized distribution shift inconsistent with the true causal graph is attributed high importance.
We illustrate such an example in a simulation in Section~\ref{sec:simulations}.

Instead, our proposal defines the importance of variable subset $s$ as the proportion of \textit{variation} in performance changes across strata explained by its corresponding hypothesized $s$-partial shift, denoted by $p_{s}$. The definition generalizes the traditional $R^2$ measure in statistics, which quantifies how well do covariates explain variation in outcome rather than in performance changes. Prior works on variable importance have leveraged similar definitions \citep{williamson2021nonparametric,hines2023variable}.
For the conditional covariate decomposition, we define the value of $s$ as
\begin{align}
    v_{\Z}(s) &:= 1 - \frac{
    \mathbb{E}_{1\cdot \cdot}\left[\left(\Delta_{\cdot s 0}(W) - \Delta_{\cdot 10}(W) \right)^2\right]
    }{
        \mathbb{E}_{1\cdot\cdot}\left[
            \Delta_{\cdot 10}^2(W)
        \right]
    }.
\end{align}
where
$\tiny
\Delta_{\cdot D_{\Z} 0}(W) = 
\mathbb{E}_{\cdot D_{\Z} 0}\left[\ell | W\right]
    - \mathbb{E}_{\cdot 0 0}\left[\ell | W\right]$
is the performance gap observed at in strata $W$ when $p_0(Z|W)$ is replaced with $p_{D_\Z}(Z|W)$.
(Note that we overload the notation where $D_{\Z}=0$ means source domain, $D_{\Z}=1$ means target domain, and $D_{\Z}=s$ means an $s$-partial shift.)
Setting $\phi_{\Z,0} = v_{\Z}(\emptyset)$, we denote the corresponding Shapley values by $\{\phi_{\Z,j}: j = 0,\cdots, m_2\}$.
Similarly, for the conditional outcome decomposition, we define the importance of $s$ as how well the hypothesized $s$-partial outcome shift explains the variation in performance gaps across strata $(W,Z)$, i.e.
\begin{align*}
\footnotesize
    v_{\Y}(s) &:= 1 - \frac{
    \mathbb{E}_{11\cdot}\left[\left(\Delta_{\cdot \cdot s}(W,Z) - \Delta_{\cdot\cdot 1}(W,Z) \right)^2 \right]
    }{
        \mathbb{E}_{11\cdot}\left[
        \Delta_{\cdot\cdot1}^2(W,Z)
        \right]
    }
\end{align*}
where
$
\Delta(W,Z)_{\cdot\cdot s} = \mathbb{E}_{\cdot\cdot s}\left[\ell | W,Z\right]
    - \mathbb{E}_{\cdot\cdot 0}\left[\ell | W,Z\right].$
Setting $\phi_{\Y,0} = v_{\Y}(\emptyset)$, we denote the corresponding Shapley values by $\{\phi_{\Y,j}: j = 0,\cdots, m_2\}$.
Because this framework defines importance in terms of variance explained, it does not provide a detailed decomposition of the baseline shift.

\subsection{Candidate partial shifts}
\label{sec:candidate_regimes}
The previous section defines a general scoring scheme for quantifying the importance of \textit{any} hypothesized $s$-partial shift.
In this work, we consider partial shifts of the following forms.
We leave other forms of partial shifts to future work.

\textbf{$s$-partial conditional covariate shift}:
We hypothesize that $Z_{-s}$ is downstream of $Z_s$ and define $p_s(z|w)$ as $p_1(z_{s}|w)p_0(z_{-s}|z_s,w)$, as illustrated in Fig~\ref{fig:dag} right top.
A similar proposal was considered in \citet{Wu2021-dl}.
For $s=\emptyset$, note that $v_{\Z}(\emptyset) = 0$.

In the readmission example, such a shift would hypothesize that certain diagnosis codes in $Z_s$ are upstream of the others.
For instance, one could reasonably hypothesize that diagnosis codes that describe family history of disease are upstream of the other diagnosis codes.
Similarly, medical treatments/procedures are likely downstream of a patient's baseline characteristics and comorbidities.

\textbf{$s$-partial conditional outcome shift}: Shifting the conditional distribution of $Y$ only with respect to a variable subset $Z_s$ but not $Z_{-s}$ requires care.
We cannot simply split $Z$ into $Z_{s}$ and $Z_{-s}$, and define the distribution of $Y$ solely as a function of $Z_{s}$ and $W$, because defining $p_s(Y|W,Z):=p(Y|W,Z_s)$ for some $p$ generally implies that $p_s(Y|W,Z) \not \equiv p_0(Y|W,Z)$ even when $p_1(Y|W,Z) \equiv p_0(Y|W,Z)$.

Instead, we define an $s$-partial outcome shift based on models commonly used in model recalibration/revision \citep{Steyerberg2009-ze, Platt1999-lm}, where the modified risk is a function of the risk in the source domain \smash{$Q := q(W,Z) := p_0(Y=1|W,Z)$}, $W$, and $Z_s$.
In particular, we define the shift as
\begin{align}
& p_s(y|z,r,w):= p_1(y|z_{s},r,w)
\label{eq:outcome_det_regime}
\\
&= \int p_1(y|\tilde z_{-s},z_s,w)
p_1(\tilde{z}_{-s}|z_s,w, q(w,z_s,\tilde{z}_{-s})=r)
d\tilde{z}_{-s}.\nonumber
\end{align}
By defining the shifted outcome distribution solely as a function of $Q,W$, and $Z_s$, it encompasses the special scenario where there is no conditional outcome shift and eliminates any direct effect from $Z_{-s}$ to $Y$.
Note that $v_{\Y}(\emptyset)$ may be non-zero when the risk in the target domain is a recalibration (i.e. temperature-scaling) of the risk in the source domain \citep{Platt1999-lm, Guo2017-gy}.
For instance, there may be general environmental factors such that readmission risks in the target domain are uniformly lower.

In the readmission example, a partial outcome distribution shift may be due to, for instance, a hospital protocol where HF patients with given diagnoses $z_s$ (e.g. kidney failure) are scheduled for more frequent follow-ups than just having those diagnoses alone.
Thus readmission risks in the HF population can be viewed as a fine-tuning of readmission risks in the general patient population with respect to $z_s$.

\vspace{-0.2cm}
\section{Estimation and Inference}
\label{sec:theory}
The key estimands in this hierarchical attribution framework are the aggregate terms $\Lambda_\W,\Lambda_\Z$, and $\Lambda_\Y$ and the Shapley-based detailed terms $\phi_{\Z,j}$ and $\phi_{\Y,j}$ for $j = 0,\cdots, m_2$.
We now provide nonparametric estimators for these quantities as well as statistical inference procedures for constructing CIs.
Nearly all prior works for decomposing performance gaps have relied on \textit{plug-in} estimators, which substitute in estimates of the conditional means (also known as outcome models) or density ratios \citep{Sugiyama2007-pk}, which we collectively refer to as nuisance parameters.
For instance, given ML estimators $\hat{\mu}_{\cdot 10}$ and $\hat{\mu}_{\cdot 00}$ for the conditional means $\mu_{\cdot10}(w)=\mathbb{E}_{\cdot 10}[\ell|W]$ and $\mu_{\cdot00}(w)=\mathbb{E}_{\cdot 00}[\ell|W]$, respectively, the empirical mean of $\hat{\mu}_{\cdot 10} - \hat{\mu}_{\cdot 00}$ with respect to the target domain is a plug-in estimator for $\Lambda_\Z=\mathbb{E}_{1\cdot\cdot}[{\mu}_{\cdot 10} - {\mu}_{\cdot 00}]$.
However, because ML estimators typically converge to the true nuisance parameters at a rate slower than $n^{-1/2}$, plug-in estimators generally fail to be consistent at a rate of $n^{-1/2}$ and cannot be used to construct CIs \citep{Kennedy2022-to}.
Nevertheless, using tools from semiparametric inference theory (e.g. one-step correction and Neyman orthogonality), one \textit{can} derive debiased ML estimators that facilitate statistical inference \citep{Chernozhukov2018-dg,Tsiatis2006-ay}.
Here we derive debiased ML estimators for terms in the aggregate and detailed decompositions.
The detailed conditional outcome decomposition is particularly interesting, as its unique structure is not amenable to standard techniques for debiasing ML estimators.

For ease of exposition, theoretical results are presented for split-sample estimators; nonetheless, the results readily extend to cross-fitted estimators under standard convergence criteria \citep{Chernozhukov2018-dg, Kennedy2022-to}.
Let $\{(W_i^{(d)},Z_i^{(d)},Y_i^{(d)}): i = 1,\cdots, n_d\}$ denote $n_d$ independent and identically distributed (IID) observations from the source and target domains $d = 0$ and $1$, respectively.
Let a fixed fraction of the data be partitioned towards ``training'' ($\Tr$) and the remaining to ``evaluation'' ($\Tt$); let $n_{\Tt}$ be the number of observations in the evaluation partition.
Let $\mathbb{P}_d$ denote the expectation with respect to domain $d$ and $\mathbb{P}^{\Tt}_{d,n}$ denote the empirical average over observations in partition $\Tt$ from domain $d$.
All estimators are denoted using hat notation.
Proofs, detailed theoretical results, and psuedocode are provided in the Appendix.

\vspace{-0.2cm}
\subsection{Aggregate decomposition}
\label{sec:agg_est_inf}
The aggregate decomposition terms can be formulated as an average treatment effect, a well-studied estimand in causal inference, where domain $D$ corresponds to treatment.
As such, one can use augmented inverse probability weighting (AIPW) to define debiased ML estimators of the aggregate decomposition terms (e.g. \citep{kang2007demystifying}).
We review estimation and inference for these terms below.

\textbf{Estimation.}
Using the training data, estimate outcome models \smash{$\mu_{\cdot00}(W) = \mathbb{E}_{\cdot00}[\ell|W=w]$} and $\mu_{\cdot\cdot0}(W,Z) = \mathbb{E}[\ell|W,Z]$ and density ratio models $\pi_{100}(W) = p_1(W)/p_0(W)$ and $\pi_{110}(W,Z) = p_1(W,Z)/p_0(W,Z)$.
The debiased ML estimators for $\Lambda_\W, \Lambda_\Z, \Lambda_\Y$ are
\begin{align*}
\hat{\Lambda}_\W = &
\mathbb{P}^{\Tt}_{0,n}\left(\ell - \hat{\mu}_{\cdot00}(W)\right) \hat\pi_{100}(W) + \mathbb{P}^{\Tt}_{1,n} \hat{\mu}_{\cdot00}(W)
- \mathbb{P}^{\Tt}_{0,n} \ell\\
\hat{\Lambda}_\Z = & 
\mathbb{P}^{\Tt}_{0,n} \left(\ell - \hat{\mu}_{\cdot\cdot 0}(W,Z)\right) \hat\pi_{110}(W,Z) + \mathbb{P}^{\Tt}_{1,n} \hat{\mu}_{\cdot\cdot 0}(W,Z)\\
& - \mathbb{P}^{\Tt}_{0,n} \left(\ell - \hat{\mu}_{\cdot 00}(W)\right) \hat\pi_{100}(W) - \mathbb{P}^{\Tt}_{1,n} \hat{\mu}_{\cdot 00}(W)
\\
\hat{\Lambda}_\Y = & \mathbb{P}^{\Tt}_{1,n} \ell - \mathbb{P}^{\Tt}_{0,n} \left(\ell - \hat{\mu}_{\cdot\cdot0}(W,Z)\right) \hat\pi(W,Z) - \mathbb{P}^{\Tt}_{1,n} \hat{\mu}_{0}(W,Z)
\end{align*}

\textbf{Inference.} Assuming the estimators for the outcome and density ratio models converge at a fast enough rate, the AIPW estimators for the aggregate decomposition terms converge at the desired $o_p(n^{-1/2})$ rate.
\begin{theorem}
\label{thm:aggregate}
Suppose $\pi_{100}$ and $\pi_{110}$ are bounded; estimators $\hat{{\mu}}_{\cdot00}$, $\hat \pi_{\cdot \cdot 0}$, $\hat{\pi}_{100}$, and $\hat{\pi}_{110}$ are consistent; and
\begin{align*}
    &\mathbb{P}_0 \left(\hat{{\mu}}_{\cdot00} - {\mu}_{\cdot00}\right)\left(\hat{\pi}_{100} - {\pi}_{100}\right) = o_p(n^{-1/2})\\
    &\mathbb{P}_0 \left(\hat{\mu}_{\cdot\cdot0} - {\mu}_{\cdot\cdot0}\right)\left(\hat{\pi}_{110} - {\pi}_{110}\right) = o_p(n^{-1/2}).
\end{align*}
Then $\hat{\Lambda}_{\W}, \hat{\Lambda}_{\Z},$ and $\hat{\Lambda}_{\Y}$ are asymptotically linear estimators of their respective estimands.
\end{theorem}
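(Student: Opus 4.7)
My strategy is a textbook one-step/AIPW analysis applied to each aggregate piece separately. First, I would rewrite each aggregate term as a difference of counterfactual-mean functionals of the form $\mathbb{E}_{D_\W D_\Z D_\Y}[\ell]$, namely $\Lambda_\W = \mathbb{E}_{100}[\ell] - \mathbb{E}_{000}[\ell]$, $\Lambda_\Z = \mathbb{E}_{110}[\ell] - \mathbb{E}_{100}[\ell]$, and $\Lambda_\Y = \mathbb{E}_{111}[\ell] - \mathbb{E}_{110}[\ell]$. Treating $D\in\{0,1\}$ as an ATE-style treatment variable, the efficient influence functions of $\mathbb{E}_{100}[\ell]$ and $\mathbb{E}_{110}[\ell]$ are the standard AIPW functionals built from $(\mu_{\cdot 00},\pi_{100})$ and $(\mu_{\cdot\cdot 0},\pi_{110})$ respectively, while $\mathbb{E}_{111}[\ell]$ is identified by the target sample mean. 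A quick check confirms that the three displayed estimators are exactly the sample-split one-step plug-ins of these EIFs, with the training-partition nuisance estimates substituted in and empirical averages taken on the evaluation partition.

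Next, I would apply the standard three-way expansion $\hat\Lambda - \Lambda = V_n + E_n + R_n$, where $V_n$ is the empirical process of the \emph{true} EIF, $E_n = (\mathbb{P}^{\Tt}_{n} - \mathbb{P})(\hat\phi - \phi)$ is the drift, and $R_n = \mathbb{P}(\hat\phi - \phi)$ is the second-order population bias. $V_n$ is a centred i.i.d.\ sum over the source and target evaluation samples and, by the CLT, already has the asymptotically linear form with variance $O(n^{-1})$; this yields the influence function representation I am after. By sample splitting, $\hat\phi - \phi$ is deterministic given the training partition, and boundedness of $\pi_{100}, \pi_{110}$ together with $L^2$-consistency of the nuisance estimators forces the conditional variance of $E_n$ to be $o_p(n^{-1})$, so $E_n = o_p(n^{-1/2})$ by conditional Chebyshev (the standard cross-fitting argument, e.g.\ as in Kennedy 2022).

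It remains to show $R_n = o_p(n^{-1/2})$, and this is the crux. For $\hat\Lambda_\W$, using the identity $\mathbb{E}_0[\ell\mid W] = \mu_{\cdot 00}$ and converting $\mathbb{P}_1$-averages of $W$-functions into $\mathbb{P}_0$-averages via $\pi_{100}$ collapses the bias into the single cross product $\mathbb{P}_0[(\hat\pi_{100} - \pi_{100})(\mu_{\cdot 00} - \hat\mu_{\cdot 00})]$, which is $o_p(n^{-1/2})$ by the first rate condition. The analogous $(W,Z)$-level calculation reduces $R_n$ for $\hat\Lambda_\Y$ to $\mathbb{P}_0[(\hat\pi_{110} - \pi_{110})(\mu_{\cdot\cdot 0} - \hat\mu_{\cdot\cdot 0})]$, controlled by the second condition. $\hat\Lambda_\Z$ combines the two AIPW legs (for $\mathbb{E}_{110}[\ell]$ and $\mathbb{E}_{100}[\ell]$), so applying the same reduction leg-by-leg yields the sum of both cross products, again $o_p(n^{-1/2})$ under the two assumed rate conditions. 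Combining the three pieces delivers asymptotic linearity of each aggregate estimator with the corresponding EIF as the influence function. The main obstacle will be the algebraic bookkeeping in $R_n$ for $\hat\Lambda_\Z$: its four AIPW terms must collapse, via the tower property and change-of-measure identities, into exactly the two product-rate remainders posited by the hypothesis; once that cancellation is checked carefully, everything else is routine semiparametric machinery under the stated boundedness and consistency assumptions.
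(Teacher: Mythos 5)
Your proposal is correct and follows essentially the same route as the paper: both reduce each aggregate term to differences of counterfactual means of the form $\mathbb{E}_{D_\W D_\Z D_\Y}[\ell]$, recognize these as ATE-style estimands with $D$ as a (flipped) treatment, and invoke the standard AIPW one-step analysis in which sample splitting kills the empirical-process drift and the second-order remainder collapses to the two assumed cross-product rate conditions. The only difference is presentational — the paper delegates the three-way expansion and remainder bookkeeping to the known ATE result (Kennedy 2022, Example 2), whereas you spell it out explicitly.
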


\subsection{Detailed decomposition}

The Shapley-based detailed decomposition terms $\phi_{\Z,j}$ and $\phi_{\Y,j}$ for $j = 0,\cdots, m_2$ are an additive combination of the value functions that quantify the proportion of variability explained by $s$-partial shifts.
Below, we present novel estimators for values $\{v_\Y(s): s\subseteq \{1,\cdots,m_2\}\}$ for $s$-partial conditional outcome shifts, their theoretical properties, and computationally efficient estimation for their corresponding  Shapley values.
Due to space constraints, results pertaining to partial conditional covariate shifts are provided in Section~\ref{sec:cond_cov_s} of the Appendix.

\subsubsection{Value of $s$-partial outcome shifts}
\label{sec:cond_outcome_s}
Standard recipes for deriving asymptotically linear, nonparametric efficient estimators rely on pathwise differentiability of the estimand and analyzing its efficient influence function (EIF) \citep{Kennedy2022-to}.
However, $v_{\Y}(s)$ is not pathwise differentiable because it is a function of \eqref{eq:outcome_det_regime}, which conditions on the source risk $q(w,z)$ equalling some value $r$.
Taking the pathwise derivative of $v_{\Y}(s)$ requires taking a derivative of the indicator function $\mathbbm{1}\{q(w,z) = r\}$, which generally does not exist.
Given the difficulties in deriving an asymptotically normal estimator for $v_{\Y}(s)$, we propose estimating a close alternative that \textit{is} pathwise differentiable.

The idea is to replace $q$ in \eqref{eq:outcome_det_regime} with its binned variant $q_{\binned}(w,z) = \frac{1}{B} \lfloor q(w,z) B + \frac{1}{2} \rfloor$ for some $B\in \mathbb{Z}^+$, which discretizes outputs from $q$ into $B$ disjoint bins.
As long as $B$ is sufficiently high, the binned version of the estimand, denoted $v_{\Y,\binned}(s)$, is a close approximation to $v_{\Y}(s)$. (We use \smash{$B=20$} in the empirical analyses, which we believe to be sufficient in practice.)
The benefit of this binned variant is that the derivative of the indicator function $\mathbbm{1}\{q_{\binned}(w,z) = r\}$ is zero almost everywhere as long as observations with source risks exactly equal to a bin edge is measure zero.
More formally, we require the following:
\begin{condition}
    Let $\Xi$ be the set of $(W,Z)$ such that $q(W,Z)$ falls precisely on some bin edge but $q(W,Z)$ is not equal to zero or one.
    The set $\Xi$ is measure zero.
    \label{cond:bin_edge}
\end{condition}
Under this condition, $v_{\Y,\binned}(s)$ is pathwise differentiable and, using one-step correction, we derive a debiased ML estimator that has the unique form of a V-statistic (this follows from the integration over ``phantom'' $\tilde{z}_{-s}$ in \eqref{eq:outcome_det_regime}).
We represent V-statistics using the operator ${\mathbb{P}}_{1,n}^{\Tt} \tilde{\mathbb{P}}_{1,n}^{\Tt}$, which takes the average over all pairs of observations $O_i$ from the evaluation partition with replacement, i.e. $\frac{1}{n_{\Tt}^2} \sum_{i=1}^{n_{\Tt}} \sum_{j=1}^{n_{\Tt}} g(O_i, O_j)$ for some function $g$.
Calculation of this estimator and its theoretical properties are as follows.

\textbf{Estimation.} Using the training partition, estimate a density ratio model defined as $\pi(W,Z_s,Z_{-s},Q_\binned)=p_{1}(Z_{-s}|W,Z_s,q_\binned(W,Z)=Q_\binned)/p_{1}(Z_{-s})$ and the $s$-shifted outcome model $\mu_{\cdot \cdot s}(W,Z) = \mathbb{E}_{\cdot \cdot s}[\ell|W,Z]$.
The estimator for $v_{\Y,\binned}(s)$ is $\hat{v}_{\Y,\binned}(s) = \hat{v}_{\Y,n}^\num(s)/\hat{v}_{\Y,n}^{\denom}$ where $\hat{v}_{\Y,n}^\num(s)$ is
{\small
\begin{align}
& \mathbb{P}_{1,n}^{\Tt} \hat{\xi}_s(W,Z)^2 \label{eq:value_num_outcome}\\
    & + 2 \mathbb{P}_{1,n}^{\Tt} \hat{\xi}_s(W,Z) (\ell - \h \mu_{\cdot \cdot 1}(W,Z)) \nonumber\\
    & - 2\mathbb{P}_{1,n}^{\Tt} \tilde{\mathbb{P}}_{1,n}^{\Tt} \hat{\xi}_s(W,Z_s,\tilde{Z}_{-s})
    \ell( W,Z_s,\tilde{Z}_{-s}, Y)
    \h \pi(W,Z_s,\tilde{Z}_{-s}, Q_\binned) \nonumber\\
    & + 2\mathbb{P}_{1,n}^{\Tt} \tilde{\mathbb{P}}_{1,n}^{\Tt} \hat{\xi}_s(W,Z_s,\tilde{Z}_{-s})
    \h \mu_{\cdot \cdot s}(W,Z_s,\tilde{Z}_{-s})
    \h \pi(W,Z_s,\tilde{Z}_{-s}, Q_\binned) \nonumber
\end{align}
}
where $\hat{\xi}_s(W,Z) = \hat{\mu}_{\cdot \cdot 1}(W,Z) - \hat{\mu}_{\cdot \cdot s}(W,Z)$ and $\hat{v}_{\Y,n}^{\denom}$ is
{{\small
\begin{align}
    & \mathbb{P}_{1,n}^{\Tt} \left(\h \mu_{\cdot\cdot 1}(W,Z) - \h \mu_{\cdot\cdot 0}(W,Z)\right)^2 \label{eq:value_denom_outcome}\\
        & + 2 \mathbb{P}_{1,n}^{\Tt} \left(\h \mu_{\cdot\cdot 1}(W,Z) - \h \mu_{\cdot\cdot 0}(W,Z)\right) (\ell - \h \mu_{\cdot\cdot 1}(W,Z)) \nonumber\\
        & - 2 \mathbb{P}_{0,n}^{\Tt} \left(\h \mu_{\cdot\cdot 1}(W,Z) - \h \mu_{\cdot\cdot 0}(W,Z)\right) (\ell - \h \mu_{\cdot\cdot 0}(W,Z)) \h \pi_{110}(W,Z). \nonumber
\end{align}
}

\textbf{Inference.} This estimator is asymptotically linear assuming the nuisance estimators converge at a fast enough rate.
\begin{theorem}
Suppose Condition~\ref{cond:bin_edge} holds.
For variable subset $s$, suppose $\pi(W,Z_s,Z_{-s},Q_{\binned}),\pi_{110}$ are bounded; $v_{\Y}^{\denom}(\emptyset) > 0$; estimator $\hat{\pi}$ is consistent; estimators $\hat \mu_{\cdot \cdot 0}, \hat \mu_{\cdot \cdot 1}$ and $\hat \mu_{\cdot \cdot s}$ converge at an $o_p(n^{-1/4})$ rate, and
    \begin{align}
    \mathbb{P}_1 (\hat{q}_{\binned} - {q}_{\binned})^2 & = o_p(n^{-1}) \label{eq:bin_conv}
    \\
\mathbb{P}_1 (\mu_{\cdot \cdot s} - \hat{\mu}_{\cdot \cdot s})(\pi - \hat{\pi}) &= o_p(n^{-1/2})
        \nonumber\\
\mathbb{P}_0 (\mu_{\cdot \cdot 0} - \hat{\mu}_{\cdot \cdot 0})(\pi_{110} - \h \pi_{110}) &= o_p(n^{-1/2}) \nonumber
\end{align}
    Then the estimator $\hat{v}_{\Y,\binned}(s)$ is asymptotically normal centered at the estimand ${v}_{\Y,\binned}(s)$.
\label{theorem:conditional_outcome_detailed}
\end{theorem}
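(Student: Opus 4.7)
The plan is to (i) establish pathwise differentiability of the binned numerator $v_{\Y,\binned}^{\num}(s)$ and denominator $v_{\Y,\binned}^{\denom}$, (ii) derive their efficient influence functions and verify that the formulas \eqref{eq:value_num_outcome}--\eqref{eq:value_denom_outcome} coincide with the one-step corrected plug-ins, (iii) control the one-step remainder via the standard product-of-errors analysis together with a V-statistic decomposition, and (iv) apply the delta method to the ratio.

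For pathwise differentiability, the delicate object is $\mu_{\cdot\cdot s}$, which depends on $p_s(y|z,r,w)$ through the conditioning event $\{q_{\binned}(W,Z)=r\}$. Writing this expectation as an integral whose integrand contains $\mathbbm{1}\{q_{\binned}(w,z_s,\tilde z_{-s})=r\}$ and invoking Condition~\ref{cond:bin_edge}, the indicator is locally constant as one perturbs the underlying law along regular parametric submodels, so Gateaux derivatives exist and commute with the integration. From this point, deriving the EIF is a computation. The denominator yields the standard AIPW correction for $\mathbb{E}_{11\cdot}[(\mu_{\cdot\cdot 1}-\mu_{\cdot\cdot 0})^2]$ matching \eqref{eq:value_denom_outcome}. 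The numerator produces the $\hat\xi_s^2$ plug-in plus three correction terms, two of which involve the Gateaux derivative of $\mu_{\cdot\cdot s}$ and therefore re-express the underlying integration against $p_s$ as an integration of $\ell$ or $\mu_{\cdot\cdot s}$ against the density ratio $\pi$. Substituting empirical averages for the outer and ``phantom'' inner expectations yields the V-statistic \eqref{eq:value_num_outcome}.

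The one-step remainder decomposes into four pieces. An empirical-process term is $o_p(n^{-1/2})$ because sample splitting ($\Tr/\Tt$) removes dependence between the nuisance estimators and the averaging data, so no Donsker assumption is needed. The degenerate second-order part of the V-statistic is $O_p(n^{-1})$ by a Hoeffding decomposition and is thus negligible. The ``drift'' term decomposes into $\mathbb{P}_1(\hat\mu_{\cdot\cdot s}-\mu_{\cdot\cdot s})(\hat\pi-\pi)$ and $\mathbb{P}_0(\hat\mu_{\cdot\cdot 0}-\mu_{\cdot\cdot 0})(\hat\pi_{110}-\pi_{110})$, both $o_p(n^{-1/2})$ by the assumed product-rate conditions, along with squared remainders such as $\mathbb{P}_1(\hat\mu_{\cdot\cdot 1}-\mu_{\cdot\cdot 1})^2$ which are $o_p(n^{-1/2})$ by the $o_p(n^{-1/4})$ convergence hypothesis. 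Finally, the binning-error term arises from $\hat q_{\binned}\neq q_{\binned}$; since $q_{\binned}$ takes values on a grid of width $1/B$, on the mismatch event $(\hat q_{\binned}-q_{\binned})^2\geq 1/B^2$, so \eqref{eq:bin_conv} yields $\mathbb{P}_1(\hat q_{\binned}\neq q_{\binned})\leq B^2\,\mathbb{P}_1(\hat q_{\binned}-q_{\binned})^2 = o_p(n^{-1})$, and this piece is controlled. The denominator is analyzed analogously without the binning complication. Combining numerator and denominator asymptotic linearity, the ratio is asymptotically normal by the delta method, using $v_{\Y,\binned}^{\denom}>0$ (implied by the $v_{\Y}^{\denom}(\emptyset)>0$ assumption for sufficiently large $B$).

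The main obstacle I anticipate is coupling the V-statistic structure with the binning-error analysis. The outer empirical mean in \eqref{eq:value_num_outcome} treats $Q_{\binned}$ as a realization, while the inner ``phantom'' mean implicitly integrates $\tilde Z_{-s}$ conditional on $q_{\binned}(W,Z_s,\tilde Z_{-s})=Q_{\binned}$, so a mismatch between $\hat q_{\binned}$ and $q_{\binned}$ propagates through both indices simultaneously. Showing that the dominant first-order H-projection of the V-statistic is not corrupted by this double dependence, and that the rate in \eqref{eq:bin_conv} together with the bounded density ratio renders the error uniformly negligible, is the most delicate step and the place where the argument departs from off-the-shelf semiparametric calculations.
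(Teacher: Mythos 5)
Your proposal is correct and follows essentially the same route as the paper's proof: a H\'ajek/Hoeffding projection of the V-statistic numerator, an empirical-process term killed by sample splitting, a drift term controlled by the stated product-rate and $o_p(n^{-1/4})$ conditions, a separate binning-error term controlled by \eqref{eq:bin_conv}, an analogous (simpler) analysis of the denominator, and the delta method for the ratio. Your explicit Markov-inequality argument that \eqref{eq:bin_conv} forces $\mathbb{P}_1(\hat q_{\binned}\neq q_{\binned})=o_p(n^{-1})$ because mismatches on the grid cost at least $1/B^2$ is a detail the paper leaves implicit, and is a welcome addition.
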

Convergence rate of $o_p(n^{-1/2})$ can be achieved by ML estimators in a wide variety of conditions, and such assumptions are commonly used to construct debiased ML estimators.
The additional requirement in \eqref{eq:bin_conv} that $\hat q_{\binned}$ converges at a $o_p(n^{-1})$ rate is new, but fast or even super-fast convergence rates of \textit{binned} risks is achievable under suitable margin conditions \citep{Audibert2007-wx}.

\subsubsection{Shapley values}
\label{sec:shapley_inf}
Calculating the exact Shapley value is computationally intractable as it involves an exponential number of terms.
Nevertheless, \citet{Williamson2020-lx} showed that calculating the exact Shapley value for variable importance measures is not necessary when the importance measures are estimated with uncertainty.
One can instead estimate the Shapley values by sampling variable subsets and inflate the corresponding CIs to reflect the additional uncertainty due to subset sampling.
As long as the number of subsets scales linearly or super-linearly in $n$, one can show that this additional uncertainty will not be larger than that due to estimation of subset values themselves.
Here we follow the same procedure to estimate and construct CIs for the detailed decomposition Shapley values (see Algorithm~\ref{algo:detailed_decomp}).

\begin{figure}[t]
    \centering
    \vspace{-0.2cm}
    \includegraphics[width=0.15\textwidth]{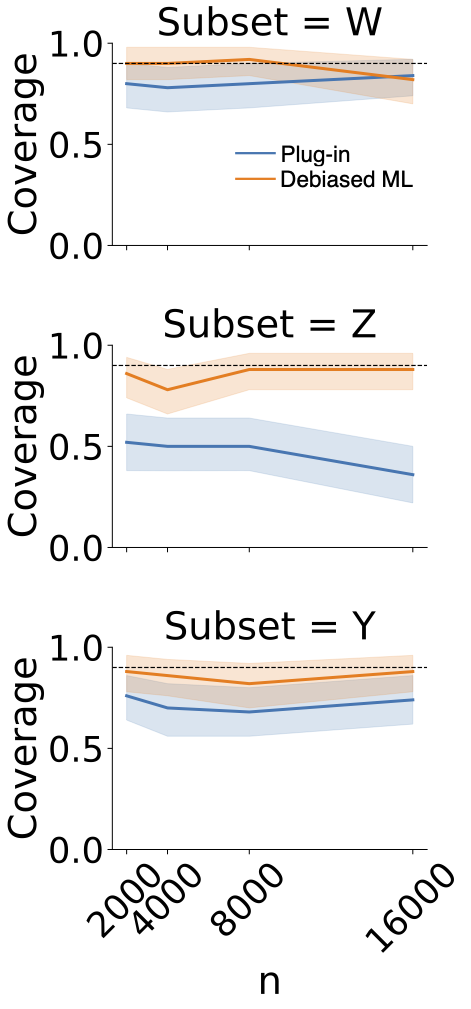}
    \includegraphics[width=0.15\textwidth]{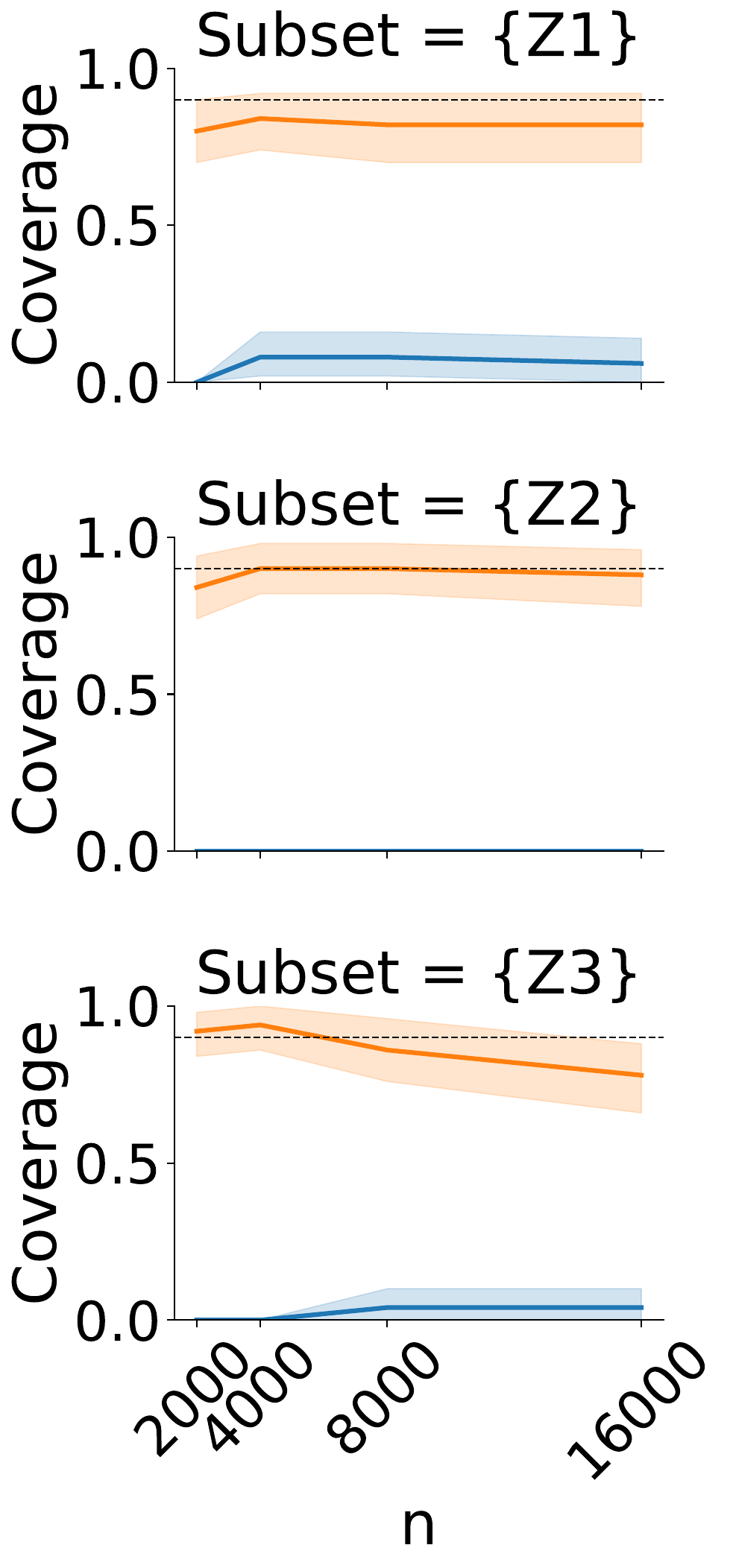}
    \includegraphics[width=0.15\textwidth]{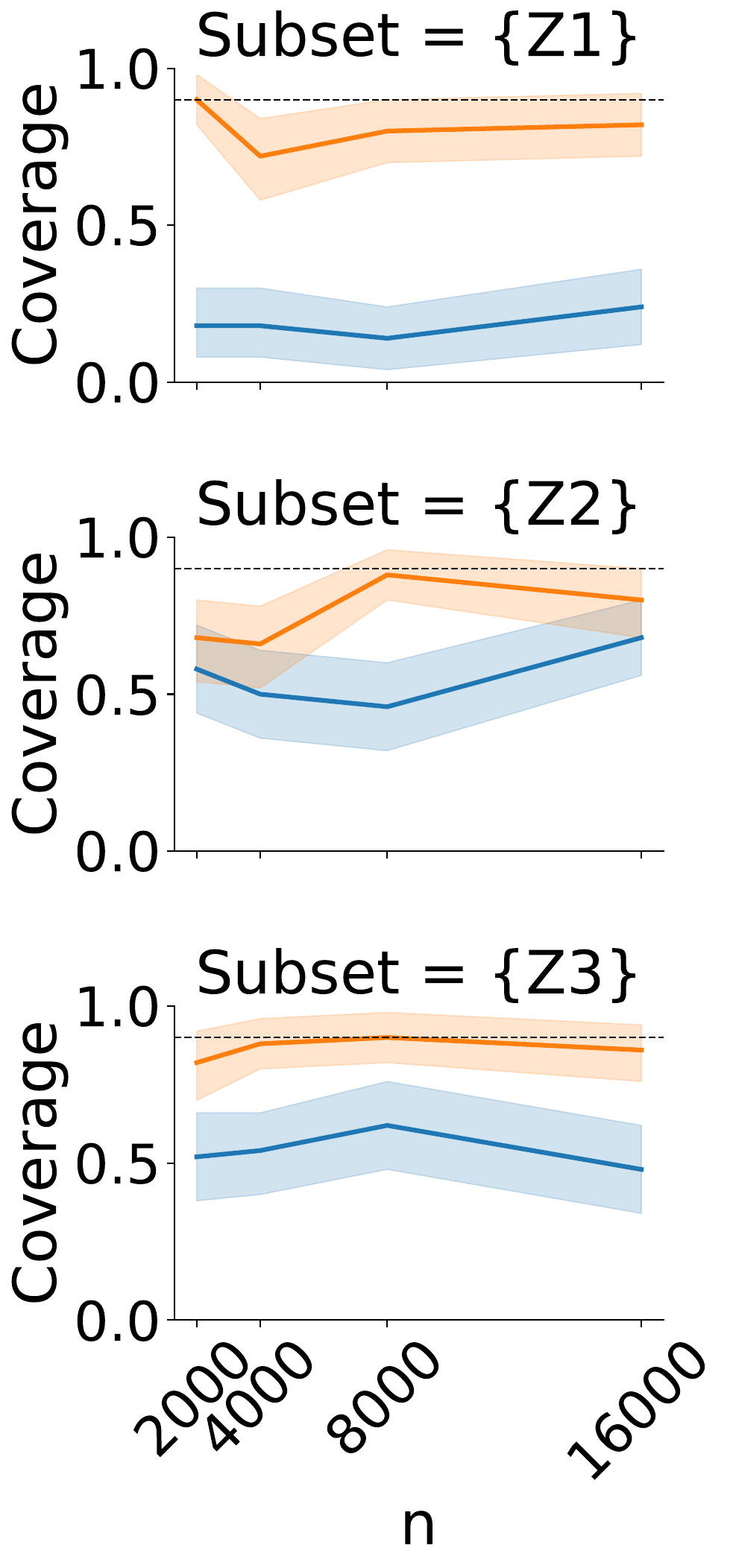}
    \vspace{-0.4cm}
    \caption{
    Coverage rates of 90\% CIs for aggregate decomposition terms (left) and value of $s$-partial shifts for the conditional covariate (middle) and outcome shifts (right) across dataset sizes $n$.
    }
    \vspace{-0.8cm}
    \label{fig:simulation_cov}
\end{figure}

\vspace{-0.2cm}
\section{Simulation}
\label{sec:simulations}
We now present simulations that
verify the theoretical results by showing that the proposed procedure achieves the desired coverage rates (Sec~\ref{sec:coverage})
and illustrate how the proposed method provides more intuitive explanations of performance gaps (Sec~\ref{sec:compare}).
In all empirical analyses, performance of the ML algorithm is quantified in terms of 0-1 accuracy.
Below, we briefly describe the simulation settings; full details are provided in Sec~\ref{sec:simulation_details} in the Appendix.
\vspace{-0.2cm}
\subsection{Verifying theoretical properties}
\label{sec:coverage}
We first verify that the inference procedures for the decomposition terms have CIs with coverage close to their nominal rate.
We check the coverage of the aggregate decomposition as well as the value of $s$-partial conditional covariate and partial conditional outcome shifts for $s = \{Z_1\}, \{Z_2\}, \{Z_3\}$.
$(W,Z_1,Z_2,Z_3)$ are sampled from independent normal distributions with different means in source and target, while $Y$ is simulated from logistic regression models with different coefficients.
CIs for the debiased ML estimator converge to the nominal 90\% coverage rate with increasing sample size, whereas those for the na\"ive plug-in estimator do not (Fig~\ref{fig:simulation_cov}).

\begin{figure} \centering
    \textit{(a) Conditional covariate}
    \includegraphics[scale=0.33]{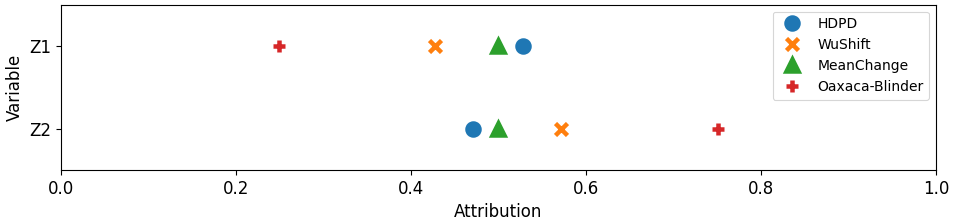}
    
    \textit{(b) Conditional outcome}
    
    \includegraphics[scale=0.33]{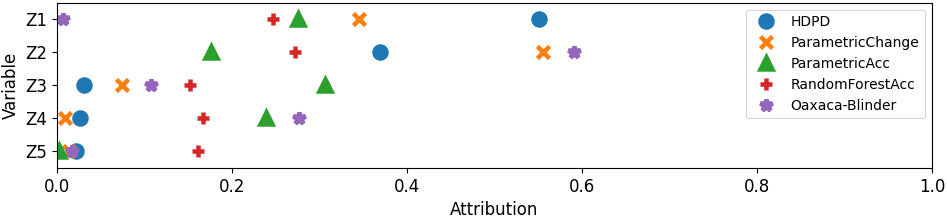}
{\small
\begin{tabular}{lrrr}
Method & Acc-1 & Acc-2 & Acc-3 \\
\midrule
ParametricChange & 0.78 & \textbf{0.80} & \textbf{0.81} \\
ParametricAcc & 0.75 & \textbf{0.80} & 0.80 \\
RandomForestAcc & 0.78 & \textbf{0.80} & 0.80 \\
OaxacaBlinder & 0.78 & 0.78 & 0.80 \\
Proposed & \textbf{0.80} & \textbf{0.80} & \textbf{0.81} \\
\end{tabular}
    }
    \caption{
    Comparison of variable importance reported by proposed method HDPD versus existing methods for conditional covariate (a) and conditional outcome (b) terms. 
}
    \vspace{-0.5cm}
    \label{fig:comparators}
\end{figure}

\subsection{Comparing explanations}
\label{sec:compare}

We now compare the proposed definitions for the detailed decomposition with existing methods.
For the detailed conditional covariate decomposition, the comparators are:
\begin{itemize}
    \item \texttt{MeanChange} Tests for a difference in means for each variable. Defines importance as $1-$ p-value.
    \item \texttt{Oaxaca-Blinder}: Fits a linear model of the logit-transformed expected loss with respect to $Z$ in the source domain. Defines importance of $Z_i$ as its coefficient multiplied by the difference in the mean of $Z_i$ \citep{blinder1973wage}.
    \item \texttt{WuShift} \citep{Wu2021-dl}: Defines importance of subset $s$ as change in \textit{overall} performance due to $s$-partial conditional covariate shifts. Applies Shapley framework to obtain VIs.
\end{itemize}
For the detailed decomposition of the performance gap due to shifts in the outcome, we compare against:
\begin{itemize}
    \item \texttt{ParametricChange}: Fits a logistic model for $Y$ with interaction terms between domain and $Z$. Defines importance of $Z_i$ as the coefficient of its interaction term.
    \item \texttt{ParametricAcc}: Same as \texttt{ParametricChange} but models the 0-1 loss rather than $Y$.
\item \texttt{RandomForestAcc}: Compares VI of RFs trained on data from both domains with input features $D$, $Z$, and $W$ to predict the 0-1 loss.
    \item \texttt{Oaxaca-Blinder}: Fits linear models for the logit-transformed expected loss in each domain. Defines importance of $Z_i$ as its mean in the target domain multiplied by the difference in its coefficients across domains.
\end{itemize}
Although the proposed method agrees on important features with these other methods in certain cases, there are important situations where the methods differ as highlighted below.

\textbf{Conditional covariate.} (Fig~\ref{fig:comparators}a) We simulate $(W, Z_1)$ from a standard normal distribution, $Z_2$ from a mixture of two Gaussians whose means depend on the value of $Z_1$ (i.e. $Z_1 \rightarrow Z_2$), and $Y$ from a logistic regression model depending on $(W,Z_1,Z_2)$.
We induce a shift from the source domain to the target domain by shifting only the distribution of $Z_1$, so that $p_1(Z|W) = p_0(Z_{2}|Z_1,W)p_1(Z_1|W)$.
Only the proposed estimator correctly recovers that $Z_1$ is more important than $Z_2$, because this shift explains all the variation in performance gaps across strata $W$.
The other methods incorrectly assign higher importance to $Z_2$ because they simply assess the performance change due to hypothesized $s$-partial shifts but do not check if the hypothesized shifts are good explanations in the first place.

\textbf{Conditional outcome.} (Fig~\ref{fig:comparators}b)  $W$ and $Z \in \mathbb{R}^5$ are simulated from the same distribution in both domains. $Y$ is generated from a logistic regression model with coefficients for $(W, Z_1,\cdots, Z_5)$ as $(0.2,0.4,2,0.25,0.1,0.1)$ in the source and $(0.2,-0.4,0.8,0.1,0.1,0.1)$ in the target.
Interestingly, none of the methods have the same ranking of the features.
\texttt{ParametricChange} identifies $Z_2$ as having the largest shift on the logit scale, but this does not mean that it is the most important explanation for changes in the loss. 
According to our decomposition framework, $Z_1$ is actually the most important for explaining changes in model performance due to outcome shifts.
\texttt{Oaxaca-Blinder} and \texttt{ParametricAcc} have odd behavior---\texttt{Oaxaca-Blinder} assigns $Z_1$ the lowest importance and \texttt{ParametricAcc} assigns $Z_3$ the highest importance), because the models are misspecified.
The VI from \texttt{RandomForestAcc} is also difficult to interpret because it measures which variables are good predictors of performance, not performance shift.

\begin{figure}[t!]
    \centering
    \textit{(a) Readmission risk prediction}
    
    \includegraphics[scale=0.32]{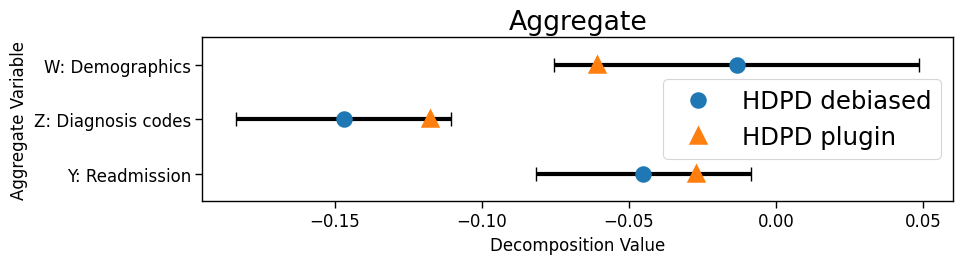}
    \includegraphics[scale=0.32]{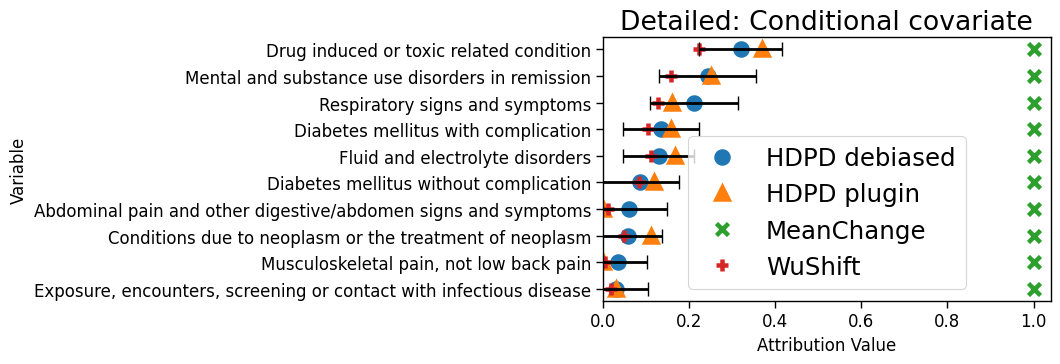}

    \textit{(b) Insurance coverage prediction}

    \includegraphics[scale=0.32]{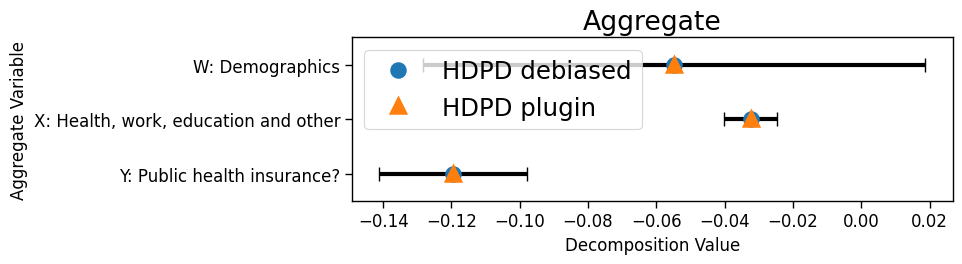}
    \includegraphics[scale=0.32]{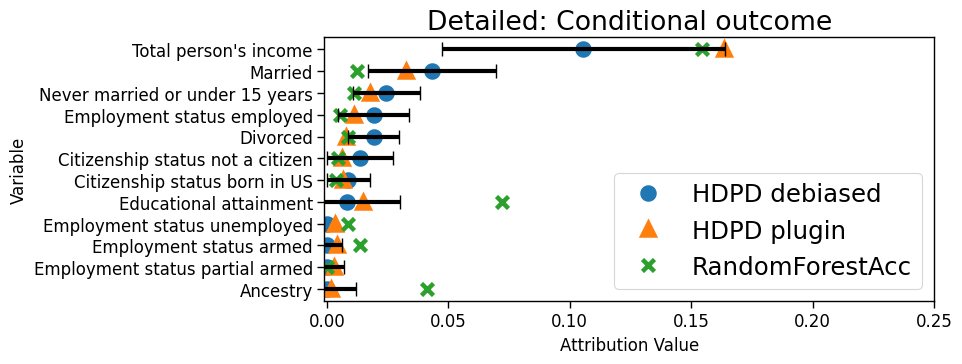}
    
    \vspace{-0.3cm}
    \caption{
    Aggregate and detailed decompositions for the performance gaps of (a) a model predicting readmission risk across two patient populations (General$\rightarrow$Heart Failure) and (b) a model predicting insurance coverage across US states (NE$\rightarrow$LA).
    A subset of VI estimates is shown; see full list in Sec~\ref{sec:data_details} in the Appendix.
    }
    \vspace{-0.5cm}
    \label{fig:casestudy}
\end{figure}

Perhaps a more objective evaluation is to compare the utility of the different explanations.
To this end, we define a targeted algorithmic modification as one where the source risk is revised with respect to a subset of features by fitting an ML algorithm with input features as $Q$, $W$, and $Z_s$ on the target domain.
Comparing the performance of the targeted algorithmic modifications that take in the top $k$ features from each explanation method, we find that model revisions based on the proposed method achieve the highest performance for $k=1$ to $3$.
\section{Real-world data case studies}
We now analyze two datasets with naturally-occurring shifts.

\textbf{Hospital readmission.} Using electronic health record data from a large safety-net hospital, we analyzed performance of a Gradient Boosted Tree (GBT) trained on the general patient population (source) to predict 30-day readmission risk but applied to patients diagnosed with HF (target).
Features include  4 demographic variables ($W$) and 16 diagnosis codes ($Z$).
Each domain supplied $n=3750$ observations. 

Model accuracy drops from 70\% to 53\%.
From the aggregate decompositions (Fig~\ref{fig:casestudy}a), we observe that the drop is mainly due to covariate shift.
If one performed the standard check to see which variables significantly changed in their mean value (\texttt{MeanChange}), then one would find a significant shift in nearly \textit{every} variable.
Little support is offered to identify main drivers of the performance drop.
In contrast, the detailed decomposition from the proposed framework estimates diagnoses ``Drug-induced or toxic-related condition'' and ``mental and substance use disorder in remission'' as having the highest estimated contributions to the conditional covariate shift, and most other variables having little to no contribution.
Upon discussion with clinicians from this hospital, differences in the top two diagnoses may be explained by (i) substance use being a major cause of HF at this hospital, with over eighty percent of its HF patients reporting current or prior substance use, and (ii) substance use and mental health disorders often occurring simultaneously in this HF patient population.
Based on these findings, closing the performance gap may require a mixture of both operational and algorithmic interventions.
Finally, CIs from the debiased ML procedure provide valuable information on the uncertainty of the estimates and highlight, for instance, that more data is necessary to determine the true ordering between the top two features.
In contrast, existing methods do not provide (asymptotically valid) CIs.

\textbf{ACS Public Coverage.} 
We analyze a neural network trained to predict whether a person has public health insurance using data from Nebraska in the American Community Survey (source, $n=3000$), applied to data from Louisiana (target, $n=6000$).
Baseline variables include 3 demographics (sex, age, race), and covariates $Z$ include 31 variables related to health conditions, employment, marital status, citizenship status, and education.

Model accuracy drops from 84\% to 66\% across the two states.
The main driver is the shift in the outcome distribution per the aggregate decomposition (Fig~\ref{fig:casestudy}b) and the most important contributor to the outcome shift is annual income, perhaps due to differences in cost of living across the two states.
Income is significantly more important than all the other variables; the ranking between the remaining variables is unclear.
In comparing the performance of targeted model revisions with respect to the top variables from each explanation method, we find that model revisions based on top variables identified by the proposed procedure lead to AUCs that are better or as good as those based on \texttt{RandomForestAcc} (Table~\ref{tab:acs_auc} in the Appendix).

\vspace{-0.2cm}
\section{Discussion}

ML algorithms regularly encounter distribution shifts in practice, leading to drops in performance.
We present a novel framework that helps ML developers and deployment teams build a more nuanced understanding of the shifts.
Compared to past work, the approach is nonparametric, does not require fine-grained knowledge of the causal relationship between variables, and quantifies the uncertainty of the estimates by constructing confidence intervals. We present case studies on real-world datasets to demonstrate the use of the framework for understanding and guiding interventions to reduce performance drops.

Extensions of this work include relaxing the assumption of overlapping support of covariates such as by restricting to the common support \citep{Cai2023-ov}, allowing for decompositions of more complex measures of model performance such as AUC, and analyzing other factorizations of the data distribution (e.g. label or prior shifts \citep{kouw2019domain}).
For unstructured data (e.g. image and text), the current framework can be applied to low-dimensional embeddings or by extracting interpretable concepts \citep{kim2018tcav}; more work is needed to extend this framework to directly analyze unstructured data.
Finally, the focus of this work is to interpret performance gaps.
Future work may extend ideas in this work to design optimal interventions for closing the performance gap.

\subsection*{Impact statement}
This work presents a method for understanding failures of ML algorithms when they are deployed in settings or populations different from the ones in development datasets. Therefore, the work can be used to suggest ways of improving the algorithms or mitigating their harms. The method is generally applicable to tabular data settings for any classification algorithm, hence, it can potentially be applied across multiple domains where ML is used including medicine, finance, and online commerce.

\section*{Acknowledgments}
We would like to thank Lucas Zier, Avni Kothari, Berkman Sahiner, Nicholas Petrick, Gene Pennello, Mi-Ok Kim, and Romain Pirracchio for their invaluable feedback on the work.
This work was funded through a Patient-Centered Outcomes Research Institute® (PCORI®) Award (ME-2022C1-25619).
The views presented in this work are solely the responsibility of the author(s) and do not necessarily represent the views of the PCORI®, its Board of Governors or Methodology Committee, and the Food and Drug Administration.

\bibliography{main}

\begin{thebibliography}{43}
\providecommand{\natexlab}[1]{#1}
\providecommand{\url}[1]{\texttt{#1}}
\expandafter\ifx\csname urlstyle\endcsname\relax
  \providecommand{\doi}[1]{doi: #1}\else
  \providecommand{\doi}{doi: \begingroup \urlstyle{rm}\Url}\fi

\bibitem[Ali et~al.(2022)Ali, Cauchois, and Duchi]{ali2022lifecycle}
Ali, A., Cauchois, M., and Duchi, J.~C.
\newblock The lifecycle of a statistical model: Model failure detection,
  identification, and refitting, 2022.
\newblock URL \url{https://arxiv.org/abs/2202.04166}.

\bibitem[Audibert \& Tsybakov(2007)Audibert and Tsybakov]{Audibert2007-wx}
Audibert, J.-Y. and Tsybakov, A.~B.
\newblock Fast learning rates for plug-in classifiers.
\newblock \emph{Ann. Stat.}, 35\penalty0 (2):\penalty0 608--633, April 2007.

\bibitem[Blinder(1973)]{blinder1973wage}
Blinder, A.~S.
\newblock Wage discrimination: Reduced form and structural estimates.
\newblock \emph{The Journal of Human Resources}, 8\penalty0 (4):\penalty0
  436--455, 1973.
\newblock ISSN 0022166X.
\newblock URL \url{http://www.jstor.org/stable/144855}.

\bibitem[Budhathoki et~al.(2021)Budhathoki, Janzing, Bloebaum, and
  Ng]{budhathoki2021distribution}
Budhathoki, K., Janzing, D., Bloebaum, P., and Ng, H.
\newblock Why did the distribution change?
\newblock In Banerjee, A. and Fukumizu, K. (eds.), \emph{Proceedings of The
  24th International Conference on Artificial Intelligence and Statistics},
  volume 130 of \emph{Proceedings of Machine Learning Research}, pp.\
  1666--1674. PMLR, 13--15 Apr 2021.
\newblock URL \url{https://proceedings.mlr.press/v130/budhathoki21a.html}.

\bibitem[Cai et~al.(2023)Cai, Namkoong, and Yadlowsky]{Cai2023-ov}
Cai, T.~T., Namkoong, H., and Yadlowsky, S.
\newblock Diagnosing model performance under distribution shift.
\newblock March 2023.
\newblock URL \url{http://arxiv.org/abs/2303.02011}.

\bibitem[Charnes et~al.(1988)Charnes, Golany, Keane, and
  Rousseau]{Charnes1988-mi}
Charnes, A., Golany, B., Keane, M., and Rousseau, J.
\newblock Extremal principle solutions of games in characteristic function
  form: Core, chebychev and shapley value generalizations.
\newblock In Sengupta, J.~K. and Kadekodi, G.~K. (eds.), \emph{Econometrics of
  Planning and Efficiency}, pp.\  123--133. Springer Netherlands, Dordrecht,
  1988.

\bibitem[Chernozhukov et~al.(2018)Chernozhukov, Chetverikov, Demirer, Duflo,
  Hansen, Newey, and Robins]{Chernozhukov2018-dg}
Chernozhukov, V., Chetverikov, D., Demirer, M., Duflo, E., Hansen, C., Newey,
  W., and Robins, J.
\newblock Double/debiased machine learning for treatment and structural
  parameters.
\newblock \emph{Econom. J.}, 21\penalty0 (1):\penalty0 C1--C68, February 2018.

\bibitem[Cummings et~al.(2023)Cummings, Blackmer, Motyka, Farzaneh, Cao, Bisco,
  Glassbrook, Roebuck, Gillies, Admon, et~al.]{cummings2023external}
Cummings, B.~C., Blackmer, J.~M., Motyka, J.~R., Farzaneh, N., Cao, L., Bisco,
  E.~L., Glassbrook, J.~D., Roebuck, M.~D., Gillies, C.~E., Admon, A.~J.,
  et~al.
\newblock External validation and comparison of a general ward deterioration
  index between diversely different health systems.
\newblock \emph{Critical Care Medicine}, 51\penalty0 (6):\penalty0 775, 2023.

\bibitem[d'Eon et~al.(2022)d'Eon, d'Eon, Wright, and
  Leyton-Brown]{deon2021spotlight}
d'Eon, G., d'Eon, J., Wright, J.~R., and Leyton-Brown, K.
\newblock The spotlight: A general method for discovering systematic errors in
  deep learning models.
\newblock In \emph{2022 ACM Conference on Fairness, Accountability, and
  Transparency}, FAccT '22, pp.\  1962–1981, New York, NY, USA, 2022.
  Association for Computing Machinery.
\newblock ISBN 9781450393522.
\newblock \doi{10.1145/3531146.3533240}.
\newblock URL \url{https://doi.org/10.1145/3531146.3533240}.

\bibitem[Dodd \& Pepe(2003)Dodd and Pepe]{Dodd2003-fb}
Dodd, L.~E. and Pepe, M.~S.
\newblock Semiparametric regression for the area under the receiver operating
  characteristic curve.
\newblock \emph{J. Am. Stat. Assoc.}, 98\penalty0 (462):\penalty0 409--417,
  2003.

\bibitem[Eyuboglu et~al.(2022)Eyuboglu, Varma, Saab, Delbrouck, Lee-Messer,
  Dunnmon, Zou, and Re]{eyuboglu2022domino}
Eyuboglu, S., Varma, M., Saab, K.~K., Delbrouck, J.-B., Lee-Messer, C.,
  Dunnmon, J., Zou, J., and Re, C.
\newblock Domino: Discovering systematic errors with cross-modal embeddings.
\newblock In \emph{International Conference on Learning Representations}, 2022.
\newblock URL \url{https://openreview.net/forum?id=FPCMqjI0jXN}.

\bibitem[Fairlie(2005)]{fairlie2005extension}
Fairlie, R.~W.
\newblock An extension of the blinder-oaxaca decomposition technique to logit
  and probit models.
\newblock \emph{Journal of economic and social measurement}, 30\penalty0
  (4):\penalty0 305--316, 2005.

\bibitem[Firpo et~al.(2018)Firpo, Fortin, and Lemieux]{firpo2018decomposing}
Firpo, S.~P., Fortin, N.~M., and Lemieux, T.
\newblock Decomposing wage distributions using recentered influence function
  regressions.
\newblock \emph{Econometrics}, 6\penalty0 (2), 2018.
\newblock ISSN 2225-1146.
\newblock \doi{10.3390/econometrics6020028}.
\newblock URL \url{https://www.mdpi.com/2225-1146/6/2/28}.

\bibitem[Fortin et~al.(2011)Fortin, Lemieux, and
  Firpo]{fortin2011decomposition}
Fortin, N., Lemieux, T., and Firpo, S.
\newblock Chapter 1 - decomposition methods in economics.
\newblock volume~4 of \emph{Handbook of Labor Economics}, pp.\  1--102.
  Elsevier, 2011.
\newblock \doi{https://doi.org/10.1016/S0169-7218(11)00407-2}.
\newblock URL
  \url{https://www.sciencedirect.com/science/article/pii/S0169721811004072}.

\bibitem[Guo et~al.(2017)Guo, Pleiss, Sun, and Weinberger]{Guo2017-gy}
Guo, C., Pleiss, G., Sun, Y., and Weinberger, K.~Q.
\newblock On calibration of modern neural networks.
\newblock \emph{International Conference on Machine Learning}, 70:\penalty0
  1321--1330, 2017.

\bibitem[Hines et~al.(2022)Hines, Diaz-Ordaz, and Vansteelandt]{Hines2022-tr}
Hines, O., Diaz-Ordaz, K., and Vansteelandt, S.
\newblock Variable importance measures for heterogeneous causal effects.
\newblock April 2022.

\bibitem[Hines et~al.(2023)Hines, Diaz-Ordaz, and
  Vansteelandt]{hines2023variable}
Hines, O., Diaz-Ordaz, K., and Vansteelandt, S.
\newblock Variable importance measures for heterogeneous causal effects, 2023.

\bibitem[Jain et~al.(2023)Jain, Lawrence, Moitra, and
  Madry]{jain2023distilling}
Jain, S., Lawrence, H., Moitra, A., and Madry, A.
\newblock Distilling model failures as directions in latent space.
\newblock In \emph{The Eleventh International Conference on Learning
  Representations}, 2023.
\newblock URL \url{https://openreview.net/forum?id=99RpBVpLiX}.

\bibitem[Kang \& Schafer(2007)Kang and Schafer]{kang2007demystifying}
Kang, J. D.~Y. and Schafer, J.~L.
\newblock {Demystifying Double Robustness: A Comparison of Alternative
  Strategies for Estimating a Population Mean from Incomplete Data}.
\newblock \emph{Statistical Science}, 22\penalty0 (4):\penalty0 523 -- 539,
  2007.
\newblock \doi{10.1214/07-STS227}.
\newblock URL \url{https://doi.org/10.1214/07-STS227}.

\bibitem[Kennedy(2022)]{Kennedy2022-to}
Kennedy, E.~H.
\newblock Semiparametric doubly robust targeted double machine learning: a
  review.
\newblock March 2022.

\bibitem[Kim et~al.(2018)Kim, Wattenberg, Gilmer, Cai, Wexler, Viegas, and
  sayres]{kim2018tcav}
Kim, B., Wattenberg, M., Gilmer, J., Cai, C., Wexler, J., Viegas, F., and
  sayres, R.
\newblock Interpretability beyond feature attribution: Quantitative testing
  with concept activation vectors ({TCAV}).
\newblock In Dy, J. and Krause, A. (eds.), \emph{Proceedings of the 35th
  International Conference on Machine Learning}, volume~80 of \emph{Proceedings
  of Machine Learning Research}, pp.\  2668--2677. PMLR, 10--15 Jul 2018.
\newblock URL \url{https://proceedings.mlr.press/v80/kim18d.html}.

\bibitem[Kirby et~al.(2006)Kirby, Taliaferro, and Zuvekas]{kirby2006health}
Kirby, J.~B., Taliaferro, G., and Zuvekas, S.~H.
\newblock Explaining racial and ethnic disparities in health care.
\newblock \emph{Medical Care}, 44\penalty0 (5):\penalty0 I64--I72, 2006.
\newblock ISSN 00257079.
\newblock URL \url{http://www.jstor.org/stable/3768359}.

\bibitem[Kouw \& Loog(2019)Kouw and Loog]{kouw2019domain}
Kouw, W.~M. and Loog, M.
\newblock An introduction to domain adaptation and transfer learning, 2019.

\bibitem[Kulinski \& Inouye(2023)Kulinski and Inouye]{kulinski2023explaining}
Kulinski, S. and Inouye, D.~I.
\newblock Towards explaining distribution shifts.
\newblock In Krause, A., Brunskill, E., Cho, K., Engelhardt, B., Sabato, S.,
  and Scarlett, J. (eds.), \emph{Proceedings of the 40th International
  Conference on Machine Learning}, volume 202 of \emph{Proceedings of Machine
  Learning Research}, pp.\  17931--17952. PMLR, 23--29 Jul 2023.
\newblock URL \url{https://proceedings.mlr.press/v202/kulinski23a.html}.

\bibitem[Kulinski et~al.(2020)Kulinski, Bagchi, and
  Inouye]{kulinski2020detection}
Kulinski, S., Bagchi, S., and Inouye, D.~I.
\newblock Feature shift detection: Localizing which features have shifted via
  conditional distribution tests.
\newblock In Larochelle, H., Ranzato, M., Hadsell, R., Balcan, M., and Lin, H.
  (eds.), \emph{Advances in Neural Information Processing Systems}, volume~33,
  pp.\  19523--19533. Curran Associates, Inc., 2020.
\newblock URL
  \url{https://proceedings.neurips.cc/paper/2020/file/e2d52448d36918c575fa79d88647ba66-Paper.pdf}.

\bibitem[Liu et~al.(2023)Liu, Wang, Cui, and Namkoong]{Liu2023-gs}
Liu, J., Wang, T., Cui, P., and Namkoong, H.
\newblock On the need for a language describing distribution shifts:
  Illustrations on tabular datasets.
\newblock July 2023.

\bibitem[Oaxaca(1973)]{oaxaca1973differentials}
Oaxaca, R.
\newblock Male-female wage differentials in urban labor markets.
\newblock \emph{International Economic Review}, 14\penalty0 (3):\penalty0
  693--709, 1973.
\newblock ISSN 00206598, 14682354.
\newblock URL \url{http://www.jstor.org/stable/2525981}.

\bibitem[Pedregosa et~al.(2011)Pedregosa, Varoquaux, Gramfort, Michel, Thirion,
  Grisel, Blondel, Prettenhofer, Weiss, Dubourg, Vanderplas, Passos,
  Cournapeau, Brucher, Perrot, and Duchesnay]{pedregosa2011scikit}
Pedregosa, F., Varoquaux, G., Gramfort, A., Michel, V., Thirion, B., Grisel,
  O., Blondel, M., Prettenhofer, P., Weiss, R., Dubourg, V., Vanderplas, J.,
  Passos, A., Cournapeau, D., Brucher, M., Perrot, M., and Duchesnay, E.
\newblock Scikit-learn: Machine learning in {P}ython.
\newblock \emph{Journal of Machine Learning Research}, 12:\penalty0 2825--2830,
  2011.

\bibitem[Platt(1999)]{Platt1999-lm}
Platt, J.
\newblock Probabilistic outputs for support vector machines and comparisons to
  regularized likelihood methods.
\newblock \emph{Advances in large margin classifiers}, 10\penalty0
  (3):\penalty0 61--74, 1999.

\bibitem[Plumb et~al.(2023)Plumb, Johnson, Cabrera, and
  Talwalkar]{plumb2023towards}
Plumb, G., Johnson, N., Cabrera, A., and Talwalkar, A.
\newblock Towards a more rigorous science of blindspot discovery in image
  classification models.
\newblock \emph{Transactions on Machine Learning Research}, 2023.
\newblock ISSN 2835-8856.
\newblock URL \url{https://openreview.net/forum?id=MaDvbLaBiF}.
\newblock Expert Certification.

\bibitem[Qiu et~al.(2023)Qiu, Tchetgen, and Dobriban]{Qiu2023-bn}
Qiu, H., Tchetgen, E.~T., and Dobriban, E.
\newblock Efficient and multiply robust risk estimation under general forms of
  dataset shift.
\newblock June 2023.

\bibitem[Quionero-Candela et~al.(2009)Quionero-Candela, Sugiyama, Schwaighofer,
  and Lawrence]{quionero2009dataset}
Quionero-Candela, J., Sugiyama, M., Schwaighofer, A., and Lawrence, N.~D.
\newblock \emph{Dataset shift in machine learning}.
\newblock The MIT Press, 2009.

\bibitem[Rabanser et~al.(2019)Rabanser, G\"{u}nnemann, and
  Lipton]{rabanser2019failing}
Rabanser, S., G\"{u}nnemann, S., and Lipton, Z.
\newblock Failing loudly: An empirical study of methods for detecting dataset
  shift.
\newblock In Wallach, H., Larochelle, H., Beygelzimer, A., d\textquotesingle
  Alch\'{e}-Buc, F., Fox, E., and Garnett, R. (eds.), \emph{Advances in Neural
  Information Processing Systems}, volume~32. Curran Associates, Inc., 2019.
\newblock URL
  \url{https://proceedings.neurips.cc/paper/2019/file/846c260d715e5b854ffad5f70a516c88-Paper.pdf}.

\bibitem[Shapley(1953)]{Shapley1953-ld}
Shapley, L.~S.
\newblock 17. a value for n-person games.
\newblock In Kuhn, H.~W. and Tucker, A.~W. (eds.), \emph{Contributions to the
  Theory of Games ({AM-28)}, Volume {II}}, pp.\  307--318. Princeton University
  Press, Princeton, December 1953.

\bibitem[Steyerberg(2009)]{Steyerberg2009-ze}
Steyerberg, E.~W.
\newblock \emph{Clinical Prediction Models: A Practical Approach to
  Development, Validation, and Updating}.
\newblock Springer, New York, NY, 2009.

\bibitem[Sugiyama et~al.(2007)Sugiyama, Nakajima, Kashima, Buenau, and
  Kawanabe]{Sugiyama2007-pk}
Sugiyama, M., Nakajima, S., Kashima, H., Buenau, P., and Kawanabe, M.
\newblock Direct importance estimation with model selection and its application
  to covariate shift adaptation.
\newblock \emph{Adv. Neural Inf. Process. Syst.}, 2007.

\bibitem[Tsiatis(2006)]{Tsiatis2006-ay}
Tsiatis, A.~A.
\newblock \emph{Semiparametric Theory and Missing Data}.
\newblock Springer New York, 2006.

\bibitem[van~der Vaart(1998)]{Van_der_Vaart1998-cj}
van~der Vaart, A.~W.
\newblock \emph{Asymptotic Statistics}.
\newblock Cambridge University Press, October 1998.

\bibitem[Williamson \& Feng(2020)Williamson and Feng]{Williamson2020-lx}
Williamson, B.~D. and Feng, J.
\newblock Efficient nonparametric statistical inference on population feature
  importance using shapley values.
\newblock \emph{International Conference on Machine Learning}, 2020.
\newblock URL
  \url{https://proceedings.icml.cc/static/paper_files/icml/2020/3042-Paper.pdf}.

\bibitem[Williamson et~al.(2021)Williamson, Gilbert, Carone, and
  Simon]{williamson2021nonparametric}
Williamson, B.~D., Gilbert, P.~B., Carone, M., and Simon, N.
\newblock Nonparametric variable importance assessment using machine learning
  techniques.
\newblock \emph{Biometrics}, 77\penalty0 (1):\penalty0 9--22, 2021.
\newblock \doi{https://doi.org/10.1111/biom.13392}.
\newblock URL \url{https://onlinelibrary.wiley.com/doi/abs/10.1111/biom.13392}.

\bibitem[Wu et~al.(2021)Wu, Wu, and Zou]{Wu2021-dl}
Wu, E., Wu, K., and Zou, J.
\newblock Explaining medical {AI} performance disparities across sites with
  confounder shapley value analysis.
\newblock November 2021.
\newblock URL \url{http://arxiv.org/abs/2111.08168}.

\bibitem[Yun(2004)]{yun2004moment}
Yun, M.-S.
\newblock Decomposing differences in the first moment.
\newblock \emph{Economics Letters}, 82\penalty0 (2):\penalty0 275--280, 2004.
\newblock ISSN 0165-1765.
\newblock \doi{https://doi.org/10.1016/j.econlet.2003.09.008}.
\newblock URL
  \url{https://www.sciencedirect.com/science/article/pii/S0165176503002866}.

\bibitem[Zhang et~al.(2023)Zhang, Singh, Ghassemi, and Joshi]{zhang2023why}
Zhang, H., Singh, H., Ghassemi, M., and Joshi, S.
\newblock "{W}hy did the model fail?": Attributing model performance changes to
  distribution shifts.
\newblock In Krause, A., Brunskill, E., Cho, K., Engelhardt, B., Sabato, S.,
  and Scarlett, J. (eds.), \emph{Proceedings of the 40th International
  Conference on Machine Learning}, volume 202 of \emph{Proceedings of Machine
  Learning Research}, pp.\  41550--41578. PMLR, 23--29 Jul 2023.
\newblock URL \url{https://proceedings.mlr.press/v202/zhang23ai.html}.

\end{thebibliography}
\bibliographystyle{icml2024}

\newpage
\appendix
\onecolumn
\section{Contents of the Appendix}
\begin{itemize}
\item Table~\ref{tab:comparison} summarizes the comparison with prior work.

\item Table~\ref{tab:notation} collects all the notation used for reference.

\item Algorithms~\ref{algo:agg} and \ref{algo:detailed_decomp} provides the steps required for computing the aggregate and detailed decomposition respectively. Detailed decompositions require computing the value of $s$-partial conditional outcome and conditional covariate shifts which is described in Algorithms~\ref{algo:value_cond_outcome} and \ref{algo:value_cond_cov}.

\item Section~\ref{sec:cov_inference} describes the estimation and inference for detailed decomposition of conditional covariate shift. 

\item Section~\ref{sec:proofs} provides the derivations of the results.

\item Sections~\ref{sec:implement} and \ref{sec:simulation_details} describe the implementation and simulation details.

\item Section~\ref{sec:data_details} provides additional details on the two real world datasets and results.
\end{itemize}

\begin{table}[htbp!]
    \centering
    \begin{tabular}{p{2.8cm}|p{1.5cm}|p{2.2cm}|p{2.2cm}|p{2cm}|p{1.5cm}|p{2cm}}
        \toprule
        Papers &  Aggregate decomp. & \multicolumn{2}{|c|}{Detailed decomp.} & Does not require knowing causal DAG & Confidence intervals & Nonparametric\\
        & & Cond. covariate & Cond. outcome & & &\\
        \midrule
        \citet{zhang2023why} & $\checkmark$ & $\checkmark$ & & & & $\checkmark$\\
        \citet{Cai2023-ov} & $\checkmark$ & & & $\checkmark$ & $\checkmark$ & $\checkmark$\\
        \citet{Wu2021-dl} &  & $\checkmark$ & & $\checkmark$ & & $\checkmark$\\
        \citet{Liu2023-gs} & $\checkmark$ & & $\checkmark$ & $\checkmark$ & & $\checkmark$\\
        \citet{Dodd2003-fb} & & & $\checkmark$ & $\checkmark$ & $\checkmark$\\
        \citet{oaxaca1973differentials}, \citet{blinder1973wage} & $\checkmark$ & $\checkmark$ & $\checkmark$ & $\checkmark$ & $\checkmark$ & \\
        \midrule
        HDPD (this paper) & $\checkmark$ & $\checkmark$ & $\checkmark$ & $\checkmark$ & $\checkmark$ & $\checkmark$\\
        \bottomrule
    \end{tabular}
    \caption{Comparison to prior works that decompose performance change.}
    \label{tab:comparison}
\end{table}

\begin{table}[htbp!]
    \centering
    \begin{tabular}{p{3cm}p{10cm}}
        \toprule
        Symbol & Meaning \\
        \midrule
        $W,Z,Y$ & Variables: Baseline, Conditional covariates, Outcome\\
        $f$ & Prediction model being analyzed \\
        $\ell(W,Z,Y)$ or $\ell$ & Loss function e.g. 0-1 loss\\
        $D=0$ and $D=1$ & Indicators for source and target domain\\
        $p_0,p_1$ & Probability density (or mass) function for the two domains $D=0,1$\\
        $\E_{D_\W D_\Z D_\Y}$ & Expectation over the distribution $p_{D_\W}(W)p_{D_\Z}(Z|W)p_{D_\Y}(Y|W,Z)$\\ 
        $Q:=q(W,Z)$ & Source domain risk at $W,Z$, i.e. $p_0(Y=1|W,Z)$\\
        $\Tr$ and $\Tt$ & Training dataset used to fit models and evaluation dataset used to compute decompositions\\
        $\phi_{\Z,j}$ and $\phi_{\Y,j}$ & Shapley values for variable $j$ in the detailed decomposition of conditional covariate and outcome shifts\\
        $v_{\Z}(s)$ and $v_{\Y}(s)$ & Value of a subset $s$ for $s$-partial conditional covariate shift and $s$-partial outcome shift\\
        $v_{\cdot}^{\num}(s)$ and $v_{\cdot}^{\denom}(s)$ & Numerator and denominator of the ratio defined in the value of a subset\\
        Models $\mu_{\cdot}$ & Outcome models for the conditional expectation of the loss across different settings\\
        Models $\pi_{\cdot}$ & Density ratio models for feature densities across datasets\\
        $\mathbb{P}$ & Notation for expectation\\
        $\mathbb{P}_{0,n}^{\Tt}$ and $\mathbb{P}_{1,n}^{\Tt}$ & sample average over source and target data in the evaluation dataset\\
        $\psi(d,w,z,y)$ & Influence function defined in the linear approximation of an estimand, see e.g. \eqref{eq:linearization}\\
        \bottomrule
    \end{tabular}
    \caption{Notation}
    \label{tab:notation}
\end{table}

\begin{algorithm}[htbp!]
    \SetKwFunction{isOddNumber}{isOddNumber}
\LinesNumbered

    \KwIn{Source and target data $\{(W_i^{(d)},Z_i^{(d)},Y_i^{(d)})\}_{i=1}^{n_{d}}$ for $d\in\{0,1\}$, loss function $\ell(W,Z,Y;f)$.}
    \KwOut{Performance change due to baseline, conditional covariate, and conditional outcome shifts $\Lambda_{\W},\Lambda_{\Z},\Lambda_{\Y}$.}

    Split source and target data into training $\Tr$ and evaluation $\Tt$ partitions. Let $n^{\Tt}$ be the total number of data points in the $\Tt$ partition.\\

    Fit nuisance parameters $\eta_{\W},\eta_{\Z},\eta_{\Y}$, defined in Section~\ref{sec:result_agg}, on the $\Tr$ partition as outlined in Section~\ref{sec:implement_agg}.\\

    Estimate $\Lambda_{\W},\Lambda_{\Z},\Lambda_{\Y}$ using fitted nuisance parameters on the $\Tt$ partitions following the equations in Section~\ref{sec:agg_est_inf}.\\

    Estimate variance of influence functions $\psi_{\W}(d,w,z,y; \hat{\eta}_{\N}), \psi_{\Z}(d,w,z,y; \hat{\eta}_{\N})$, and $\psi_{\Y}(d,w,z,y; \hat{\eta}_{\N})$ as defined in \eqref{eq:influence_agg_baseline}, \eqref{eq:influence_agg_cov}, and \eqref{eq:influence_agg_outcome}, respectively.\\

    Compute $\alpha$-level confidence intervals as $\h \Lambda_{\N}\pm z_{1-\alpha/2}\sqrt{\widehat{var}(\psi_{\N}(d,w,z,y;\h \eta_{\N}))/n^{\Tt}}$ for $\N\in \{\W,\Z,\Y\}$, where $z$ is the inverse CDF of the standard normal distribution.

    \KwRet{$\h \Lambda_{\W},\h \Lambda_{\Z},\h \Lambda_{\Y}$ and confidence intervals}
    \caption{Aggregate decompositions into baseline, conditional covariate, and conditional outcome shifts}
    \label{algo:agg}
\end{algorithm}

\begin{algorithm}[htbp!]
    \setcounter{AlgoLine}{0}
    \SetKwFunction{isOddNumber}{isOddNumber}
\LinesNumbered

    \KwIn{Training $\Tr$ and evaluation $\Tt$ partitions of source and target data, subset of variables $s$.}
    \KwOut{Value for $s$-partial conditional outcome shift for subset $s$.}

    Fit nuisance parameters $\eta_{\Y,s}^{\num},\eta_{\Y}^{\denom}$, defined in Sections~\ref{sec:result_detailed_outcome}, on the $\Tr$ partitions as outlined in \ref{sec:implement_detailed_outcome}.\\

    Estimate $v_{\Y}(s)$ by $\h v_{\Y}^{\num}(s)/\h v_{\Y}^{\denom}$ where $\h v_{\Y}^{\num}(s)$ is estimated using \eqref{eq:value_num_outcome} and $\h v_{\Y}^{\denom}$ is estimated using \eqref{eq:value_denom_outcome} on the $\Tt$ partition.\\

    Estimate variance of influence function $\psi_{\Y,\binned,s}(d,w,z,y;\h \eta_{\Y,s}^{\num},\h \eta_{\Y}^{\denom})$ as defined in \eqref{eq:influence_cond_outcome}.\\

    Compute $\alpha$-level confidence interval as $\h v_{\Y}(s)\pm z_{1-\alpha/2}\sqrt{\widehat{var}(\psi_{\Y,\binned,s}(d,w,z,y;\h \eta_{\Y,s}^{\num},\h \eta_{\Y}^{\denom}))/n^{\Tt}}$.

    \KwRet{$\h v_{\Y}(s)$ and confidence interval}
    \caption{\textsc{ValueConditionalOutcome(s)}: Value for $s$-partial conditional outcome shift for a subset $s$}
    \label{algo:value_cond_outcome}
\end{algorithm}

\begin{algorithm}[htbp!]
    \setcounter{AlgoLine}{0}
    \SetKwFunction{isOddNumber}{isOddNumber}
\LinesNumbered

    \KwIn{Training $\Tr$ and evaluation $\Tt$ partitions of source and target data, subset of variables $s$.}
    \KwOut{Value for $s$-partial conditional covariate shift for subset $s$.}

    Fit nuisance parameters $\eta_{\Z,s}^{\num}$, defined in Sections~\ref{sec:result_detailed_cov}, on the $\Tr$ partition, as outlined in \ref{sec:implement_detailed_cov}.\\

    Estimate $v_{\Z}(s)$ by $\h v_{\Z}^{\num}(s)/\h v_{\Z}^{\num}(\emptyset)$ using \eqref{eq:value_num_cov} on the $\Tt$ partition.\\

    Estimate variance of influence function $\psi_{\Z,s}(d,w,z,y;\h \eta_{\Z,s}^{\num})$ as defined in \eqref{eq:influence_detailed_cov}.\\

    Compute $\alpha$-level confidence interval as $\h v_{\Z}(s)\pm z_{1-\alpha/2}\sqrt{\widehat{var}(\psi_{\Z,s}(d,w,z,y;\h \eta_{\Z,s}^{\num}))/n^{\Tt}}$.

    \KwRet{$\h v_{\Z}(s)$ and confidence interval}
    \caption{\textsc{ValueConditionalCovariate(s)}: Value for $s$-partial conditional covariate shift for a subset $s$}
    \label{algo:value_cond_cov}
\end{algorithm}

\begin{algorithm}[htbp!]
    \setcounter{AlgoLine}{0}
    \SetKwFunction{isOddNumber}{isOddNumber}
\LinesNumbered

    \KwIn{Source and target data $\{(W_i^{(d)},Z_i^{(d)},Y_i^{(d)})\}_{i=1}^{n_{d}}$ for $d\in\{0,1\}$, loss function $\ell(W,Z,Y;f)$, $\gamma \in \mathbb{R}^+$.}
    \KwOut{Detailed decomposition for conditional outcome or covariate shift, $\{\phi_{\Y,j}: j=0,\cdots, m_2\}$ or $\{\phi_{\Z,j}: j=1,\cdots, m_2\}$.}

    Split source and target data into training $\Tr$ and evaluation $\Tt$ partitions. Let $n^{\Tt}$ be the total number of data points in the $\Tt$ partition.\\

    Subsample $\lfloor \gamma n^{\Tt} \rfloor $ subsets from $\mathcal{Z} = \{1,\cdots, m_2\}$ with respect to Shapley weights, including $\emptyset$ and $\mathcal{Z}$, denoted $s_1,\cdots, s_k$.\\

    Estimate $v_{\Y}(s)\gets$ \textsc{ValueConditionalOutcome(s)} and $v_{\Z}(s)\gets$ \textsc{ValueConditionalCovariate(s)} for $s\in s_1,\cdots, s_k$.\\

    Get estimated Shapley values $\{\phi_{\Y,j}\}$ and $\{\phi_{\Z,j}\}$ by solving constrained linear regression problems in (7) in \citet{Williamson2020-lx} with value functions $v_{\Y}(s)$ and $v_{\Z}(s)$, respectively.\\

    Compute confidence intervals based on the influence functions defined in Theorem 1 in \citet{Williamson2020-lx}.\\

    \KwRet{Shapley values  $\{\phi_{\Y,j}: j=0,\dots, m_2\}$ and $\{\phi_{\Z,j}: j=1,\dots, m_2\}$ and confidence intervals}
    \caption{Detailed decomposition for conditional outcome and covariate shift}
    \label{algo:detailed_decomp}
\end{algorithm}

\section{Estimation and Inference}
\label{sec:cov_inference}

\subsection{Value of $s$-partial conditional covariate shifts}
\label{sec:cond_cov_s}
\textbf{Estimation.} Using the training partition, estimate the density ratio $\pi_{1s0}(z_s, w) = p_1(z_{s},w)/p_0(z_{s},w)$ and the outcome models
\begin{align}
\mu_{\cdot 0_{-s} 0}(z_s,w)&=\mathbb{E}_{\cdot 00}[\ell|z_{s},w] =\int  \int \ell p_0(y|w,z)p_0(z_{-s}|z_s, w)dy dz_{-s}\label{eq:mu_0ms0_cond_cov}\\
\mu_{\cdot 10}(w)&=\E_{\cdot 1 0}[\ell|w]\label{eq:mu_dot10_cond_cov}\\
\mu_{\cdot s0}(w)&= \E_{\cdot s 0}[\ell|w] = \int \int \int \ell p_0(y|w,z)p_0(z_{-s}|z_s, w)p_1(z_s|w)dy dz_{-s}dz_{s}\label{eq:mu_dots0_cond_cov},
\end{align}
in addition to the other nuisance models previously mentioned.
We propose the estimator $\hat{v}_{\Z}(s) = \hat{v}_{\Z}^{\num}(s)/\hat{v}^{\denom}_{\Z}$, where
\begin{align}
\begin{split}
\hat{v}_{\Z}^{\num}(s) := & \mathbb{P}_{1,n}^{\Tt} (\h\mu_{\cdot s 0}(W) - \h\mu_{\cdot 1 0}(W))^2\\
    & + 2\mathbb{P}_{0,n}^{\Tt} (\h\mu_{\cdot s 0}(W) - \h\mu_{\cdot 1 0}(W)) (\ell - \h{\mu}_{\cdot0_{-s}0}(W,Z_s))\h{\pi}_{1s0}(W,Z_s)\\
    & - 2\mathbb{P}_{0,n}^{\Tt} (\h\mu_{\cdot s 0}(W) - \h\mu_{\cdot 1 0}(W)) (\ell - \h{\mu}_{\cdot\cdot 0}(W,Z))\h{\pi}_{110}(W,Z)\\        
    &+ 2\mathbb{P}_{1,n}^{\Tt} (\h\mu_{\cdot s 0}(W) - \h\mu_{\cdot 1 0}(W)) (\h{\mu}_{\cdot0_{-s}0}(W,Z_s) - \h{\mu}_{\cdot s0}(W))\\
    &- 2\mathbb{P}_{1,n}^{\Tt} (\h\mu_{\cdot s 0}(W) - \h\mu_{\cdot 1 0}(W)) (\h{\mu}_{\cdot \cdot 0}(W,Z) - \h{\mu}_{\cdot 10}(W))
\end{split}
\label{eq:value_num_cov}
\end{align}
and $\hat{v}_{\Z}^{\denom} := \hat{v}_{\Z}^{\num}(\emptyset)$.

\textbf{Inference.} The estimator is asymptotically normal as long as the outcome and density ratio models are estimated at a fast enough rate defined formally as follows.

\begin{condition}
    \label{cond:condcov}
    For variable subset $s$, suppose the following holds
    \begin{itemize}
        \item $\mathbb{P}_0 (\mu_{001}(W)-\mu_{01}(W))^2$ is bounded
        \item $\mathbb{P}_0(\mu_{\cdot 0_{-s}0}(Z_s,W) - \h{\mu}_{\cdot 0_{-s}0}(Z_s,W))(\h{\pi}_{1s0}(Z_s,W)-\pi_{1s0}(Z_s,W)) = o_p(n^{-1/2})$
        \item $\mathbb{P}_0(\mu_{\cdot\cdot0}(W,Z) - \h{\mu}_{\cdot\cdot0}(W,Z))(\h{\pi}_{110}(W,Z)-\pi_{110}(W,Z)) = o_p(n^{-1/2})$
        \item $\mathbb{P}_1(\h{\mu}_{\cdot s0}(W)-\mu_{\cdot s0}(W))^2 = o_p(n^{-1/2})$
        \item $\mathbb{P}_1(\h{\mu}_{\cdot10}(W)-\mu_{\cdot10}(W))^2 = o_p(n^{-1/2})$
        \item (Positivity) $p_0(z_s,w)>0$ and $p_0(w,z)>0$ almost everywhere, such that the density ratios $\pi_{1s0}(w,z_s)$ and $\pi_{110}(w,z)$ are well-defined and between $(0,1)$.
    \end{itemize}
\end{condition}
\begin{theorem}
    For variable subset $s$, suppose $v_{Z}^{\denom}(s) >  0$ and Condition~\ref{cond:condcov} hold.
    Then the estimator $\hat{v}_{Z}(s)$ is asymptotically normal.
    \label{thrm:cond_cov_detailed}
\end{theorem}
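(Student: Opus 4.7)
The plan is to establish that both the numerator estimator $\hat v_\Z^{\num}(s)$ and the denominator estimator $\hat v_\Z^{\denom} = \hat v_\Z^{\num}(\emptyset)$ are asymptotically linear, and then apply the functional delta method to the map $(a,b)\mapsto a/b$ at $(v_\Z^{\num}(s), v_\Z^{\denom})$, which is valid by the assumption $v_\Z^{\denom}(s) > 0$. This is the same recipe used for the aggregate terms in Theorem~\ref{thm:aggregate} and for $\hat v_{\Y,\binned}(s)$ in Theorem~\ref{theorem:conditional_outcome_detailed}, but now applied to a quadratic functional rather than a linear one.

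To prove asymptotic linearity of $\hat v_\Z^{\num}(s)$, I would first derive the efficient influence function (EIF) of the functional
\begin{align*}
    \Psi_s(P) = \mathbb{E}_{1\cdot\cdot}\bigl[(\mu_{\cdot s 0}(W) - \mu_{\cdot 10}(W))^2\bigr]
\end{align*}
via pathwise differentiation, and verify that the five summands of \eqref{eq:value_num_cov} are exactly the one-step corrected estimator: the first term is the plug-in, and the remaining four cancel the first-order bias induced by estimating the two inner conditional means $\mu_{\cdot s 0}$ and $\mu_{\cdot 10}$ (which themselves depend on the nested outcome models $\mu_{\cdot 0_{-s} 0}$ and $\mu_{\cdot\cdot 0}$ together with the density ratios $\pi_{1s0}$ and $\pi_{110}$). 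A standard von Mises expansion then gives
\begin{align*}
    \hat v_\Z^{\num}(s) - v_\Z^{\num}(s) = (\mathbb{P}_n^{\Tt} - \mathbb{P})\psi_s(O;\eta_0) + E_n(\hat\eta) + R_n(\hat\eta),
\end{align*}
where the empirical-process drift $E_n$ is $o_p(n^{-1/2})$ by sample-splitting combined with $L_2$-consistency of $\hat\eta$ (the standard argument of \citet{Kennedy2022-to}), and $R_n$ collects the second-order remainder. The CLT then handles the leading term.

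The main technical obstacle is controlling $R_n$, which has two sources. The first consists of cross-products between outcome-model errors and density-ratio errors; these are driven to $o_p(n^{-1/2})$ by the mixed-bias conditions in the second and third bullets of Condition~\ref{cond:condcov}, reflecting the double-robust structure of the EIF (and paralleling the aggregate analysis). The second source is specific to the quadratic structure of this estimand: expanding
\begin{align*}
    (\hat\mu_{\cdot s 0}(W) - \hat\mu_{\cdot 10}(W))^2 - (\mu_{\cdot s 0}(W) - \mu_{\cdot 10}(W))^2
\end{align*}
and integrating against $p_1(W)$ leaves, beyond the linear-in-error piece cancelled by the EIF correction, purely quadratic contributions of the form $\mathbb{P}_1(\hat\mu_{\cdot s 0} - \mu_{\cdot s 0})^2$ and $\mathbb{P}_1(\hat\mu_{\cdot 10} - \mu_{\cdot 10})^2$ (plus a cross-term bounded by Cauchy--Schwarz). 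These are exactly the quantities required to be $o_p(n^{-1/2})$ by the fourth and fifth bullets of Condition~\ref{cond:condcov}, i.e.\ $o_p(n^{-1/4})$ in $L_2$. Applying the same argument at $s=\emptyset$ to obtain asymptotic linearity of $\hat v_\Z^{\denom}$, and then invoking the delta method, yields the claimed asymptotic normality of $\hat v_\Z(s)$.
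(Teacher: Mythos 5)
Your proposal is correct and follows essentially the same route as the paper: asymptotic linearity of $\hat v_{\Z}^{\num}(s)$ (and of the denominator as the $s=\emptyset$ case) via a von Mises expansion whose empirical-process term is killed by sample splitting plus consistency, whose doubly-robust cross-product remainders are killed by the mixed-bias bullets of Condition~\ref{cond:condcov}, and whose residual quadratic terms in $\hat\mu_{\cdot s0}-\mu_{\cdot s0}$ and $\hat\mu_{\cdot 10}-\mu_{\cdot 10}$ are killed by the $o_p(n^{-1/2})$ squared-error bullets, followed by the delta method for the ratio. This matches the paper's Lemma~\ref{lemma:numerator_cond_cov} and its proof of Theorem~\ref{thrm:cond_cov_detailed} step for step.
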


\section{Proofs}
\label{sec:proofs}
\textbf{Notation.} For all proofs, we will write $\mathbb P$ to mean $\mathbb P^{\Tt}$ (and likewise for the empirical version) for notational simplicity.

\textbf{Overview of derivation strategy.} We first present the general strategy for proving asymptotic normality of the estimators for the decompositions.
Details on nonparametric debiased estimation can be found in texts such as \citet{Tsiatis2006-ay} and \citet{Kennedy2022-to}.

Let $v(\mathbb{P})$ be a pathwise differentiable quantity that is a function of the true regular (differentiable in quadratic mean) probability distribution $\mathbb{P}$ over random variable $O$.
For instance, $v$ in the case of mean is defined as $v(\mathbb{P}):=\E_{o\sim \mathbb{P}(O)}[o]$.
Let $\h{\mathbb{P}}$ denote an arbitrary regular estimator of $\mathbb{P}$, such as the maximum likelihood estimator.
The plug-in estimator is then defined as $v(\h{\mathbb{P}})$. 

The von-Mises expansion of the functional $v$ (which linearizes $v$ in analogy to the first-order Taylor expansion), given it is pathwise differentiable, gives
\begin{equation}
    \label{eq:linearization}
    v(\h{\mathbb{P}}) - v(\mathbb{P}) = -\mathbb{P}\ \psi(o;\h{\mathbb{P}}) + R(\h{\mathbb{P}},\mathbb{P}).
\end{equation} 
Here, the function $\psi$ is called an influence function (or a functional gradient of $v$ at $\h{\mathbb{P}}$). $R(\h{\mathbb{P}},\mathbb{P})$ is a second-order remainder term. The one-step corrected estimators we consider have the form of $v(\h{\mathbb{P}}) + \mathbb{P}_n \psi(o;\h{\mathbb{P}})$ where $\mathbb{P}_n$ denotes a sample average. Following the expansion above, the one-step corrected estimator can be analyzed as follows,
\begin{align*}
    &\left(v(\h{\mathbb{P}}) + \mathbb{P}_n \psi(o;\h{\mathbb{P}})\right) - v(\mathbb{P})\\
    &= (\mathbb{P}_n - \mathbb{P})\psi(o;\h{\mathbb{P}}) + R(\h{\mathbb{P}},\mathbb{P})\\
    &= (\mathbb{P}_n - \mathbb{P})\psi(o;\mathbb{P}) + (\mathbb{P}_n - \mathbb{P})(\psi(o;\h{\mathbb{P}}) - \psi(o;\mathbb{P})) + R(\h{\mathbb{P}},\mathbb{P})
\end{align*}
Our goal will be to analyze each of the three terms and to show that they are asymptotically negligible at $\sqrt{n}$-rate, such that the one-step corrected estimator satisfies
$$\left(v(\h{\mathbb{P}}) + \mathbb{P}_n \psi(o;\h{\mathbb{P}})\right)- v(\mathbb{P}) =  \mathbb{P}_n \psi(o;\mathbb{P}) + o_p(n^{-1/2}),$$
where we used the property of influence functions that they have zero mean.
Thus the one-step corrected estimator is asymptotically normal with mean $v(\mathbb{P})$ and variance $var(\psi(o;\mathbb{P}))/n$, which allows for the construction of CIs.
In the following proofs, we present the influence functions without derivations; see \citet{Kennedy2022-to} and \citet{Hines2022-tr} for strategies for deriving influence functions.

\subsection{Aggregate decompositions}
\label{sec:result_agg}
Let the nuisance parameters in the one-step estimators $\Lambda_{\W},\Lambda_{\Z},\Lambda_{\Y}$ be denoted by $\eta_{\W}=(\mu_{\cdot 00},\pi_{100}), \eta_{\Z}=(\mu_{\cdot \cdot 0},\mu_{\cdot 00},\pi_{110}), \eta_{\Y}=(\mu_{\cdot \cdot 0},\pi_{110})$ respectively. Denote the estimated nuisances by $\h \eta_{\W}, \h \eta_{\Z}, \h \eta_{\Y}$. The canonical gradients for the three estimands are
\begin{align}
    \psi_{\W}(d,w,z,y;\eta_{\W}) = &
    \left[\left(\ell(w,z,y) - {\mu}_{\cdot 00}(w)\right) \pi_{100}(w) - \ell(w,z,y)\right]\fracdzero + {\mu}_{\cdot 00}(w)\fracdone - \Lambda_\W \label{eq:influence_agg_baseline}\\
    \psi_{\Z}(d,w,z,y;\eta_{\Z}) = & 
    \left[\left(\ell(w,z,y) - \mu_{\cdot \cdot 0}(w,z)\right) \pi_{110}(w,z)\right]\fracdzero + \mu_{\cdot \cdot 0}(w,z)\fracdone \nonumber\\
    & - \left[\left(\ell(w,z,y) - \mu_{\cdot 00}(w)\right) \pi_{100}(w)\right]\fracdzero + \mu_{\cdot 00}(w)\fracdone - \Lambda_\Z \label{eq:influence_agg_cov}
    \\
    \psi_{\Y}(d,w,z,y;\eta_{\Y}) = & \left(\ell(w,z,y) - \mu_{\cdot \cdot 0}(w,z)\right)\fracdone - \left[\left(\ell(w,z,y) - \mu_{\cdot \cdot 0}(w,z)\right) \pi_{110}(w,z)\right]\fracdzero - \Lambda_\Y. \label{eq:influence_agg_outcome}
    \end{align}

\begin{theorem}[Theorem~\ref{thm:aggregate}]
    Under conditions outlined in Theorem~\ref{thm:aggregate}, the one-step corrected estimators for the aggregate decomposition terms, baseline, conditional covariate, and conditional outcome $\h{\Lambda}_\W$, $\h{\Lambda}_\Z$, and $\h{\Lambda}_\Y$, are asymptotically linear, i.e.
    \begin{align}
        \h{\Lambda}_{\N} -\Lambda_{\N} = \mathbb P_n \psi_{\N} + o_p(n^{-1/2}) \quad \forall \N \in \{\W,\Z,\Y\}.
    \end{align}
\end{theorem}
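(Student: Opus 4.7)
The plan is to follow the standard one-step correction argument sketched in the paper's overview, applied to each of the three estimands $\Lambda_{\W}, \Lambda_{\Z}, \Lambda_{\Y}$ in turn. Since all three have the structure of an average treatment effect (with $D$ playing the role of treatment), the argument for each reduces to the classical AIPW analysis, and the three cases are largely parallel. First I would verify that the stated influence functions $\psi_{\W}, \psi_{\Z}, \psi_{\Y}$ in \eqref{eq:influence_agg_baseline}--\eqref{eq:influence_agg_outcome} are indeed the canonical gradients of the respective estimands; this is a routine pathwise differentiation computation using the score of the joint law factored according to \eqref{eq:factor}, and I would just cite it rather than rederive it.

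Next, for each $\N \in \{\W,\Z,\Y\}$, I would instantiate the von-Mises expansion
\[
\hat{\Lambda}_{\N} - \Lambda_{\N} = (\mathbb{P}_n - \mathbb{P})\,\psi_{\N}(\cdot;\hat{\eta}_{\N}) + R_{\N}(\hat{\eta}_{\N}, \eta_{\N}),
\]
and split the leading term as $(\mathbb{P}_n - \mathbb{P})\psi_{\N}(\cdot;\eta_{\N}) + (\mathbb{P}_n - \mathbb{P})(\psi_{\N}(\cdot;\hat{\eta}_{\N}) - \psi_{\N}(\cdot;\eta_{\N}))$. The empirical-process drift $(\mathbb{P}_n - \mathbb{P})(\psi_{\N}(\cdot;\hat\eta_{\N}) - \psi_{\N}(\cdot;\eta_{\N}))$ is handled by the sample-splitting construction: conditional on the training partition, $\hat\eta_{\N}$ is fixed, and a standard conditional Chebyshev argument (as in \citet{Kennedy2022-to}) gives that this term is $O_p(n^{-1/2} \|\psi_{\N}(\cdot;\hat\eta_{\N}) - \psi_{\N}(\cdot;\eta_{\N})\|_{L_2(\mathbb{P})})$, which is $o_p(n^{-1/2})$ under the consistency assumptions on $\hat\mu$ and $\hat\pi$ together with boundedness of $\pi_{100}, \pi_{110}$.

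The substantive work is the remainder $R_{\N}(\hat\eta_{\N},\eta_{\N})$. For each $\N$ I would expand the difference between the one-step corrected functional evaluated at $\hat\eta_{\N}$ and the true $\Lambda_{\N}$, add and subtract the true conditional means weighted by the true density ratios, and collect terms so that everything factors as a product of nuisance errors. Concretely, for $\Lambda_{\W}$ this will yield
\[
R_{\W}(\hat\eta_{\W},\eta_{\W}) = \mathbb{P}_0\bigl(\hat\mu_{\cdot 00} - \mu_{\cdot 00}\bigr)\bigl(\hat\pi_{100} - \pi_{100}\bigr),
\]
which is $o_p(n^{-1/2})$ by hypothesis; the $\Lambda_{\Y}$ remainder has the analogous product form in $(\hat\mu_{\cdot\cdot 0} - \mu_{\cdot\cdot 0})(\hat\pi_{110} - \pi_{110})$; and the $\Lambda_{\Z}$ remainder is the difference of these two product terms and is therefore $o_p(n^{-1/2})$ under both assumed rate conditions. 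This double-robustness-style cancellation is the only step with any real content; everything else is bookkeeping.

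The main obstacle I expect is cleanly deriving the remainder $R_{\N}$ in product form for $\Lambda_{\Z}$, since its one-step estimator is built from four terms involving two different outcome models and two different density ratios, so one has to be careful when adding and subtracting $\mu_{\cdot\cdot 0}, \mu_{\cdot 00}, \pi_{110}, \pi_{100}$ to expose the product structure rather than getting linear-in-error residuals. Once that algebraic step is done, Cauchy--Schwarz on each product together with the stated $o_p(n^{-1/2})$ product-rate assumptions finishes the argument and yields the asymptotic linearity expansion claimed in the theorem.
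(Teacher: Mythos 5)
Your proposal is correct and takes essentially the same route as the paper: the paper simply observes that each of $\Lambda_{\W},\Lambda_{\Z},\Lambda_{\Y}$ is a sum of simple sample means and covariate-shifted means of the ATE form, and cites the standard AIPW asymptotic-linearity result, whose underlying argument is exactly the von-Mises expansion, sample-splitting empirical-process bound, and doubly-robust product remainder that you spell out. Your identification of the remainders (product form for $\W$ and $\Y$, difference of the two products for $\Z$) matches the rate conditions assumed in the theorem, so nothing is missing.
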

\begin{proof}
    The estimands $\Lambda_\W,\Lambda_\Z,\Lambda_\Y$ have similarities to the standard average treatment effect (ATE) in the causal inference literature (see \citep[][Example 2]{Kennedy2022-to}. Hence, the estimators and their asymptotic properties directly follow. 
    For treatment $T$, outcome $O$, and confounders $C$, the mean outcome under $T=1$ among the population with $T=0$ is identified as
    \begin{align}
    \phi = \int o p(o|c,t=1)p(c|t=0) do dc
    \label{eq:ate_def}
    \end{align}
and its one-step corrected estimator can be derived from the canonical gradient of $\phi$, which takes the following form after plugging in the estimates of the nuisance models:
    $$
    \hat \phi_n
    = \mathbb{P}_{n} \left\{\frac{\mathbbm{1}\{T=1\}}{\Pr(T=1)}\hat{\pi}(c) \left( O - \hat{\mu}_1(c) \right) + \frac{\mathbbm{1}\{T=0\}}{\Pr(T=0)} \hat{\mu}_1(c) \right\}
    $$
    satisfies
    $$
    \hat\phi_n - \phi = 
    \mathbb{P}_n \left\{
    \frac{\mathbbm{1}\{T=1\}}{\Pr(T=1)} {\pi}(c) \left( O - {\mu}_1(c) \right) + \frac{\mathbbm{1}\{T=0\}}{\Pr(T=0)} {\mu}_1(c)
    -\phi \right\} + o_p(n^{-1/2})
    $$
    where $\mu_1(c)=\E[o|c,t=1]$ and $\pi(c)=p(c|t=0)/p(c|t=1)$
    as long as the following conditions hold:
    \begin{itemize}
        \item $p(c|t=1)>0$ almost everywhere such that the density ratios $\pi(c)$ are well-defined and bounded,
        \item $\mathbb{P}_1 (\h{\mu}_1 - \mu_1)(\h{\pi} - \pi) = o_p(n^{-1/2})$.
    \end{itemize}

    We establish the estimators and their influence functions by showing that they can all be viewed as mean outcomes of the form \eqref{eq:ate_def}.

    \textbf{Baseline term $\Lambda_\W$}.
    The first term $\mathbb{E}_{100}\left[\ell(w,z,y) \right]$ is a mean outcome with respect to $p(\ell(w,z,y)|w,d=0)p(w|d=1)$, which is the same as that in \eqref{eq:ate_def} but with $\ell(w,z,y)$ as the outcome, $w$ as the confounder, and $d$ as the (flipped) treatment.
    The second term $\mathbb{E}_{000}\left[\ell(w,z,y) \right]$ is a simple average over $D=0$ population whose influence function is the $\ell(w,z,y)$ itself.
    
\textbf{Conditional covariate term $\Lambda_\Z$}.
    First term $\mathbb{E}_{110}\left[\ell(w,z,y)\right]$ is the mean outcome with respect to $p(\ell(w,z,y)|w,z,d=0)p(w,z|d=1)$, where the chief difference is $(w,z)$ is the confounder.
    Second term $\mathbb{E}_{1 0 0}\left[\ell(w,z,y)\right]$ is also a mean outcome, as discussed above.

    \textbf{Conditional outcome term $\Lambda_\Y$}. 
    First term $\mathbb{E}_{111}\left[\ell(w,z,y)\right]$ is a simple average over the $D=1$ population.
\end{proof}

\subsection{Value of $s$-partial conditional covariate shifts}
\label{sec:result_detailed_cov}
Let nuisance parameters in the one-step estimator $v_{\Z,s}^{\num}$ be denoted $\eta_{\Z,s}^{\num} = (\mu_{\cdot s 0}, \mu_{\cdot 10},\mu_{\cdot 0_{-s}0},\mu_{001},\mu_{\cdot \cdot 0}, \pi_{1s0}, \pi_{110})$ and the set of estimated nuisances by $\h \eta_{\Z,s}^{\num}$.
The canonical gradient of $v_{\Z}^{\num}(s)$ is 
\begin{align}
    \psi_{\Z,s}^{\num}(D,W,Z,Y; \eta_{\Z,s}^{\num}) &= (\mu_{\cdot s 0}(W) - \mu_{\cdot 1 0}(W))^2\frac{\mathbbm{1}\{D=1\}}{\Pr(D=1)} \nonumber\\
    & + 2(\mu_{\cdot s 0}(W) - \mu_{\cdot 1 0}(W)) (\ell - {\mu}_{\cdot0_{-s}0}(W,Z_s)){\pi}_{1s0}(W,Z_s) \frac{\mathbbm{1}\{D=0\}}{\Pr(D=0)} \nonumber\\
    & - 2(\mu_{\cdot s 0}(W) - \mu_{\cdot 1 0}(W)) (\ell - {\mu}_{\cdot\cdot 0}(W,Z)){\pi}_{110}(W,Z)\frac{\mathbbm{1}\{D=0\}}{\Pr(D=0)} \nonumber\\    
    &+ 2(\mu_{\cdot s 0}(W) - \mu_{\cdot 1 0}(W)) ({\mu}_{\cdot0_{-s}0}(W,Z_s) - {\mu}_{\cdot s0}(W))
    \frac{\mathbbm{1}\{D=1\}}{\Pr(D=1)} \nonumber\\
    &- 2 (\mu_{\cdot s 0}(W) - \mu_{\cdot 1 0}(W)) ({\mu}_{\cdot \cdot 0}(W,Z) - {\mu}_{\cdot 10}(W))
    \frac{\mathbbm{1}\{D=1\}}{\Pr(D=1)} \nonumber\\
    & - v_{\Z}^{\num}(s)\label{eq:influence_num_cov}.
\end{align}

\begin{lemma}
    Under Condition~\ref{cond:condcov}, $\h{v}_{\Z}^{\num}(s)$ satisfies
    \begin{align}
        \h{v}_{\Z}^{\num}(s) - v_{\Z}^{\num}(s) &= \mathbb{P}_n \psi_{\Z,s}^{\num}(D,W,Z,Y; \eta_{\Z,s}^{\num}) + o_p(n^{-1/2})\\
        \h v^{\denom}_\Z - v^{\denom}_\Z &= \mathbb{P}_n \psi_{\Z,\emptyset}^{\num}(D,W,Z,Y;\eta_{\Z,s}^{\num}) + o_p(n^{-1/2})
\end{align}
    \label{lemma:numerator_cond_cov}
\end{lemma}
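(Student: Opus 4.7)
The plan is to apply the standard von Mises linearization for the one-step corrected estimator and control the resulting pieces using sample splitting and the mixed-bias conditions in Condition~\ref{cond:condcov}. First, I would identify $\psi_{\Z,s}^{\num}(\cdot;\eta_{\Z,s}^{\num})$ given in \eqref{eq:influence_num_cov} as the canonical gradient of the functional $v_{\Z}^{\num}(s) = \mathbb{E}_{1}\!\left[(\mu_{\cdot s 0}(W) - \mu_{\cdot 1 0}(W))^{2}\right]$ at $\mathbb{P}$, and observe that the proposed estimator \eqref{eq:value_num_cov} is exactly $v_{\Z}^{\num}(s)(\hat{\mathbb{P}}) + \mathbb{P}_{n}\psi_{\Z,s}^{\num}(o;\hat{\mathbb{P}})$. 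The von Mises expansion then gives
$$\hat v_{\Z}^{\num}(s) - v_{\Z}^{\num}(s) = (\mathbb{P}_{n}-\mathbb{P})\,\psi_{\Z,s}^{\num}(o;\hat{\mathbb{P}}) + R(\hat{\mathbb{P}},\mathbb{P}),$$
where $R$ is the second-order remainder.

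Next I would split the empirical process term via
$$(\mathbb{P}_{n}-\mathbb{P})\psi_{\Z,s}^{\num}(o;\hat{\mathbb{P}}) = (\mathbb{P}_{n}-\mathbb{P})\psi_{\Z,s}^{\num}(o;\mathbb{P}) + (\mathbb{P}_{n}-\mathbb{P})\bigl(\psi_{\Z,s}^{\num}(o;\hat{\mathbb{P}}) - \psi_{\Z,s}^{\num}(o;\mathbb{P})\bigr).$$
The first summand is the desired $\mathbb{P}_{n}\psi_{\Z,s}^{\num}(o;\eta_{\Z,s}^{\num})$ contribution. For the drift term, I would invoke the training/evaluation partition: because $\hat\eta_{\Z,s}^{\num}$ is built from $\Tr$ and independent of $\Tt$, Chebyshev's inequality conditional on $\Tr$ reduces the claim to $\|\psi_{\Z,s}^{\num}(\cdot;\hat{\mathbb{P}}) - \psi_{\Z,s}^{\num}(\cdot;\mathbb{P})\|_{L_{2}(\mathbb{P})} = o_{p}(1)$, which follows from the consistency of each nuisance estimator, the boundedness of $\pi_{1s0}$ and $\pi_{110}$, and the boundedness assumption on $\mu_{\cdot s 0}(W)-\mu_{\cdot 10}(W)$ in Condition~\ref{cond:condcov}.

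The bulk of the work is unpacking $R(\hat{\mathbb{P}},\mathbb{P})$. Expanding the square $(\hat\mu_{\cdot s 0} - \hat\mu_{\cdot 1 0})^{2} - (\mu_{\cdot s 0} - \mu_{\cdot 1 0})^{2}$, taking $\mathbb{P}$-expectation of each correction summand in \eqref{eq:influence_num_cov}, and applying iterated expectations to rewrite conditional-mean identities such as $\mathbb{E}_{0}[(\ell - \hat\mu_{\cdot 0_{-s}0}(W,Z_{s}))\mid W,Z_{s}] = \mu_{\cdot 0_{-s}0}(W,Z_{s}) - \hat\mu_{\cdot 0_{-s}0}(W,Z_{s})$, the linear-in-error contributions cancel by construction. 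What remains can be organized into (i) two pure quadratic-error terms $\mathbb{P}_{1}(\hat\mu_{\cdot s 0}-\mu_{\cdot s 0})^{2}$ and $\mathbb{P}_{1}(\hat\mu_{\cdot 1 0}-\mu_{\cdot 1 0})^{2}$, controlled by bullets four and five of Condition~\ref{cond:condcov}, and (ii) two mixed-bias cross terms $\mathbb{P}_{0}(\hat\mu_{\cdot 0_{-s}0}-\mu_{\cdot 0_{-s}0})(\hat\pi_{1s0}-\pi_{1s0})$ and $\mathbb{P}_{0}(\hat\mu_{\cdot\cdot 0}-\mu_{\cdot\cdot 0})(\hat\pi_{110}-\pi_{110})$, both $o_{p}(n^{-1/2})$ by bullets two and three. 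Thus $R(\hat{\mathbb{P}},\mathbb{P}) = o_{p}(n^{-1/2})$, giving the asymptotic linear expansion.

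The second claim is immediate: plugging $s=\emptyset$ makes $p_{s}(Z\mid W) = p_{0}(Z\mid W)$, hence $\mu_{\cdot s 0}(W)=\mu_{\cdot 0 0}(W)$ and $v_{\Z}^{\num}(\emptyset) = \mathbb{E}_{1}[\Delta_{\cdot 1 0}^{2}(W)] = v_{\Z}^{\denom}$, so the first bullet at $s=\emptyset$ supplies the expansion for $\hat v_{\Z}^{\denom}$. \textbf{Main obstacle:} the bookkeeping in step three. With six nuisance functions and five correction summands, writing out $R(\hat{\mathbb{P}},\mathbb{P})$ cleanly and matching each surviving product to exactly one bullet of Condition~\ref{cond:condcov} is tedious; the key algebraic lever is that each correction term is designed so that the $\mathbb{P}$-expectation of its linear-in-error part cancels the linear error introduced by the plug-in, leaving only products.
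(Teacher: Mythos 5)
Your proposal is correct and follows essentially the same route as the paper: the same three-way decomposition (CLT term, empirical-process drift term handled by sample splitting and consistency of the nuisances, and second-order remainder), with the remainder reduced via iterated expectations to the mixed-bias cross terms and quadratic error terms that are each killed by a bullet of Condition~\ref{cond:condcov}, and the denominator claim obtained by specializing to $s=\emptyset$. The only cosmetic difference is that the paper writes the surviving quadratic remainder as $-\mathbb{P}_1\bigl((\h{\mu}_{\cdot s0}-\mu_{\cdot s0})-(\h{\mu}_{\cdot 10}-\mu_{\cdot 10})\bigr)^2$ rather than as two separate squared-error terms, which is equivalent under the same conditions.
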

\begin{proof}
    Consider the following decomposition
    \begin{align}
        &\h{v}_{\Z}^{\num}(s) - v_{\Z}^{\num}(s) \nonumber \\
        = &(\mathbb{P}_n - \mathbb{P})\psi_{\Z}^{\num}(D,W,Z,Y;\eta_{\Z,s}^{\num})
        \label{eq:b1_cond_cov_num}
        \\
        & + (\mathbb{P}_n - \mathbb{P})(\psi_{\Z}^{\num}(D,W,Z,Y;\h\eta_{\Z,s}^{\num})-\psi_{\Z}^{\num}(W,Z,Y;\eta_{\Z,s}^{\num}))
        \label{eq:b2_cond_cov_num}
        \\
        & + \mathbb{P}(\psi_{\Z}^{\num}(D,W,Z,Y;\h\eta_{\Z,s}^{\num}) - \psi_{\Z}^{\num}(D,W,Z,Y;\eta_{\Z,s}^{\num}))
        \label{eq:b3_cond_cov_num}
    \end{align}
We note that \eqref{eq:b2_cond_cov_num} converges to a normal distribution per CLT assuming the variance of $\psi_{\Z,s}^{\num}$ is finite. The empirical process term \eqref{eq:b2_cond_cov_num} is asymptotically negligible, as the nuisance parameters $\eta_{\Z,s}^{\num}$ are estimated using a separate training data split from the evaluation data and \citep[][Lemma 1]{Kennedy2022-to} states that
$$(\mathbb P_n - \mathbb P)(\psi_{\Z,s}^{\num}(D,W,Z,Y;\h \eta_{\Z,s}^{\num})-\psi_{\Z,s}^{\num}(D,W,Z,Y;\eta_{\Z,s}^{\num}) = o_p(n^{-1/2})$$
as long as estimators for all nuisance parameters are consistent.
We now establish that the remainder term \eqref{eq:b3_cond_cov_num} is also asymptotically negligible. Integrating with respect to $Y$, we have that
\begin{align}
    \eqref{eq:b3_cond_cov_num} =& \mathbb P \ 2(\h{\mu}_{\cdot s0}(W)-\h{\mu}_{\cdot10}(W))
    \begin{aligned}[t]
        \times \Big{(}& (\mu_{\cdot0_{-s}0}(Z_s,W) - \h{\mu}_{\cdot0_{-s}0}(Z_s,W))\h{\pi}_{1s0}(W,Z_s)\frac{\mathbbm{1}\{D=0\}}{p(D=0)}\\
        &+ (\h{\mu}_{\cdot 0_{-s}0}(Z_s,W) - \h{\mu}_{\cdot s0}(W))\frac{\mathbbm{1}\{D=1\}}{p(D=1)}\\
        &- (\mu_{\cdot\cdot0}(W,Z) - \h{\mu}_{\cdot\cdot0}(W,Z))\h{\pi}_{110}(W,Z)\frac{\mathbbm{1}\{D=0\}}{p(D=0)}\\        
        &- (\h{\mu}_{\cdot\cdot0}(W,Z) - \h{\mu}_{\cdot10}(W))\frac{\mathbbm{1}\{D=1\}}{p(D=1)}\Big{)}\\
    \end{aligned}\\
    &+ \mathbb P ((\h{\mu}_{\cdot s0}(W)-\h{\mu}_{\cdot10}(W))^2 - (\mu_{\cdot s0}(W)-\mu_{\cdot10}(W))^2) \frac{\mathbbm{1}\{D=1\}}{p(D=1)}\\
    = & \mathbb P \ 2(\h{\mu}_{\cdot s0}(W)-\h{\mu}_{\cdot10}(W))
    \begin{aligned}[t]
        \times \Big{(}& (\mu_{\cdot0_{-s}0}(Z_s,W) - \h{\mu}_{\cdot0_{-s}0}(Z_s,W))\left(\h{\pi}_{1s0}(W,Z_s)-{\pi}_{1s0}(W,Z_s)\right)\frac{\mathbbm{1}\{D=0\}}{p(D=0)}\\
        & + (\mu_{\cdot0_{-s}0}(Z_s,W) - \h{\mu}_{\cdot0_{-s}0}(Z_s,W)){\pi}_{1s0}(W,Z_s)\frac{\mathbbm{1}\{D=0\}}{p(D=0)}\\
        &+ (\h{\mu}_{\cdot 0_{-s}0}(Z_s,W) - \h{\mu}_{\cdot s0}(W))\frac{\mathbbm{1}\{D=1\}}{p(D=1)}\\
        &- (\mu_{\cdot\cdot0}(W,Z) - \h{\mu}_{\cdot\cdot0}(W,Z))\left(\h{\pi}_{110}(W,Z)-{\pi}_{110}(W,Z) \right)\frac{\mathbbm{1}\{D=0\}}{p(D=0)}\\
        &- (\mu_{\cdot\cdot0}(W,Z) - \h{\mu}_{\cdot\cdot0}(W,Z)){\pi}_{110}(W,Z)\frac{\mathbbm{1}\{D=0\}}{p(D=0)}\\    
        &- (\h{\mu}_{\cdot\cdot0}(W,Z) - \h{\mu}_{\cdot10}(W))\frac{\mathbbm{1}\{D=1\}}{p(D=1)}\Big{)}\\
    \end{aligned}\\
    &+ \mathbb P ((\h{\mu}_{\cdot s0}(W)-\h{\mu}_{\cdot10}(W))^2 - (\mu_{\cdot s0}(W)-\mu_{\cdot10}(W))^2) \frac{\mathbbm{1}\{D=1\}}{p(D=1)}
\end{align}
From convergence conditions in Condition~\ref{cond:condcov}, this simplifies to
\begin{align}
    \eqref{eq:b3_cond_cov_num} = & \mathbb P \ 2(\h{\mu}_{\cdot s0}(W)-\h{\mu}_{\cdot10}(W))
    \begin{aligned}[t]
        \times \Big{(}& (\mu_{\cdot0_{-s}0}(Z_s,W) - \h{\mu}_{\cdot0_{-s}0}(Z_s,W)){\pi}_{1s0}(W,Z_s)\frac{\mathbbm{1}\{D=0\}}{p(D=0)}\\
        &+ (\h{\mu}_{\cdot 0_{-s}0}(Z_s,W) - \h{\mu}_{\cdot s0}(W))\frac{\mathbbm{1}\{D=1\}}{p(D=1)}\\
        &- (\mu_{\cdot\cdot0}(W,Z) - \h{\mu}_{\cdot\cdot0}(W,Z)){\pi}_{110}(W,Z)\frac{\mathbbm{1}\{D=0\}}{p(D=0)}\\    
        &- (\h{\mu}_{\cdot\cdot0}(W,Z) - \h{\mu}_{\cdot10}(W))\frac{\mathbbm{1}\{D=1\}}{p(D=1)}\Big{)}\\
    \end{aligned}\\
    &+ \mathbb P ((\h{\mu}_{\cdot s0}(W)-\h{\mu}_{\cdot10}(W))^2 - (\mu_{\cdot s0}(W)-\mu_{\cdot10}(W))^2) \frac{\mathbbm{1}\{D=1\}}{p(D=1)}\\
    & + o_p(n^{-1/2}),
\end{align}
Given the true density ratios, we can further simplify the expectations over $D=0$ weighted by the density ratios in the expression above to expectations over $D=1$. By  definition of $\mu_{\cdot0_{-s}0}(Z_s,W)$ in \eqref{eq:mu_0ms0_cond_cov} and ${\mu}_{\cdot s0}(W)$ in  \eqref{eq:mu_dots0_cond_cov} and the definition of $\mu_{\cdot\cdot0}(W,Z)$ and $\mu_{\cdot 10}(W)$ in Section~\ref{sec:agg_est_inf},  \eqref{eq:b3_cond_cov_num} simplifies to
\begin{align*}
\eqref{eq:b3_cond_cov_num} = & \mathbb P_1 \ 2(\h{\mu}_{\cdot s0}(W)-\h{\mu}_{\cdot10}(W))
        (\mu_{\cdot s0}(W) - \h{\mu}_{\cdot s0}(W))\\
    & - \mathbb P_1 \ 2(\h{\mu}_{\cdot s0}(W)-\h{\mu}_{\cdot10}(W))(\mu_{\cdot 10}(W) - \h{\mu}_{\cdot10}(W))\\
    &+ \mathbb P_1 (\h{\mu}_{\cdot s0}(W)-\h{\mu}_{\cdot10}(W))^2 - ({\mu}_{\cdot s0}(W)-{\mu}_{\cdot10}(W))^2)\\
    & + o_p(n^{-1/2}),
\end{align*}
which is $o_p(n^{-1/2})$ as long as the convergence conditions in Condition~\ref{cond:condcov} hold.

As the denominator $v^{\denom}_\Z$ is equal to the numerator $v^{\num}_\Z(\emptyset)$, it follows that the one-step estimator for the denominator $\h v^{\denom}_{\Z}$ is asymptotically linear with influence function $\psi_{\Z,\emptyset}^{\num}$.
\end{proof}

\begin{proof}[Proof for Theorem~\ref{thrm:cond_cov_detailed}]
    Combining Lemma~\ref{lemma:numerator_cond_cov} and the Delta method \citep[][Theorem 3.1]{Van_der_Vaart1998-cj}, the estimator $\h v_{\Z}(s)=\h v_{\Z}^{\num}(s)/\h v_{\Z}^{\denom}$ is asymptotically linear
    \begin{align*}
        \frac{\hat v_{\Z}^{\num}(s)}{\hat{v}_{\Z}^{\denom}}
    - \frac{v_{\Z}^{\num}(s)}{v_{\Z}^{\denom}}
    = \mathbbm{P}_n \psi_{\Z,s}(D,W,Z,Y;\eta_{\Z,s}^\num, \eta_{\Z}^\denom) + o_p(n^{-1/2}),
    \end{align*}
    with influence function
    \begin{align}
    \label{eq:influence_detailed_cov}
    \psi_{\Z,s}(D,W,Z,Y;\eta_{\Z,s}^\num, \eta_{\Z,s}^\denom) =\frac{1}{v_{\Z}^{\denom}} \psi_{\Z,s}^{\num}(D,W,Z,Y;\eta_{\Z,s}^\num) - \frac{v_{\Z}^{\num}(s)}{(v_{\Z}^{\denom})^2} \psi_{\Z}^{\denom}(D,W,Z,Y;\eta_{\Z}^{\denom}),
    \end{align}
    where $\psi_{\Z,s}^{\num}(D,W,Z,Y; \eta_{\Z,s}^{\num})$ is defined in \eqref{eq:influence_num_cov} and
    $\psi_{\Z}^{\denom}(D,W,Z,Y;\eta_{\Z}^{\denom}) = \psi_{\Z,\emptyset}^{\num}(D,W,Z,Y; \eta_{\Z,\emptyset}^{\num})$.
    
    Accordingly, the estimator asymptotically follows the normal distribution,
    \begin{align}
        \sqrt{n}\left(
        \h v_{\Z}(s) - v_{\Z}(s)
        \right) \rightarrow_d
        N(0, \text{var}(\psi_{\Z,s}(D,W,Z,Y;\eta_{\Z,s}^\num,\eta_{\Z,s}^{\denom}))
    \end{align}
\end{proof}

\subsection{Value of $s$-partial conditional outcome shifts}
\label{sec:result_detailed_outcome}
Let the nuisance parameters in $v_{\Y, \binned}^{\num}$ be denoted
${\eta}_{\Y,s}^{\num}=(Q_{\binned},\mu_{\cdot\cdot1},\mu_{\cdot\cdot s}, \pi)$ and its estimate as $\hat{\eta}_{\Y,s}^{\num}$.

We represent the one-step corrected estimator for $v_{\Y,\binned}^{\num}(s)$ as the V-statistic
\begin{align}
\hat{v}_{\Y,\binned}^{\num}(s)= & \mathbb{P}_{1,n} \tilde{\mathbb{P}}_{1,n} \left(\hat{\mu}_{\cdot\cdot 1}(W,Z)-\hat{\mu}_{\cdot\cdot s}(W,Z)\right)^2\\
    & + 2 \mathbb{P}_{1,n} \tilde{\mathbb{P}}_{1,n} \left(\hat{\mu}_{\cdot\cdot 1}(W,Z)-\hat{\mu}_{\cdot\cdot s}(W,Z)\right) (\ell - \mu_{\cdot \cdot 1}(W,Z))\\
    & - 2\mathbb{P}_{1,n} \tilde{\mathbb{P}}_{1,n}\left(\hat{\mu}_{\cdot\cdot 1}(W,Z_s,\tilde{Z}_{-s})-\hat{\mu}_{\cdot\cdot s}(W,Z_s,\tilde{Z}_{-s})\right)
    (\ell( W, Z_s,\tilde{Z}_{-s}, Y) - \mu_{\cdot \cdot s}(W,Z_s,\tilde{Z}_{-s}))\pi(\tilde{Z}_{-s},Z_s,W, q_{\binned}(W,Z))\\
= & \mathbb{P}_{1,n} \tilde{\mathbb{P}}_{1,n}h\left(W,Z,Y,\tilde{W},\tilde{Z},\tilde{Y};\hat{\eta}_{\Y,s}^{\num}\right).
\label{eq:outcome_h}
\end{align}

In more detail, the conditions in Theorem~\ref{theorem:conditional_outcome_detailed} are as follows.
\begin{condition}
For variable subset $s$, suppose the following hold
\begin{itemize}
    \item $\pi(W,Z_s,Z_{-s},Q_{\binned})$ is bounded
    \item $\hat{\pi}$ is consistent
    \item $\mathbb{P}_1 \left(\hat \mu_{\cdot \cdot 0} - \mu_{\cdot \cdot 0}\right)^2 = o_p(n^{-1/2})$
    \item $\mathbb{P}_1 \left(\hat \mu_{\cdot \cdot 1} - \mu_{\cdot \cdot 1}\right)^2 = o_p(n^{-1/2})$
    \item $\mathbb{P}_1 \left(\hat \mu_{\cdot \cdot s} - \mu_{\cdot \cdot s}\right)^2 = o_p(n^{-1/2})$
    \item $\mathbb{P}_1 \left(\hat q_\binned - q_\binned\right)^2 = o_p(n^{-1})$
    \item $\mathbb{P}_1 \left(\hat \mu_{\cdot \cdot s} - \mu_{\cdot \cdot s}\right)\left(\hat \pi - \pi \right) = o_p(n^{-1/2})$.
\end{itemize}
\label{cond:conditional_outcome_detailed}
\end{condition}

\begin{lemma}
    Assuming Condition~\ref{cond:conditional_outcome_detailed} holds, $\h v_{\Y,\binned}^\num$ is an asymptotically linear estimator for $v_{\Y,\binned}^\num$, i.e. 
    \begin{align}
        \h{v}_{\Y,\binned}^{\num}(s) - v_{\Y,\binned}^{\num}(s) = \mathbb P_{1,n} \psi_{\Y,s}^{\num}(D,W,Z,Y; \eta_{\Y,s}^{\num}) + o_p(n^{-1/2}),
    \end{align}
    with influence function
    \begin{align}
        \psi_{\Y,s}^{\num}\left(d,w,z,y;\eta_{\Y,s}^{\num}\right) = & (\mu_{\cdot\cdot 1}(w,z) - \mu_{\cdot\cdot s}(w,z))^{2}\nonumber\\
         & +2(\mu_{\cdot\cdot 1}(w,z) - \mu_{\cdot\cdot s}(w,z))
         \left[\ell(w,z,y)-\mu_{\cdot\cdot 1}(w,z)\right]\nonumber\\
         & -2\mathbb{P}_1\left(\mu_{\cdot\cdot 1}(w,z_{s},Z_{-s})-\mu_{\cdot \cdot s}(w,z_{s},Z_{-s})\right)\left[\ell\left(w,z_s,Z_{-s},y)\right)-\mu_{\cdot \cdot s}(w,z_{s},Z_{-s})\right]
         \pi\left({Z}_{-s}, z_{s},w,q_{\binned}(w,z) \right)\nonumber\\
         & - v_{\Y,\binned}^{\num}(s)\label{eq:influence_cond_outcome_num}.
    \end{align}
    \label{lemma:numerator_cond_outcome}
\end{lemma}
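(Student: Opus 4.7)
The plan is to view $\hat{v}_{\Y,\binned}^{\num}(s)$ as a V-statistic of order two with data-dependent kernel $h(\cdot,\cdot;\hat{\eta}_{\Y,s}^{\num})$ from \eqref{eq:outcome_h}, and to apply the one-step correction bookkeeping of \citet{Kennedy2022-to} adapted to V-statistics.

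\textbf{Step 1 (V-statistic reduction).} The V-statistic differs from its U-statistic counterpart by a diagonal term of size $n_{\Tt}^{-1}\mathbb{P}_{1,n}^{\Tt}h(O,O;\hat{\eta}_{\Y,s}^{\num}) = O_p(n^{-1})$ under bounded moments, which is negligible at the $n^{-1/2}$ scale. Conditional on the training sample (so $\hat{\eta}_{\Y,s}^{\num}$ is treated as fixed), the Hoeffding decomposition of the resulting U-statistic writes it as its mean $V(\hat{\eta}_{\Y,s}^{\num}) := \mathbb{P}_1\tilde{\mathbb{P}}_1 h(O,\tilde O;\hat{\eta}_{\Y,s}^{\num})$, a H\'ajek (linear) projection, and a degenerate second-order piece of order $O_p(n^{-1})$. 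By construction of the one-step correction, the H\'ajek projection equals $\mathbb{P}_{1,n}^{\Tt}\psi_{\Y,s}^{\num}(\cdot;\hat{\eta}_{\Y,s}^{\num})$ plus a deterministic constant.

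\textbf{Step 2 (von Mises expansion and empirical process).} Writing
\begin{align*}
\hat{v}_{\Y,\binned}^{\num}(s) - v_{\Y,\binned}^{\num}(s)
 &= (\mathbb{P}_{1,n}^{\Tt} - \mathbb{P}_1)\psi_{\Y,s}^{\num}(\cdot;\eta_{\Y,s}^{\num}) \\
 &\quad + (\mathbb{P}_{1,n}^{\Tt} - \mathbb{P}_1)\bigl(\psi_{\Y,s}^{\num}(\cdot;\hat{\eta}_{\Y,s}^{\num}) - \psi_{\Y,s}^{\num}(\cdot;\eta_{\Y,s}^{\num})\bigr) \\
 &\quad + R\bigl(\hat{\eta}_{\Y,s}^{\num},\eta_{\Y,s}^{\num}\bigr) + o_p(n^{-1/2}),
\end{align*}
the trailing $o_p(n^{-1/2})$ absorbs the V-to-U correction and the degenerate Hoeffding piece. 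The empirical process term on the second line is $o_p(n^{-1/2})$ by sample splitting combined with consistency of the nuisance estimators via Lemma~1 of \citet{Kennedy2022-to}. All that remains is to show that the von Mises remainder $R = o_p(n^{-1/2})$.

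\textbf{Step 3 (remainder analysis and main obstacle).} Integrating $R$ over $Y$ and regrouping produces: (i) squared errors $\mathbb{P}_1(\hat{\mu}_{\cdot\cdot 1} - \mu_{\cdot\cdot 1})^2$ and $\mathbb{P}_1(\hat{\mu}_{\cdot\cdot s} - \mu_{\cdot\cdot s})^2$ arising from the quadratic term in the kernel, both $o_p(n^{-1/2})$ under the assumed $o_p(n^{-1/4})$ outcome-model rates; (ii) a doubly robust cross term $\mathbb{P}_1(\hat{\mu}_{\cdot\cdot s} - \mu_{\cdot\cdot s})(\hat{\pi} - \pi)$ that is $o_p(n^{-1/2})$ directly by Condition~\ref{cond:conditional_outcome_detailed}; and (iii) an error caused by $\hat{q}_{\binned} \ne q_{\binned}$, since both $\pi$ and the definition of $p_s$ condition on the discrete event $\{q_{\binned}(W,Z)=r\}$. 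The main obstacle is piece (iii): $q$ enters the estimand only through the nonsmooth indicator $\mathbbm{1}\{q_{\binned}(W,Z)=r\}$, so the usual trick of bounding the remainder by products of smooth $L_2$ rates does not directly apply. To handle it I would exploit the gap structure of the bin grid: by Condition~\ref{cond:bin_edge}, both $\hat{q}_{\binned}$ and $q_{\binned}$ take values in $\{0,1/B,\ldots,1\}$, so $\mathbbm{1}\{\hat{q}_{\binned} \ne q_{\binned}\} \le B^2(\hat{q}_{\binned} - q_{\binned})^2$, and the fast rate \eqref{eq:bin_conv} then yields $\mathbb{P}_1\mathbbm{1}\{\hat{q}_{\binned} \ne q_{\binned}\} = o_p(n^{-1})$. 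Combined with the assumed boundedness of $\pi$ and the loss, Cauchy--Schwarz bounds the contribution of (iii) at the required $o_p(n^{-1/2})$ rate, closing the argument.
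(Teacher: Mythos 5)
Your proposal follows essentially the same route as the paper's proof: the same H\'ajek-projection reduction of the V-statistic, the same decomposition into a CLT term, an empirical-process term killed by sample splitting plus Lemma~1 of \citet{Kennedy2022-to}, and a von Mises remainder that collapses into squared outcome-model errors, the doubly robust cross term $\mathbb{P}_1(\hat\mu_{\cdot\cdot s}-\mu_{\cdot\cdot s})(\hat\pi-\pi)$, and a binning error. Your Step~3(iii) handling of the binning piece (grid-valuedness gives $\mathbbm{1}\{\hat q_{\binned}\ne q_{\binned}\}\le B^2(\hat q_{\binned}-q_{\binned})^2$, hence $o_p(n^{-1})$ and the required rate) is actually more explicit than the paper, which asserts that term is $o_p(n^{-1/2})$ under \eqref{eq:bin_conv} without spelling out the argument.
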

\begin{proof}
Defining the symmetrized version of $h$ in \eqref{eq:outcome_h} as $h_{sym}(W,Z,Y,\tilde W,\tilde Z,\tilde Y)=\frac{h(W,Z,Y,\tilde W,\tilde Z,\tilde Y)+h(\tilde W,\tilde Z,\tilde Y,W,Z,Y)}{2}$,
we rewrite the estimator as 
\[
\hat{v}_{\Y,\binned}^{\num}(s)
=\mathbb{P}_{1,n} \tilde{\mathbb{P}}_{1,n}
h_{sym}\left(W,Z,Y,\tilde{W},\tilde{Z},\tilde{Y};\hat{\eta}_{\Y,s}^{\num}\right).
\]

Per Theorem 12.3 in \citep{Van_der_Vaart1998-cj}, the H\'ajek projection of $\hat{v}_{\Y,\binned}^{\num}(s)$ is
\begin{align*}
\hat{u}_{\Y,\binned}^{num}(s) & =\sum_{i=1}^{n}\mathbb{P}_{1}\left[\mathbb{P}_{1,n}\tilde{\mathbb{P}}_{1,n}h_{sym}\left(W,Z,Y,\tilde{W},\tilde{Z},\tilde{Y};\hat \eta_{\Y,s}^{\num}\right)-\bar{\hat{v}}_{Y}^{num}(s)\mid X_{i},Y_{i}\right]\\
 & =\sum_{i=1}^{n}\mathbb{P}_{1}\left[h_{sym}\left(X_{i},Y_{i},X^{(2)},Y^{(2)};\hat \eta_{\Y,s}^{\num}\right)-\bar{\hat{v}}_{Y}^{num}(s)\mid X_{i},Y_{i}\right]\\
 & =\sum_{i=1}^{n}{h}_{sym,1}\left(X_{i},Y_{i};\hat \eta_{\Y,s}^{\num} \right)
\end{align*}
where $\bar{\hat{v}}_{Y}^{num}(s) = \mathbb{P}_1\tilde{\mathbb{P}}_1 h_{sym}\left(W,Z,Y,\tilde{W},\tilde{Z},\tilde{Y};\hat \eta_{\Y,s}^{\num}\right)$.

Consider the decomposition

\begin{align}
\hat{v}_{\Y,\binned}^{\num}(s)-v_{\Y,\binned}^{\num}(s)= & \mathbb{P}_{1,n}\tilde{\mathbb{P}}_{1,n}h_{sym}\left(W,Z,Y,\tilde{W},\tilde{Z},\tilde{Y};\hat{\eta}_{\Y,s}^{\num}\right)-\mathbb{P}_{1} \tilde{\mathbb{P}}_{1}h_{sym}\left(W,Z,Y,\tilde{W},\tilde{Z},\tilde{Y};\eta_{\Y,s}^{\num}\right)\nonumber \\
= & \mathbb{P}_{1,n}\tilde{\mathbb{P}}_{1,n}{h}_{sym}\left(W,Z,Y,\tilde{W},\tilde{Z},\tilde{Y};\hat{\eta}_{\Y,s}^{\num}\right)-\mathbb{P}_{1,n}\left[{h}_{sym,1}\left(X,Y;\hat{\eta}_{\Y,s}^{\num}\right)+\bar{\hat{v}}_{\Y}^{\num}(s)\right]
\label{eq:b1_outcome}\\
 & +\left(\mathbb{P}_{1,n}-\mathbb{P}_{1}\right)\left({h}_{sym,1}\left(X,Y;\h \eta_{\Y,s}^{\num}\right)+\bar{\hat{v}}_{\Y}^{\num}(s)-h_{sym,1}\left(X,Y\right)-v_{\Y}^{\num}(s)\right)
 \label{eq:b2_outcome}\\
 & +\left(\mathbb{P}_{1,n}-\mathbb{P}_{1}\right)\left(h_{sym,1}\left(X,Y\right)+v_{\Y}^{\num}(s)\right)
 \label{eq:b3_outcome}\\
 & +\mathbb{P}_{1}\left({h}_{sym,1}\left(X,Y;\h \eta_{\Y,s}^{\num}\right)+\bar{\hat{v}}_{\Y}^{\num}(s)-h_{sym,1}\left(X,Y;\eta_{\Y,s}^{\num}\right)-v_{\Y}^{\num}(s)\right).
 \label{eq:b4_outcome}
\end{align}

We analyze each term in turn.

\uline{Term \eqref{eq:b1_outcome}}: Suppose $\mathbb{P}_{1}{h}_{sym}^{2}(W,Z,Y,\tilde{W},\tilde{Z},\tilde{Y};\hat \eta_{\Y,s}^{\num})<\infty$.
Via a straightforward extension of the proof in Theorem 12.3 in \citet{Van_der_Vaart1998-cj}, one can show that
\[
\frac{var\left(\mathbb{P}_{1,n}\tilde{\mathbb{P}}_{1,n}{h}_{sym}\left(W,Z,Y,\tilde{W},\tilde{Z},\tilde{Y}; \hat \eta_{\Y,s}^{\num}\right)\right)}{var\left(\mathbb{P}_{1,n}{h}_{sym,1}\left(W,Z,Y; \hat \eta_{\Y,s}^{\num}\right)\right)}\rightarrow_{p}1.
\]

Then by Theorem 11.2 in \citet{Van_der_Vaart1998-cj} and Slutsky's lemma, we have
\[
\mathbb{P}_{1,n}\tilde{\mathbb{P}}_{1,n}{h}_{sym}\left(W,Z,Y,\tilde{W},\tilde{Z},\tilde{Y}; \hat \eta_{\Y,s}^{\num}\right)-\mathbb{P}_{1,n}\left[{h}_{sym,1}\left(W,Z,Y; \hat \eta_{\Y,s}^{\num}\right)+\bar{\hat{v}}_{\Y}^{\num}(s)\right]=o_{p}\left(n^{-1/2}\right).
\]

\uline{Term \eqref{eq:b2_outcome}:} We perform sample splitting to estimate the nuisance
parameters and calculate the estimator for $\hat{v}_{\Y}^{\num}(s)$.
Then by Lemma 1 in \citet{Kennedy2022-to}, we have that
\[
\left(\mathbb{P}_{1,n}-\mathbb{P}_{1}\right)\left({h}_{sym,1}\left(W,Z,Y; \hat \eta_{\Y,s}^{\num}\right)+\bar{\hat{v}}_{\Y}^{\num}(s)-h_{sym,1}\left(W,Z,Y; \eta_{\Y,s}^{\num}\right)-v_{\Y}^{\num}(s)\right)=o_{p}(n^{-1/2})
\]
as long as the estimators for the nuisance parameters are consistent.

\uline{Term \eqref{eq:b3_outcome}:} This term $\left(\mathbb{P}_{1,n}-\mathbb{P}_{1}\right)\left(h_{sym,1}\left(W,Z,Y; \eta_{\Y,s}^{\num}\right)+v_{\Y}^{\num}(s)\right)=\left(\mathbb{P}_{1,n}-\mathbb{P}_{1}\right)h_{sym,1}\left(W,Z,Y; \eta_{\Y,s}^{\num}\right)$
follows an asymptotic normal distribution per CLT.

\uline{Term \eqref{eq:b4_outcome}:} We will show that this bias term is asymptotically negligible.
For notational simplicity, let $\hat \xi(W,Z_s,Z_{-s}) = \hat{\mu}_{\cdot\cdot 1}(W,Z)-\hat{\mu}_{\cdot\cdot s}(W,Z)$.
\begin{align}
 & \mathbb{P}_{1} \tilde{\mathbb{P}}_{1}\left({h}_{sym}\left(W,Z,Y,\tilde{W},\tilde{Z},\tilde{Y}; \hat \eta_{\Y,s}^{\num}\right)+\bar{\hat{v}}_{\Y}^{\num}(s)-h_{sym}\left(W,Z,Y,\tilde{W},\tilde{Z},\tilde{Y}; \eta_{\Y,s}^{\num}\right)-v_{\Y}^{\num}(s)\right)\nonumber \\
= & \mathbb{P}_{1} \tilde{\mathbb{P}}_{1}\left({h}\left(W,Z,Y,\tilde{W},\tilde{Z},\tilde{Y}; \hat \eta_{\Y,s}^{\num}\right)-h\left(W,Z,Y,\tilde{W},\tilde{Z},\tilde{Y}; \eta_{\Y,s}^{\num}\right)\right)\nonumber \\
= & \mathbb{P}_{1}\left(\hat{\mu}_{\cdot\cdot 1}(W,Z)-\hat{\mu}_{\cdot\cdot s}(W,Z)\right)^{2}-\mathbb{P}_{1}\left(\mu_{\cdot\cdot 1}(W,Z)-\mu_{\cdot\cdot s}(W,Z)\right)^{2}\nonumber \\
 & +2\mathbb{P}_{1}\left(\hat{\mu}_{\cdot\cdot 1}(W,Z)-\hat{\mu}_{\cdot\cdot s}(W,Z)\right)\left[\mu_{\cdot\cdot1}(W,Z)-\hat{\mu}_{\cdot\cdot 1}(W,Z)\right]\nonumber \\
 & -2\mathbb{P}_{1} \tilde{\mathbb{P}}_{1}\left(\hat{\mu}_{\cdot\cdot 1}(W,Z_{s},\tilde{Z}_{-s})-\hat{\mu}_{\cdot\cdot s}(W,Z_{s},\tilde{Z}_{-s})\right)\left[\ell(W,Z_{s},\tilde{Z}_{-s},Y)-\hat{\mu}_{\cdot\cdot s}(W,Z_{s},\tilde{Z}_{-s})\right]\hat{\pi}\left(\tilde{Z}_{-s},Z_{s},W,\hat{q}_{\binned}(W,Z)\right)\nonumber \\
= & \mathbb{P}_{1}\left(\hat{\mu}_{\cdot\cdot 1}(W,Z)-\hat{\mu}_{\cdot\cdot s}(W,Z)\right)^{2}-\mathbb{P}_{1}\left(\mu_{\cdot\cdot 1}(W,Z)-\mu_{\cdot\cdot s}(W,Z)\right)^{2}\nonumber \\
 & +2\mathbb{P}_{1}\left(\hat{\mu}_{\cdot\cdot 1}(W,Z)-\hat{\mu}_{\cdot\cdot s}(W,Z)\right)\left[\mu_{\cdot\cdot1}(W,Z)-\hat{\mu}_{\cdot\cdot 1}(W,Z)\right]\nonumber \\
 & -2\mathbb{P}_{1} \tilde{\mathbb{P}}_{1}
 \hat \xi(W,Z_{s},\tilde{Z}_{-s})
 \left[\mu_{\cdot\cdot s}(W,Z_{s},\tilde{Z}_{-s})-\hat{\mu}_{\cdot\cdot s}(W,Z_{s},\tilde{Z}_{-s})\right]\hat{\pi}\left(\tilde{Z}_{-s},Z_{s},W,{q}_{\binned}(W,Z)\right)\nonumber \\
& -2\mathbb{P}_{1} \tilde{\mathbb{P}}_{1}\hat \xi(W,Z_{s},\tilde{Z}_{-s})
\left[\ell(W,Z_{s},\tilde{Z}_{-s},Y)-\hat{\mu}_{\cdot\cdot s}(W,Z_{s},\tilde{Z}_{-s})\right] 
\left[\hat{\pi}\left(\tilde{Z}_{-s},Z_{s},W,\hat{q}_{\binned}(W,Z)\right)
- \hat{\pi}\left(\tilde{Z}_{-s},Z_{s},W,{q}_{\binned}(W,Z)\right)\right]
\nonumber \\
= & \mathbb{P}_{1}\left(\mu_{\cdot\cdot s}(W,Z)-\hat{\mu}_{\cdot\cdot s}(W,Z)\right)\left(\hat{\mu}_{\cdot\cdot 1}(W,Z)-\hat{\mu}_{\cdot\cdot s}(W,Z)+\mu_{\cdot\cdot 1}(W,Z)-\mu_{\cdot\cdot s}(W,Z)\right) 
\label{eq:cond_out_l1}\\
 & +\mathbb{P}_{1}\left(\hat{\mu}_{\cdot\cdot 1}(W,Z)-\mu_{\cdot\cdot 1}(W,Z)\right)\left(\mu_{\cdot\cdot 1}(W,Z)-\mu_{\cdot\cdot s}(W,Z)-\hat{\mu}_{\cdot\cdot 1}(W,Z)+\hat{\mu}_{\cdot\cdot s}(W,Z)\right)
 \label{eq:cond_out_l2}\\
 & -2\mathbb{P}_{1} \tilde{\mathbb{P}}_{1}
 \hat \xi(W,Z_{s},\tilde{Z}_{-s})
 \left[\mu_{\cdot\cdot s}(W,Z_{s},\tilde{Z}_{-s})-\hat{\mu}_{\cdot\cdot s}(W,Z_{s},\tilde{Z}_{-s})\right]\hat{\pi}\left(\tilde{Z}_{-s},Z_{s},W,{q}_{\binned}(W,Z)\right)
 \label{eq:cond_out_l3}\\
 & -2\mathbb{P}_{1} \tilde{\mathbb{P}}_{1}\hat \xi(W,Z_{s},\tilde{Z}_{-s})
\left[\ell(W,Z_{s},\tilde{Z}_{-s},Y)-\hat{\mu}_{\cdot\cdot s}(W,Z_{s},\tilde{Z}_{-s})\right] 
\left[\hat{\pi}\left(\tilde{Z}_{-s},Z_{s},W,\hat{q}_{\binned}(W,Z)\right)
- \hat{\pi}\left(\tilde{Z}_{-s},Z_{s},W,{q}_{\binned}(W,Z)\right)\right]
\label{eq:cond_out_l4}.
\end{align}
Note that \eqref{eq:cond_out_l4} is $o_p(n^{-1/2})$ under the assumed convergence rates for $\hat{q}_{\binned}$.
In addition, \eqref{eq:cond_out_l2} is $o_{p}(n^{-1/2})$, under the assumed convergence rates for $\hat\mu_{\cdot\cdot 1}$ and $\hat\mu_{\cdot\cdot s}$.

Analyzing the remaining summands \eqref{eq:cond_out_l1} + \eqref{eq:cond_out_l3}, we note that it simplifies as follows:
\begin{align*}
& \mathbb{P}_{1}\left(\mu_{\cdot\cdot s}(X)-\hat{\mu}_{\cdot\cdot s}(X)\right)\left(\hat{\mu}_{\cdot\cdot 1}(X)-\hat{\mu}_{\cdot\cdot s}(X)+\mu_{\cdot\cdot 1}(X)-\mu_{\cdot\cdot s}(X)\right)\\
& -2\mathbb{P}_{1} \tilde{\mathbb{P}}_{1}\hat\xi(W,Z_s,\tilde{Z}_{-s})\left[\mu_{\cdot\cdot s}(W,Z_s,\tilde{Z}_{-s})-\hat{\mu}_{\cdot\cdot s}(W,Z_s,\tilde{Z}_{-s})\right]\left(\hat{\pi}\left(\tilde{Z}_{-s},Z_s,W,q_{\binned}(W,Z))\right)-\pi\left(\tilde{Z}_{-s},Z_s,W,q_{\binned}(W,Z))\right)\right)\\
 & -2\mathbb{P}_{1} \tilde{\mathbb{P}}_{1}\hat\xi(W,Z_s,\tilde{Z}_{-s})\left[\mu_{\cdot\cdot s}(W,Z_s,\tilde{Z}_{-s})-\hat{\mu}_{\cdot\cdot s}(W,Z_s,\tilde{Z}_{-s})\right]\pi\left(\tilde{Z}_{-s},Z_s,W,q_{\binned}(W,Z))\right)\\
= & \mathbb{P}_{1}\left(\mu_{\cdot\cdot s}(X)-\hat{\mu}_{\cdot\cdot s}(X)\right)\left(\mu_{\cdot\cdot 1}(X)-\hat{\mu}_{\cdot\cdot 1}(X)-\mu_{\cdot\cdot s}(X)+\hat{\mu}_{\cdot\cdot s}(X)\right)\\
& -2\mathbb{P}_{1} \tilde{\mathbb{P}}_{1}\hat\xi(W,Z_s,\tilde{Z}_{-s})\left[\mu_{\cdot\cdot s}(W,Z_s,\tilde{Z}_{-s})-\hat{\mu}_{\cdot\cdot s}(W,Z_s,\tilde{Z}_{-s})\right]\left(\hat{\pi}\left(\tilde{Z}_{-s},Z_s,W,q_{\binned}(W,Z))\right)-\pi\left(\tilde{Z}_{-s},Z_s,W,q_{\binned}(W,Z))\right)\right),
\end{align*}
which is $o_{p}(n^{-1/2})$, under the assumed convergence rates for $\hat{\mu}_{\cdot\cdot s}$, $\hat{\mu}_{\cdot\cdot 1}$, and $\hat{\pi}$.
\end{proof}

\begin{condition}[Convergence conditions for $\h v_{\Y}^{\denom}$]
\label{cond:denominator_cond_outcome}
    Suppose the following holds
    \begin{itemize}
        \item $\mathbb{P}_1 (\mu_{\cdot \cdot 1} - \mu_{\cdot \cdot 0} - (\hat{\mu}_{\cdot \cdot 1} - \h \mu_{\cdot \cdot 0}))^2 = o_p(n^{-1/2})$
        \item $\mathbb{P}_0 (\mu_{\cdot \cdot 0} - \hat{\mu}_{\cdot \cdot 0})(\pi_{110} - \h \pi_{110}) = o_p(n^{-1/2})$
        \item $\mathbb{P}_0 (\mu_{\cdot \cdot 1} - \mu_{\cdot \cdot 0})^2$ is bounded
        \item (Positivity) $p(w,z|d=0)>0$ almost everywhere, such that the density ratios $\pi_{110}(w,z)$ are well-defined and bounded.
    \end{itemize}
\end{condition}

Let the nuisance parameters in the one-step estimator $v_{\Y}^{\denom}$ be denoted by $\eta_{\Y}^{\denom}=(\mu_{\cdot\cdot 0},\mu_{\cdot\cdot 1},\pi_{110})$ and the set of estimated nuisances by $\h \eta_{\Y}^{\denom}$.

\begin{lemma}
    Assuming Condition~\ref{cond:denominator_cond_outcome} holds, then $\h v_{\Y}^{\denom}$ is an asymptotically linear estimator for $v_{\Y}^{\denom}$, i.e.
    \begin{align*}
        \h v_{\Y}^{\denom} - v_{\Y}^{\denom} = \mathbb{P}_n \psi_{\Y}^{\denom}(D,W,Z,Y;\eta_{\Y}^{\denom}) + o_p(n^{-1/2})
    \end{align*}
    with influence function
    \begin{align}
        \psi_{\Y}^{\denom}(D,W,Z,Y; \eta_{\Y}^{\denom}) = & 
        \left(\mu_{\cdot\cdot 1}(W,Z) - \mu_{\cdot\cdot 0}(W,Z)\right)^2 \frac{\mathbbm{1}\{D=1\}}{p(D=1)} \\
        &\ + 2 \left(\mu_{\cdot\cdot 1}(W,Z) - \mu_{\cdot\cdot 0}(W,Z)\right) (\ell - \mu_{\cdot\cdot 1}(W,Z)) \frac{\mathbbm{1}\{D=1\}}{p(D=1)} \\
        &\ - 2 \left(\mu_{\cdot\cdot 1}(W,Z) - \mu_{\cdot\cdot 0}(W,Z)\right) (\ell - \mu_{\cdot\cdot 0}(W,Z)) \pi_{110}(W,Z) \frac{\mathbbm{1}\{D=0\}}{p(D=0)} \\
        &\ - v_{\Y}^{\denom}\label{eq:influence_cond_outcome_denom}.
    \end{align}
    \label{lemma:denom_cond_outcome}
\end{lemma}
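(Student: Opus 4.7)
The plan is to follow the standard one-step correction analysis outlined at the beginning of Section~\ref{sec:proofs}: decompose $\hat{v}_{\Y}^{\denom} - v_{\Y}^{\denom}$ into three pieces, namely (i) the empirical mean of the influence function $\psi_{\Y}^{\denom}$ at the \emph{true} nuisance parameters, (ii) an empirical process term involving $\psi_{\Y}^{\denom}(\cdot;\hat\eta_{\Y}^{\denom}) - \psi_{\Y}^{\denom}(\cdot;\eta_{\Y}^{\denom})$, and (iii) a second-order bias term $\mathbb{P}(\psi_{\Y}^{\denom}(\cdot;\hat\eta_{\Y}^{\denom}) - \psi_{\Y}^{\denom}(\cdot;\eta_{\Y}^{\denom})) + v_{\Y}^{\denom} - \hat v_{\Y}^{\denom}$. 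Term (i) is treated by the CLT using boundedness of $\pi_{110}$ and $\mathbb{P}_0(\mu_{\cdot\cdot 1} - \mu_{\cdot\cdot 0})^2 < \infty$ so that the influence function has finite variance. Term (ii) is $o_p(n^{-1/2})$ by \citep[][Lemma 1]{Kennedy2022-to} using sample splitting and the assumed consistency of $\hat{\mu}_{\cdot\cdot 0}, \hat{\mu}_{\cdot\cdot 1}, \hat\pi_{110}$; this mirrors the argument given in Lemma~\ref{lemma:numerator_cond_cov} for \eqref{eq:b2_cond_cov_num}.

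The main work is to show that the bias remainder in (iii) is $o_p(n^{-1/2})$. Integrating over $Y$ and using iterated expectations, the remainder reduces to
\begin{align*}
R = & \ \mathbb{P}_1\left(\hat\mu_{\cdot\cdot 1} - \hat\mu_{\cdot\cdot 0}\right)^2 - \mathbb{P}_1\left(\mu_{\cdot\cdot 1} - \mu_{\cdot\cdot 0}\right)^2 \\
& + 2\mathbb{P}_1 (\hat\mu_{\cdot\cdot 1} - \hat\mu_{\cdot\cdot 0})(\mu_{\cdot\cdot 1} - \hat\mu_{\cdot\cdot 1})\\
& - 2\mathbb{P}_0 (\hat\mu_{\cdot\cdot 1} - \hat\mu_{\cdot\cdot 0})(\mu_{\cdot\cdot 0} - \hat\mu_{\cdot\cdot 0})\hat\pi_{110}.
\end{align*}
The key manipulation is a change of measure on the last line: writing $\hat\pi_{110} = \pi_{110} + (\hat\pi_{110} - \pi_{110})$ and using $\mathbb{P}_0 f \pi_{110} = \mathbb{P}_1 f$ converts that term into $-2\mathbb{P}_1(\hat\mu_{\cdot\cdot 1} - \hat\mu_{\cdot\cdot 0})(\mu_{\cdot\cdot 0} - \hat\mu_{\cdot\cdot 0})$ plus a Cauchy--Schwarz-controlled cross term $\mathbb{P}_0(\mu_{\cdot\cdot 0} - \hat\mu_{\cdot\cdot 0})(\hat\pi_{110} - \pi_{110})$, which is $o_p(n^{-1/2})$ by the assumed product-rate condition.

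With this reduction, I will apply the algebraic identity: letting $a = \hat\mu_{\cdot\cdot 1} - \hat\mu_{\cdot\cdot 0}$ and $b = \mu_{\cdot\cdot 1} - \mu_{\cdot\cdot 0}$, the remaining three terms collapse to $\mathbb{P}_1\left[a^2 - b^2 + 2a(b-a)\right] = -\mathbb{P}_1 (a-b)^2$, i.e.\ $-\mathbb{P}_1\left((\hat\mu_{\cdot\cdot 1} - \hat\mu_{\cdot\cdot 0}) - (\mu_{\cdot\cdot 1} - \mu_{\cdot\cdot 0})\right)^2$. By the first bullet of Condition~\ref{cond:denominator_cond_outcome}, this is $o_p(n^{-1/2})$, so $R = o_p(n^{-1/2})$ overall. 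Combining the three terms gives the claimed expansion $\hat v_{\Y}^{\denom} - v_{\Y}^{\denom} = \mathbb{P}_n \psi_{\Y}^{\denom}(D,W,Z,Y;\eta_{\Y}^{\denom}) + o_p(n^{-1/2})$.

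The only subtlety I anticipate is bookkeeping the change-of-measure step cleanly: the cross term from the expansion of $\hat\pi_{110}$ requires the product-rate condition, while the leading piece must line up exactly so that the algebraic identity $a^2 - b^2 + 2a(b-a) = -(a-b)^2$ can be applied. Everything else is a routine application of the general recipe already used for the aggregate terms and Lemma~\ref{lemma:numerator_cond_cov}.
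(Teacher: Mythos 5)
Your proposal is correct and follows essentially the same route as the paper: the identical three-term decomposition, the CLT and sample-splitting/empirical-process arguments for the first two pieces, and the change of measure $\mathbb{P}_0 f\,\pi_{110}=\mathbb{P}_1 f$ plus the product-rate condition to isolate the density-ratio cross term. If anything, your exact collapse of the remaining terms to $-\mathbb{P}_1\bigl((\hat\mu_{\cdot\cdot1}-\hat\mu_{\cdot\cdot0})-(\mu_{\cdot\cdot1}-\mu_{\cdot\cdot0})\bigr)^2$ via $a^2-b^2+2a(b-a)=-(a-b)^2$ is slightly tighter than the paper's simplification, which silently discards cross terms such as $2\mathbb{P}_1(\mu_{\cdot\cdot0}-\hat\mu_{\cdot\cdot0})(b-a)$ that would otherwise need a separate Cauchy--Schwarz bound.
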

\begin{proof}
Consider the following decomposition of bias in the one-step corrected estimate
    \begin{align}
        &\h v_{\Y}^{\denom} - v_{\Y}^{\denom}\nonumber\\
        =& (\mathbb{P}_n - \mathbb{P})\psi_{\Y}^{\denom}(D,W,Z,Y;\eta_{\Y}^{\denom})\label{eq:b1_cond_outcome_denom}\\ 
        &+ (\mathbb{P}_n - \mathbb{P})(\psi_{\Y}^{\denom}(D,W,Z,Y;\h \eta_{\Y}^{\denom}) - \psi_{\Y}^{\denom}(D,W,Z,Y;\eta_{\Y}^{\denom}))\label{eq:b2_cond_outcome_denom}\\
        &+ \mathbb{P} (\psi_{\Y}^{\denom}(D,W,Z,Y;\h \eta_{\Y}^{\denom}) - \psi_{\Y}^{\denom}(D,W,Z,Y;\eta_{\Y}^{\denom}))\label{eq:b3_cond_outcome_denom}
    \end{align}
We observe that \eqref{eq:b1_cond_outcome_denom} converges to a normal distribution per CLT assuming that the variance of $\psi_{\Y}^{\denom}$ is finite. The empirical process term \eqref{eq:b2_cond_outcome_denom} is asymptotically negligible since the nuisance parameters $\eta_{\Y}^{\denom}$ are evaluated on an separate evaluation data split from the training data used for estimation. In addition assuming that the estimators for the nuisance parameters are consistent, \citet[][Lemma 1]{Kennedy2022-to} states that
$$(\mathbb{P}_n - \mathbb{P})(\psi_{\Y}^{\denom}(D,W,Z,Y;\h \eta_{\Y}^{\denom}) - \psi_{\Y}^{\denom}(D,W,Z,Y;\eta_{\Y}^{\denom})) = o_p(n^{-1/2}).$$

We now show that the remainder term \eqref{eq:b3_cond_outcome_denom} is also asymptotically negligible. Substituting the influence function and integrating with respect to $Y$, \eqref{eq:b3_cond_outcome_denom} becomes
\begin{align}
    \eqref{eq:b3_cond_outcome_denom} = & \mathbb{P}\left(\hat\mu_{\cdot \cdot 1}(W,Z) - \hat\mu_{\cdot \cdot 0}(W,Z) - (\mu_{\cdot \cdot 1}(W,Z) - \mu_{\cdot \cdot 0}(W,Z))\right)^2\frac{\mathbbm{1}(D=1)}{p(D=1)}\\
    & + 2 \mathbb{P}\left(\hat\mu_{\cdot \cdot 1}(W,Z) - \hat\mu_{\cdot \cdot 0}(W,Z) - (\mu_{\cdot \cdot 1}(W,Z) - \mu_{\cdot \cdot 0}(W,Z))\right)(\mu_{\cdot \cdot 1}(W,Z) - \mu_{\cdot \cdot 0}(W,Z))\frac{\mathbbm{1}(D=1)}{p(D=1)}\\
    &\ + 2 \mathbb{P} \left(\hat\mu_{\cdot \cdot 1}(W,Z) - \hat\mu_{\cdot \cdot 0}(W,Z)\right) (\mu_{\cdot \cdot 1}(W,Z) - \hat\mu_{\cdot \cdot 1}(W,Z)) \frac{\mathbbm{1}\{D=1\}}{p(D=1)} \\
    &\ - 2 \mathbb{P} \left(\hat\mu_{\cdot \cdot 1}(W,Z) - \hat\mu_{\cdot \cdot 0}(W,Z)\right) (\mu_{\cdot \cdot 0}(W,Z) - \hat\mu_{\cdot \cdot 0}(W,Z)) \hat\pi_{110}(W,Z) \frac{\mathbbm{1}\{D=0\}}{p(D=0)}\\
    = & \mathbb{P}\left(\hat\mu_{\cdot \cdot 1}(W,Z) - \hat\mu_{\cdot \cdot 0}(W,Z) - (\mu_{\cdot \cdot 1}(W,Z) - \mu_{\cdot \cdot 0}(W,Z))\right)^2\frac{\mathbbm{1}(D=1)}{p(D=1)}\\
    & + 2 \mathbb{P}(\mu_{\cdot \cdot 0}(W,Z) - \hat\mu_{\cdot \cdot 0}(W,Z))(\mu_{\cdot \cdot 1}(W,Z) - \mu_{\cdot \cdot 0}(W,Z))\frac{\mathbbm{1}(D=1)}{p(D=1)}\\
    &\ - 2 \mathbb{P} \left(\hat\mu_{\cdot \cdot 1}(W,Z) - \hat\mu_{\cdot \cdot 0}(W,Z)\right) (\mu_{\cdot \cdot 0}(W,Z) - \hat\mu_{\cdot \cdot 0}(W,Z)) (\hat\pi_{110}(W,Z) - \pi_{110}(W,Z)) \frac{\mathbbm{1}\{D=0\}}{p(D=1)}\\
    &\ - 2 \mathbb{P} \left(\hat\mu_{\cdot \cdot 1}(W,Z) - \hat\mu_{\cdot \cdot 0}(W,Z)\right) (\mu_{\cdot \cdot 0}(W,Z) - \hat\mu_{\cdot \cdot 0}(W,Z)) \pi_{110}(W,Z) \frac{\mathbbm{1}\{D=0\}}{p(D=0)}\\
    = & \mathbb{P}\left(\hat\mu_{\cdot \cdot 1}(W,Z) - \hat\mu_{\cdot \cdot 0}(W,Z) - (\mu_{\cdot \cdot 1}(W,Z) - \mu_{\cdot \cdot 0}(W,Z))\right)^2\frac{\mathbbm{1}(D=1)}{p(D=1)}\\
    &\ - 2 \mathbb{P} \left(\hat\mu_{\cdot \cdot 1}(W,Z) - \hat\mu_{\cdot \cdot 0}(W,Z)\right) (\mu_{\cdot \cdot 0}(W,Z) - \hat\mu_{\cdot \cdot 0}(W,Z)) (\hat\pi_{110}(W,Z) - \pi_{110}(W,Z)) \frac{\mathbbm{1}\{D=0\}}{p(D=0)}
\end{align}
Thus the remainder term is $o_p(n^{-1/2})$ if Condition~\ref{cond:denominator_cond_outcome} holds.
\end{proof}

\begin{proof}[Proof for Theorem~\ref{theorem:conditional_outcome_detailed}]
    Combining Lemmas~\ref{lemma:numerator_cond_outcome}, \ref{lemma:denom_cond_outcome}, and the Delta method \citep[][Theorem 3.1]{Van_der_Vaart1998-cj}, the estimator $\h v_{\Y,\binned}(s)=\h v_{\Y,\binned}^{\num}(s)/\h v_{\Y}^{\denom}$ is asymptotically linear
    \begin{align*}
        \frac{\hat v_{\Y,\binned}^{\num}(s)}{\hat{v}_{\Y}^{\denom}}
    - \frac{v_{\Y,\binned}^{\num}(s)}{v_{\Y}^{\denom}}
    = \mathbbm{P}_n \psi_{\Y,\binned,s}(D,W,Z,Y;\eta_{\Y,s}^\num,\eta_{\Y}^{\denom}) + o_p(n^{-1/2}),
    \end{align*}
    with influence function
    \begin{align}
    \label{eq:influence_cond_outcome}
    \psi_{\Y,\binned,s}(D,W,Z,Y;\eta_{\Y,s}^\num,\eta_{\Y}^{\denom}) =\frac{1}{v_{\Y}^{\denom}} \psi_{\Y,s}^{\num}(D,W,Z,Y;\eta_{\Y,s}^\num) - \frac{v_{\Y,\binned}^{\num}(s)}{(v_{\Y}^{\denom})^2} \psi_{\Y}^{\denom}(D,W,Z,Y;\eta_{\Y}^{\denom}),
    \end{align}
    where $\psi_{\Y,s}^{\num}$ and $\psi_{\Y}^{\denom}$ are defined in \eqref{eq:influence_cond_outcome_num} and \eqref{eq:influence_cond_outcome_denom}.

    Accordingly, the estimator follows a normal distribution asymptotically,
    \begin{align}
        \sqrt{n} \left(
        \h v_{\Y}(s) - v_{\Y}(s)
        \right) \rightarrow_d
        N(0, \text{var}(\psi_{\Y,\binned,s}(D,W,Z,Y;\eta_{\Y}^{\num},\eta_{\Y}^{\denom}))
    \end{align}
\end{proof}

\section{Implementation details}
\label{sec:implement}
Here we describe how the nuisance parameters can be estimated in each of the decompositions.
In general, density ratio models can be estimated via a standard reduction to a classification problem where a probabilistic classifier is trained to discriminate between source and target domains \citep{Sugiyama2007-pk}.

\textbf{Note on computation time}. Shapley value computation can be parallelized over the subsets. For high-dimensional tabular data, grouping together variables can further reduce computation time (and increase interpretability).

\subsection{Aggregate decompositions}
\label{sec:implement_agg}
\textbf{Density ratio models.} Using direct importance estimation \citep{Sugiyama2007-pk}, density ratio models $\pi_{100}(W)$ and $\pi_{110}(W,Z)$ can be estimated by fitting classifiers on the combined source and target data to predict $D=0$ or $1$ from features $W$ and $(W,Z)$, respectively.

\textbf{Outcome models.} The outcome models $\mu_{\cdot00}(W)$ and $\mu_{\cdot\cdot0}(W,Z)$ can be fit in a number of ways. One option is to estimate the conditional distribution of the outcome (i.e. $p_0(Y|W)$ or $p_0(Y|W,Z)$) using binary classifiers, from which one can obtain an estimate of the conditional expectation of the loss.
Alternatively, one can estimate the conditional expectations of the loss directly by fitting regression models.

\subsection{Detailed decomposition for $s$-partial outcome shift}
\label{sec:implement_detailed_outcome}

\textbf{Density ratio models.} The density ratio $\pi(W,Z_s,Z_{-s},Q)=p_{1}(Z_{-s}|W,Z_s,q(W,Z)=Q)/p_{1}(Z_{-s})$ in \eqref{eq:value_num_outcome} can be estimated as follows.
Create a second (``phantom'') dataset of the target domain in which $Z_{-s}$ is independent of $Z_s$ by permuting the original $Z_{-s}$ in the target domain.
Compute $q_\binned$ for all observations in the original dataset and the permuted dataset.
Concatenate the original dataset from the target domain with the permuted dataset.
Train a classifier to predict if an observation is from the original versus the permuted dataset.

\textbf{Outcome models.} The outcome models $\mu_{\cdot \cdot 1}$ and $\mu_{\cdot \cdot s}(W,Z)$ can be similarly fit by estimating the conditional distribution $p_0(Y|W,Z)$ and $p_s(Y|W,Z,q_{\binned}(W,Z))$ on the target domain, and then taking expectation of the loss.

\textbf{Computing U-statistics.} Calculating the double average $\mathbb{P}_{1,n}^{\Tt}\tilde{\mathbb{P}}_{1,n}^{\Tt}$ in the estimator requires evaluating all $n^2$ pairs of data points in target domain.
This can be computationally expensive, so a good approximation is to subsample the inner average. We take 2000 subsamples. We did not see large changes in the bias of the estimates compared to calculating the exact U-statistics.

\subsection{Detailed decomposition for $s$-partial conditional outcome shift}
\label{sec:implement_detailed_cov}
\textbf{Density ratio models.} The ratio $\pi_{1s0}(W,Z_s)=p_1(W,Z_s)/p_0(W,Z_s)$ can be similarly fit using direct importance estimation.

\textbf{Outcome models.} We require the following models.
\begin{itemize}
    \item $\mu_{\cdot 0_{-s} 0}(z_s,w)=\mathbb{E}_{\cdot 00}[\ell|z_{s},w]$, defined in \eqref{eq:mu_0ms0_cond_cov}, can be estimated by regressing loss against $w,z_s$ on the source domain.
    \item $\mu_{\cdot 1 0}(w)=\mathbb{E}_{\cdot 10}[\ell|w]$, defined in \eqref{eq:mu_dot10_cond_cov}, can be estimated by regressing $\mu_{\cdot \cdot 0}(w,z)$ against $w$ in the target domain.
    \item $\mu_{\cdot s 0}(w)=\mathbb{E}_{\cdot s0}[\ell|z_{s},w]$, defined in \eqref{eq:mu_dots0_cond_cov}, can be estimated by regressing $\mu_{\cdot 0_{-s} 0}(z_s,w)$ against $w$ in the target domain.
\end{itemize}

For all models, we use cross-validation to select among model types and hyperparameters. Model selection is important so that the convergence rate conditions for the asymptotic normality results are met.

\section{Simulation details}
\label{sec:simulation_details}
\textbf{Data generation}: We generate synthetic data under two settings. For the coverage checks in Section~\ref{sec:coverage}, all features are sampled independently from a multivariate normal distribution. The mean of the $(W,Z)$ in the source and target domains are $(0,2,0.7,3)$ and $(0,0,0,0)$, respectively.
The outcome in the source and target domains are simulated from a logistic regression model with coefficients $(0.3,1,0.5,1)$ and $(0.3,0.1,0.5,1.4)$. 

In the second setting for baseline comparisons in Figure~\ref{fig:comparators}b, each feature in $W\in\mathbb{R}$ and $Z\in\mathbb{R}^5$ is sampled independently from the rest from a uniform distribution over $[-1,1)$. The binary outcome $Y$ is sampled from a logistic regression model with coefficients $(0.2,0.4,2,0.25,0.1,0.1)$ in source and $(0.2,-0.4,0.8,0.1,0.1,0.1)$ in target.

\textbf{Sample-splitting}: We fit all models on 80\% of the data points from both source and target datasets which is the $\Tr$ partition, and keep the remaining 20\% for computing the estimators which is the $\Tt$ partition.

\textbf{Model types}: We use \texttt{scikit-learn} implementations for all models \citep{pedregosa2011scikit}. We use 3-fold cross validation to select models. For density models, we fit random forest classifiers and logistic regression models with polynomial features of degree 3. Depending on whether the target outcome in outcome models is binary or real-valued, we fit random forest classifiers or regressors, and logistic regression or linear regression models with ridge penalty. Specific hyperparameter ranges for the grid search will be provided in the code, to be made publicly available.

\textbf{Computing time and resources}: Computation for the VI estimates can be quite fast, as Shapley value computation can be parallelized over the subsets and
the number of unique variable subsets sampled in the Shapley value approximation is often quite small. For instance, for the ACS Public Coverage case study with 34 features, the unique subsets is $131$ even when the number of sampled subsets is $3000$, and it takes around 160 seconds to estimate the value of a single variable subset. All experiments are run on a 2.60 GHz processor with 8 CPU cores.

\begin{figure}
    \centering
    
    \includegraphics[width=0.5\textwidth]{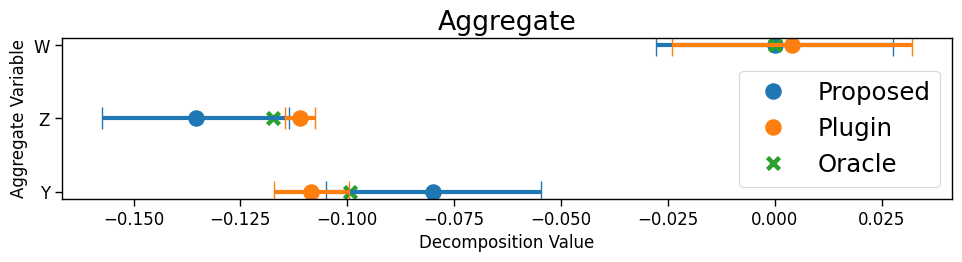}
    \includegraphics[width=0.5\textwidth]{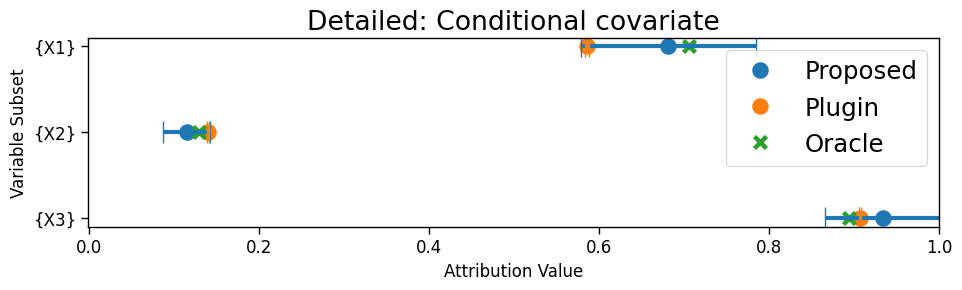}
    
    \vspace{-0.3cm}
    \includegraphics[width=0.5\textwidth]{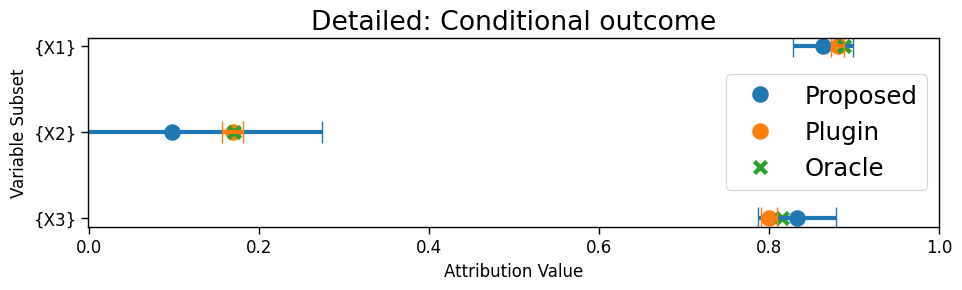}
    \caption{Sample estimates and CIs for simulation from Section~\ref{sec:coverage}. \texttt{Proposed} is debiased ML estimator for HDPD.}
    \label{fig:simulation_example}
\end{figure}

\section{Data analysis details}
\label{sec:data_details}

\textbf{Synthetic.} We describe accuracy of the ML algorithm after it is retrained with the top $k$ features and predictions from the original model.

\textbf{Hospital readmission.} Using data from the electronic health records of a large safety-net hospital in the US, we analyzed the transferability of performance measures of a Gradient Boosted Tree (GBT) trained to predict 30-day readmission risk for the general patient population (source) but applied to patients diagnosed with heart failure (target).
Each of the source and target datasets have 3750 observations for analyzing the performance gap.
The GBT is trained on a held-out sample of 18,873 points from the general population.
Features include  4 demographic variables ($W$) and 16 diagnosis codes ($Z$).
While training, we reweigh samples by class weights to address class imbalance.

\textbf{ACS Public Coverage.} We extract data from the American Community Survey (ACS) to predict whether a person has public health insurance. 
The data only contains persons of age less than 65 and having an income of less than \$30,000.
We analyze a neural network (MLP) trained on data from Nebraska (source) to data from Louisiana (target) given 3000 and 6000 observations from the source and target domains, respectively.
Another 3300 from source for training the model. Figure~\ref{fig:casestudy-acs-full} shows the detailed decomposition of conditional outcome shift for the dataset.

\begin{figure}
    \centering
    \includegraphics[scale=0.4]{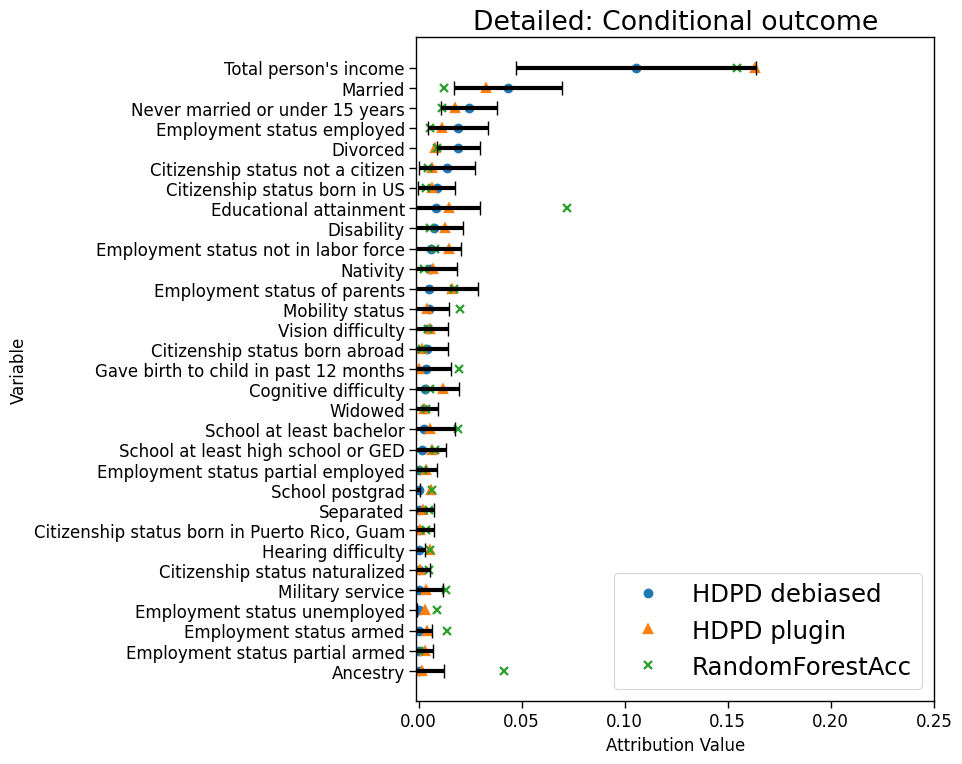}
    \caption{
    Detailed decompositions for the performance gap of a model predicting insurance coverage prediction across two US states (NE $\rightarrow$ LA). Plot shows values for the full set of 31 covariates.
    }
    \label{fig:casestudy-acs-full}
\end{figure}

\begin{table}
    \centering
    \begin{tabular}{rrrr}
\toprule
k & Diff AUC-k & Lower CI & Upper CI \\
\midrule
1 & 0.000 & 0.000 & 0.000 \\
2 & \textbf{0.006} & 0.001 & 0.010 \\
3 & -0.002 & -0.007 & 0.002 \\
4 & 0.004 & -0.002 & 0.008 \\
5 & -0.001 & -0.006 & 0.003 \\
6 & -0.002 & -0.008 & 0.003 \\
7 & \textbf{0.007} & 0.002 & 0.011 \\
8 & \textbf{0.006} & 0.001 & 0.010 \\
\bottomrule
\end{tabular}
    \caption{Difference in AUCs between the revised insurance prediction model with respect to the top $k$ variables identified by the proposed versus RandomForestAcc procedures (Diff AUC-k = Proposed $-$ RandomForestAcc). 95\% CIs are shown.}
    \label{tab:acs_auc}
\end{table}

\end{document}